\documentclass{article}

\usepackage[preprint]{neurips_2026}

\usepackage[utf8]{inputenc} 
\usepackage[T1]{fontenc}    
\usepackage{hyperref}       
\usepackage{url}            
\usepackage{booktabs}       
\usepackage{amsfonts}       
\usepackage{nicefrac}       
\usepackage{microtype}      
\usepackage{xcolor}         
\usepackage{multirow}
\usepackage{enumitem}
\usepackage{microtype}
\usepackage{graphicx}
\usepackage{subcaption}
\usepackage{booktabs} 

\usepackage{hyperref}

\usepackage{amsmath}
\usepackage{amssymb}
\usepackage{mathtools}
\usepackage{amsthm}

\usepackage[capitalize,noabbrev]{cleveref}

\theoremstyle{plain}
\newtheorem{theorem}{Theorem}[section]
\newtheorem{proposition}[theorem]{Proposition}
\newtheorem{lemma}[theorem]{Lemma}
\newtheorem{corollary}[theorem]{Corollary}
\theoremstyle{definition}

\newtheorem{assumption}[theorem]{Assumption}
\theoremstyle{remark}
\newtheorem{remark}[theorem]{Remark}

\usepackage[disable,textsize=tiny]{todonotes}

\title{TopoGeoScore: A Self-Supervised Source-Only Geometric Framework for OOD Checkpoint Selection}

\author{%
	Farid Hazratian\thanks{Equal contribution.} \\
	University of Tehran \\
	Tehran, Iran \\
	\texttt{farid.hazratian@ut.ac.ir} \\
	\And
	Ali Zia\footnotemark[1] \\
	La Trobe University \\
	Melbourne, Australia \\
	\texttt{a.zia@latrobe.edu.au} \\
	\AND
	Hien Duy Nguyen\footnotemark[1] \\
	La Trobe University, Australia \quad Kyushu University, Japan \\
	\texttt{H.Nguyen5@latrobe.edu.au}
}
\begin{document}

\maketitle

\begin{abstract}

Out-of-distribution (OOD) robustness is difficult to diagnose when target-domain labels are unavailable. We consider a more restrictive source-only variant of unsupervised accuracy estimation: selecting robust checkpoints using only source-domain representations, with no target samples or target labels. We propose \textbf{TopoGeoScore}, a source-only geometric scorer for label-free OOD checkpoint selection. Given a trained checkpoint, we construct class-conditional mutual $k$-nearest-neighbour graphs from source embeddings and extract three interpretable signals: a torsion-inspired reduced Laplacian log-determinant for global class-manifold complexity, Ollivier--Ricci curvature for local neighbourhood regularity, and higher-order topological summaries for fragmented connectivity, loops, and global--local inconsistency. Instead of fixing their weights by hand, TopoGeoScore learns a non-negative linear score through a self-supervised objective that enforces invariance under approximately geometry-preserving embedding views and separation from structure-breaking views. The score remains interpretable and uses no target-domain samples or labels. Results across CIFAR-based corruption and distribution-shift benchmarks, ImageNet-C, MNLI$\to$HANS transfer, and OGBN-Arxiv suggest that source representations contain measurable global--local--topological evidence of robustness, supporting practical checkpoint selection before deployment under distribution shift.

\end{abstract}


\section{Introduction}
\label{sec:intro}
Deep networks can achieve high in-distribution (ID) validation accuracy while remaining unreliable under distribution shift. This creates a practical model-selection problem, i.e. among several checkpoints, architectures, or training runs with similar ID performance, which one should be deployed when the target distribution is shifted, and target labels are unavailable? Prior work on domain generalisation and robust optimisation has shown that models with comparable ID accuracy can differ substantially in out-of-distribution (OOD) performance, depending on training objectives, optimisation dynamics, and checkpoint choice \cite{arjovsky2019irm,rame2022fishr,cha2021swad}. In many deployment settings, however, neither target labels nor target samples are available at selection time. Robust checkpoint selection is therefore often reduced to heuristics based on ID validation accuracy, training epoch, or manually chosen proxy scores.

A closely related line of work studies unsupervised accuracy estimation under distribution shift, where the goal is to predict target-domain accuracy without target labels. Existing methods commonly rely on unlabeled target samples through confidence thresholds, projection norms, gradient statistics, matrix-norm logit measures, or confidence/dispersity signals \cite{garg2022leveraging,yu2022predicting,deng2024leveraging,xie2024mano,deng2025confidence,deng2021labels}. These methods are valuable when target inputs are available, but they do not address a stricter source-only regime that is selecting a robust checkpoint before observing the shifted distribution at all. We study this more restrictive setting. Given only a checkpoint family and labelled source validation data, our goal is to rank checkpoints by expected OOD robustness without using target samples or target labels.

This source-only setting raises a natural question, i.e. what signal, if any, remains in the source representation that can indicate future robustness under shift? A growing body of evidence suggests that representation geometry and topology are related to generalisation and robustness \cite{rieck2018neural,guss2018characterizing,wang2020understanding,barannikov2022representation}. Standard post-hoc diagnostics, however, often reduce representations to low-order or global summaries, such as feature norms, covariance spectra, similarity measures such as CKA, or sharpness-related quantities \cite{kornblith2019similarity,jiang2019fantastic,foret2020sharpness}. These summaries are useful, but they do not explicitly model class-conditional neighbourhood structure, global connectivity, or higher-order topological defects. As a result, they can miss cases where two checkpoints look similar under coarse statistics but differ in whether their class manifolds are coherent, locally regular, or fragmented.

We argue that source-only robustness diagnosis requires a more structured view of representation geometry. Instead of treating an embedding cloud as an unstructured set of vectors, we lift source embeddings into class-conditional mutual $k$-nearest-neighbour graphs. This graph view exposes three complementary aspects of representation quality. First, a torsion-inspired reduced Laplacian log-determinant summarises global class-manifold complexity, with lower values indicating more coherent and less redundant connectivity. This connects to classical spectral graph theory and the Matrix--Tree theorem \cite{chung1997spectral,vonluxburg2007tutorial,belkin2003laplacian,raysinger1971torsion,lyons2005determinantal}. Second, Ollivier--Ricci curvature measures local neighbourhood regularity by quantifying whether nearby random-walk distributions contract or expand along graph edges \cite{ollivier2007ricci,nguyen2023revisiting}. Third, higher-order topology, through Hodge $L_1$ descriptors and persistent-homology summaries, captures loops, fragmentation, and circulation that cannot be represented by pairwise statistics alone \cite{schaub2021signal,roddenberry2022signal,bubenik2015persistence,adams2017persistence,zia2024topological}. In our formulation, topology is not simply another descriptor. It acts as a mediator that explains when global complexity and local regularity disagree.

Based on this view, we propose \textsc{TopoGeoScore}, a source-only geometric scorer for label-free OOD checkpoint selection. For each checkpoint, we construct class-conditional mutual $k$-NN graphs from source embeddings and compute a compact feature vector containing normalised global complexity, signed local regularity, a topology-defect score, a global--local disagreement term, and a topology-modulated disagreement interaction. A non-negative linear score then ranks checkpoints, where a lower score indicates a more robust representation. The non-negativity constraint keeps the score interpretable: increasing global complexity, topological defect, or global--local disagreement cannot improve the robustness score.

A fixed weighted combination of these signals is possible, but it is brittle because the useful weighting can vary across checkpoint families, architectures, and datasets. We therefore learn the score weights using a source-only self-supervised objective. The key inductive bias is simple: a meaningful robustness score should remain stable under geometry-preserving transformations of the source embedding cloud, and it should worsen under transformations that break class-conditional structure. We instantiate this using positive views such as orthogonal rotation, mild Gaussian noise, bootstrap subsampling, feature dropout, and PCA-preserving projection, together with negative views such as label shuffle, feature shuffle, and degree-preserving graph rewiring. The resulting objective learns the non-negative score weights through invariance and separation losses, without using target-domain samples or target-domain labels. We use ``self-supervised'' in this restricted sense: the score is learned from source-conditioned graph views, while source labels are used only to construct class-conditional graphs. Figure~\ref{fig:pipeline} shows the architecture, and our contributions are as follows:

\begin{itemize}
    \item \textbf{A source-only formulation of OOD checkpoint selection:} We study the stricter setting where robust checkpoints must be selected using only labelled source validation data, with no target samples and no target labels.

    \item \textbf{A global--local--topological view of representation robustness:} We formulate source representation diagnosis through class-conditional mutual $k$-NN graphs, where torsion-inspired spectral complexity captures global structure, Ollivier--Ricci curvature captures local regularity, and higher-order topology mediates global--local disagreement.

    \item \textbf{A self-supervised geometric scorer:} We introduce \textsc{TopoGeoScore}, an interpretable non-negative linear score trained from source-conditioned graph views by enforcing invariance to geometry-preserving transformations and separation from structure-breaking transformations.

\end{itemize}

Across CIFAR-based distribution shifts, ImageNet-C architecture variants, MNLI$\rightarrow$HANS transfer, and OGBN-Arxiv temporal shift, we show that the proposed geometric score tracks OOD robustness and supports near-oracle checkpoint selection in the main evaluated settings.
Control experiments further show that the score is stable under geometry-preserving views and degrades under structure-breaking views, supporting the claim that it captures task-aligned source representation structure rather than superficial statistics.

\begin{figure*}[t]
\centering
\includegraphics[width=1.0\linewidth]{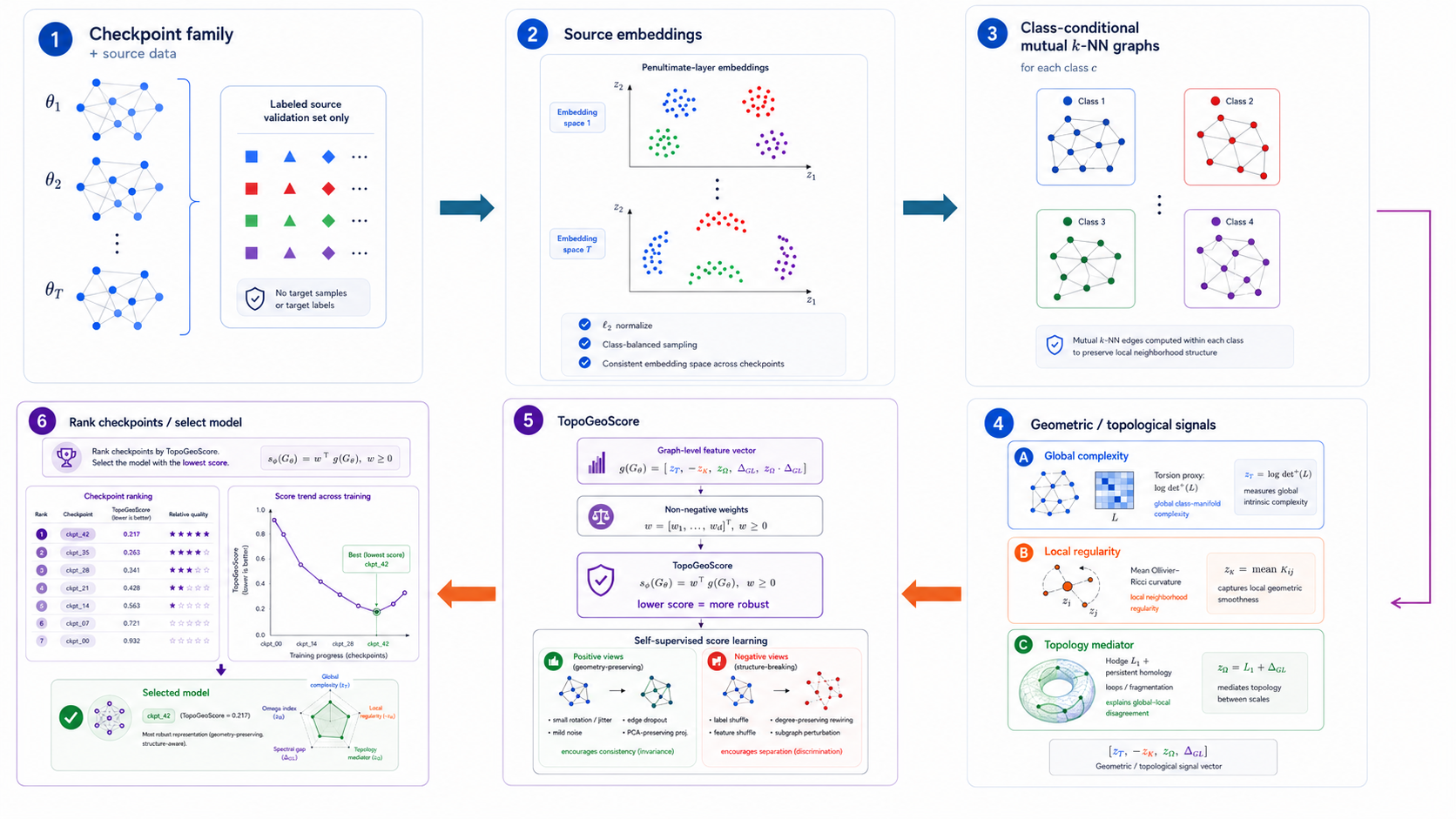}
\caption{
\textsc{TopoGeoScore} ranks checkpoints using only labelled source data.
For each checkpoint, source embeddings are converted into class-conditional mutual $k$-NN graphs, from which global complexity, local regularity, and higher-order topological signals are extracted.
Non-negative feature weights are learned through a source-only invariance--separation objective, yielding a lower-is-better score for expected OOD robustness.
}

\label{fig:pipeline}
\end{figure*}

\section{Related Work}
\label{sec:related}

\paragraph{OOD robustness and source-only checkpoint selection:}
Robustness under distribution shift is commonly addressed at training time through invariant representation learning, domain generalisation, and optimisation strategies that seek flat or stable solutions \cite{arjovsky2019irm,ahuja2020invariant,krueger2021out,cha2021swad,rame2022fishr,izmailov2018averaging}. These methods aim to produce robust models, often requiring multiple environments, modified objectives, or access to specific training conditions. A complementary line of work studies unsupervised accuracy estimation, where target-domain performance is predicted without target labels using confidence thresholds, projection norms, gradient statistics, matrix-norm logit measures, confidence/dispersity signals, or agreement-based criteria \cite{garg2022leveraging,yu2022predicting,deng2024leveraging,xie2024mano,deng2025confidence,deng2021labels,guillory2021predicting,baek2022agreementline}. However, these approaches typically assume access to unlabeled target samples at evaluation time. Our setting is stricter: the practitioner must rank checkpoints using only labelled source validation data before observing the shifted distribution. We therefore study source-only, target-sample-free OOD checkpoint selection rather than training-time robustness optimisation or target-data-based accuracy estimation.

\paragraph{Representation diagnostics and geometric model selection:}
A large body of work analyzes learned representations using similarity measures and global statistics, including CKA and related kernel-based comparisons, covariance spectra, feature norms, gradient-based measures, sharpness/flatness proxies, PAC--Bayes bounds, and margin- or norm-based generalization criteria \cite{kornblith2019similarity,nguyen2021do,jiang2019fantastic,foret2020sharpness,dziugaite2017computing,bartlett2017spectrally}. These diagnostics provide useful global summaries, but they do not explicitly model class-conditional neighbourhood structure, global connectivity, or higher-order topological defects. Our work is closer in spirit to model-selection criteria, but instead of using a hand-designed scalar proxy, we construct class-conditional representation graphs and learn an interpretable non-negative score from source-only graph views.

\paragraph{Spectral, topological, and higher-order analysis of representations:}
Spectral and topological methods provide tools for probing the structure of data and learned representations, including Laplacian descriptors, heat kernels, persistent homology, and representation topology divergence \cite{chung1997spectral,vonluxburg2007tutorial,belkin2003laplacian,smola2003kernels,rieck2018neural,guss2018characterizing,barannikov2022representation}. Persistent homology has been used to study generalization through intrinsic dimension, capacity, persistence evolution during training, and persistence-summary regression of the generalization gap \cite{birdal2021intrinsic,gutierrezfandino2021persistent,ballester2022predicting}. Practical vectorizations include persistence landscapes, persistence images, Betti curves, and persistent entropy \cite{bubenik2015persistence,adams2017persistence,rucco2017characterisation}. Beyond pairwise graph structure, Hodge Laplacians and simplicial methods capture circulation, harmonic structure, and higher-order connectivity \cite{schaub2021signal,roddenberry2022signal,ebli2020simplicial,bodnar2021weisfeiler,huang2024hlhgat,zhou2024facilitating,keros2022dist2cycle}. Discrete Ollivier--Ricci curvature has also been used to analyze contraction, bottlenecks, over-squashing, and over-smoothing in graphs \cite{ollivier2007ricci,topping2022oversquashing,nguyen2023revisiting}. Closely related to our torsion proxy, \citet{shen2025torgnn} use analytic torsion as a local edge-level message-passing weight. In contrast, we use a global reduced log-determinant of class-conditional $k$-NN graph Laplacians as a post-hoc source-representation diagnostic, and combine it with local curvature and higher-order topology for OOD checkpoint ranking.

\paragraph{Self-supervised score learning:}
Self-supervised learning often relies on the principle that useful representations should be stable under label-preserving views and discriminative against incompatible views. Contrastive and non-contrastive methods instantiate this principle through positive-view alignment, negative separation, stop-gradient prediction, or redundancy reduction \cite{chen2020simple,he2020momentum,grill2020byol,chen2021exploring,zbontari2021barlow,bardes2022vicreg}, with theoretical accounts connecting augmentation invariance to representation geometry \cite{wang2020understanding,haochen2021provable}. \textsc{TopoGeoScore} reuses this inductive bias at a different level. Rather than learning a representation, we learn a linear scoring function on fixed geometric and topological graph features. Positive views preserve the source embedding geometry, while negative views break class-conditional structure. This gives a target-sample-free objective for learning checkpoint-ranking weights while preserving interpretability through non-negative parameters.

\paragraph{Positioning:}
To our knowledge, prior work has not formulated source-only OOD checkpoint selection as a global--local--topological consistency problem. \textsc{TopoGeoScore} connects three signals that are usually studied separately: torsion-inspired spectral complexity for global class-manifold structure, Ollivier--Ricci curvature for local regularity, and higher-order topology for loops, fragmentation, and global--local disagreement. The resulting score is learned from source-conditioned graph views and ranks checkpoints without using target samples or target labels.

\section{TopoGeoScore: A Global--Local--Topological Consistency Framework}
\label{sec:method}

\subsection{Problem Setup: Source-Only OOD Checkpoint Selection}
Given a collection of trained model checkpoints $\{\theta\}$ (all trained on a source-domain dataset), our goal is to rank these checkpoints by expected out-of-distribution (OOD) accuracy without using any OOD labels.  In other words, we seek a \emph{source-only} selection score $s(\theta)$ that predicts OOD robustness based solely on the source data and model parameters.  Formally, let the source dataset be $\{(x_i,y_i)\}_{i=1}^N$ and let $f_\theta(\cdot)$ be the penultimate-layer embedding of model $\theta$.  We extract the normalized embeddings $z_i = f_\theta(x_i)/\|f_\theta(x_i)\|_2$ and construct class-conditional graphs (Sec.~\ref{sec:graphs}) from these.  The methodology below describes how to compute geometric and topological summaries of these graphs and combine them into \textsc{TopoGeoScore} to rank checkpoints.

\subsection{Class-Conditional Representation Graphs}
\label{sec:graphs}
For each checkpoint $\theta$, we form a graph for each class $c$.  Let $V_c=\{i:y_i=c\}$ be the indices of source samples in class $c$.  We build a \emph{mutual $k$-nearest-neighbor graph} $G_c=(V_c,E_c,W_c)$ as follows: connect nodes $i,j\in V_c$ iff $i$ is among the $k$ nearest neighbors of $j$ \emph{and} $j$ is among the $k$ nearest neighbors of $i$, measured in embedding space.  We use self-tuning weights to mitigate density effects:
\begin{equation}
\label{eq:self-tuning}
w_{ij} = \exp\!\biggl(-\frac{\|z_i - z_j\|^2}{\sigma_i\,\sigma_j}\biggr), 
\qquad 
\sigma_i = \|z_i - z_{i}^{(k)}\|_2 .
\end{equation}
Here $z_{i}^{(k)}$ is the $k$-th nearest neighbor of $i$.  Let $W_c$ be the weighted adjacency, $D_c$ the degree matrix, and define the normalized Laplacian $\mathcal{L}_c = I - D_c^{-1/2} W_c D_c^{-1/2}$.  Denote its positive eigenvalues by $\{\lambda_{i}^{(c)}>0\}$.  All subsequent geometric and topological quantities are computed class-wise and then aggregated (typically by averaging over classes) to produce global features for checkpoint $\theta$.

\subsection{Geometric and Topological Signals}
\label{sec:signals}

For each class graph $G_c$, we compute three class-wise summaries and aggregate them over classes. 

\paragraph{Global complexity:}
The torsion proxy is
\[
\tau(G_c)=\log\det{}^{*}(\mathcal{L}_c)=\sum_{\lambda_i^{(c)}>0}\log\lambda_i^{(c)} ,
\]
where $\det{}^{*}$ omits zero eigenvalues. By the normalized Matrix--Tree identity (Proposition~\ref{prop:normalized-mtt}), this is a degree-corrected log spanning-tree partition function on each connected component; in the connected case,
\[
\log\det{}^{*}(\mathcal{L}_c)=\log\tau_{\mathrm{tree}}(G_c)+\log\mathrm{vol}(G_c)-\sum_{v\in V_c}\log d_v .
\]
We aggregate class values to obtain $\mathcal{T}(\theta)$, a global class-manifold complexity proxy.

\paragraph{Local regularity:}
For each edge $(u,v)\in E_c$, let $\mu_u,\mu_v$ be the lazy random-walk distributions at $u,v$. The Ollivier--Ricci curvature is
\[
\kappa(u,v)=1-\frac{W_1(\mu_u,\mu_v)}{d(u,v)},
\]
computed approximately via Sinkhorn transport \cite{cuturi2013sinkhorn}. We use the class-averaged curvature
\[
\bar\kappa(\theta)=\frac1C\sum_c\frac{1}{|E_c|}\sum_{(u,v)\in E_c}\kappa(u,v),
\]
with the signed feature $-\bar\kappa$ so that larger values correspond to less regular local geometry.

\paragraph{Topological defect:}
Let $K_c$ be the two-dimensional clique complex built on $G_c$, with boundary matrices $B_1,B_2$ and unweighted Hodge $1$-Laplacian $L_1=B_1^\top B_1+B_2B_2^\top$. Proposition~\ref{prop:hodge-cycles} gives $\ker L_1\cong H_1(K_c;\mathbb{R})$, so $\dim\ker L_1=\beta_1(K_c)$. We extract Hodge summaries such as $\log\det{}^{*}(L_1)$, spectral entropy, and $\beta_1(K_c)$, together with $0$- and $1$-dimensional Vietoris--Rips persistent-homology summaries. Their normalized aggregate is denoted $z_\Omega(\theta)$ and is used as a higher-order defect score, especially in regimes where global complexity and local regularity disagree.

\subsection{TopoGeoScore: Source-Only Scoring Function}

For a checkpoint $\theta$ with class-conditional graph family $G_\theta=\{G_c\}$, let $z_T$, $z_{\bar\kappa}$, and $z_\Omega$ be the $z$-normalised values of $\mathcal{T}(\theta)$, $\bar\kappa(\theta)$, and the topological defect score within the checkpoint family. Define
\[
\Delta_{\mathrm{GL}}(\theta)=|z_T(\theta)+z_{\bar\kappa}(\theta)|,\qquad
g(G_\theta)=[z_T,-z_{\bar\kappa},z_\Omega,\Delta_{\mathrm{GL}},z_\Omega\Delta_{\mathrm{GL}}]^\top ,
\]
and set
\[
s_\phi(G_\theta)=\mathbf{w}^\top g(G_\theta),\qquad \mathbf{w}\in\mathbb{R}_{\ge0}^{5}.
\]
Lower $s_\phi$ indicates a more robust checkpoint. The non-negativity constraint enforces coordinatewise monotonicity: increasing global complexity, topological defect, or global--local disagreement cannot improve the score, while increasing mean curvature decreases the signed local-irregularity feature $-z_{\bar\kappa}$.

\subsection{Self-Supervised Score Learning}
\label{sec:ssl}

The weight vector $\mathbf{w}$ is trained without OOD labels using source-conditioned graph views. Positive views $G_\theta^+\sim\mathcal{T}^+$ are produced by approximately geometry-preserving transformations such as orthogonal rotation, mild noise, bootstrap resampling, feature dropout, and PCA-preserving projection. Negative views $G_\theta^-\sim\mathcal{T}^-$ are produced by structure-breaking transformations such as label shuffling, feature shuffling, and degree-preserving graph rewiring. The desired behavior is $s_\phi(G_\theta^+)\approx s_\phi(G_\theta)$ and $s_\phi(G_\theta^-)>s_\phi(G_\theta)+m$.

We therefore minimize
\begin{align}
\mathcal{L}_{\mathrm{inv}}(\mathbf{w})
&= \mathbb{E}_{\theta,G_\theta^+}\!\left[
\bigl(s_\phi(G_\theta)-s_\phi(G_\theta^{+})\bigr)^2
\right],
\label{eq:linv}\\
\mathcal{L}_{\mathrm{sep}}(\mathbf{w})
&= \mathbb{E}_{\theta,G_\theta^-}\!\left[
\bigl[m+s_\phi(G_\theta)-s_\phi(G_\theta^{-})\bigr]_+
\right],
\label{eq:lsep}\\
\mathcal{L}_{\mathrm{SSL}}(\mathbf{w})
&= \mathcal{L}_{\mathrm{inv}}(\mathbf{w})+\lambda\mathcal{L}_{\mathrm{sep}}(\mathbf{w})
+\mu\|\mathbf{w}\|_2^2,
\label{eq:lssl}
\end{align}
with $\lambda,\mu>0$, optimized by projected gradient descent onto $\mathbb{R}_{\ge0}^{5}$. The theoretical-status review below summarizes the formal stability and optimization results proved in Appendix~\ref{app:theory_proofs}.

\subsection{Score Family and Ablation Variants}
\label{sec:scorefamily}
The non-negative linear form encompasses several ablations. Setting $\mathbf{w}=(1,0,0,0,0)$ yields torsion-only, while $\mathbf{w}=(0,1,0,0,0)$ yields the signed local-irregularity component $-z_{\bar\kappa}$. In the experiments we also report a raw Ricci-only diagnostic using $z_{\bar\kappa}$ with its natural higher-is-better sign. The original fixed \textsc{GeoScore} baseline is recovered by $\mathbf{w}=(\alpha,1-\alpha,0,0,0)$ for fixed $\alpha$, a fixed topology-aware score by $\mathbf{w}=(\alpha,\beta,\gamma,0,0)$, and the learned full model by \textsc{TopoGeoScore}.

\subsection{Theoretical Status and Scope}
Appendix~\ref{app:theory_proofs} separates structural facts about the features from guarantees about learning the score. Proposition~\ref{prop:normalized-mtt} shows that $\log\det{}^{*}(\mathcal{L}_c)$ is not an ad hoc spectral statistic: on each connected component it is a degree-corrected log spanning-tree partition function. Practically, the torsion branch measures global redundancy of class connectivity rather than only a single spectral gap. Proposition~\ref{prop:hodge-cycles} identifies $\ker L_1$ with $H_1(K_c;\mathbb{R})$, so the nullity of the Hodge $1$-Laplacian is anchored to genuine one-dimensional cycles in the class complex.

Theorem~\ref{thm:view-stability} proves that, for fixed-vertex embedding perturbations whose pairwise distance distortion is below the $k$NN margin and relevant spectral-gap thresholds, the mutual $k$NN graph, self-tuning weights, Laplacian log-determinants, unweighted Hodge quantities, and Vietoris--Rips summaries are stable. Corollary~\ref{cor:positive-views} specializes this to exact invariance under orthogonal rotations. These statements justify the positive-view design in the nondegenerate regime; bootstrap resampling changes the vertex set and is therefore treated in the appendix as a sampling-stability heuristic rather than as a direct consequence of Theorem~\ref{thm:view-stability}. On the optimization side, Theorem~\ref{thm:ssl-convex} shows that the empirical invariance--separation objective is strongly convex over $\mathbb{R}_{\ge0}^{5}$ and that non-negative weights enforce coordinatewise monotonicity of the score. Theorem~\ref{thm:ssl-generalization} gives a uniform finite-sample bound comparing the empirical and population view objectives over bounded weights. Together, these results justify the stability, interpretability, and well-posedness of the learned scorer; they do not prove that source-only geometry determines arbitrary OOD accuracy.


\section{Experiments}
\label{sec:experiments}

We evaluate \textsc{TopoGeoScore} on three questions: \textbf{(E1) Predictivity:} does the learned score rank 
checkpoints by OOD accuracy more reliably than torsion alone, 
Ricci alone, or the fixed \textsc{GeoScore}? \textbf{(E2) Consistency:} 
is the score invariant under geometry-preserving views and degraded 
under structure-breaking views, as designed? \textbf{(E3) Selection:} 
when used as an unsupervised selector, how close does it get to 
oracle target-label selection? Settings G--H (ImageNet-C variants, 
MNLI$\to$HANS) and K (OGBN-Arxiv) supply cross-architecture and 
cross-modality replication; full results, scatter grids, severity 
sweeps, and per-metric breakdowns are in 
Appendix~\ref{app:additional}.

\subsection{Setup}
\label{sec:setup}

\paragraph{Settings:}
We use four primary settings and one provisional graph-modality setting 
(Tab.~\ref{tab:settings}). Setting~A is 
the main checkpoint-level study: ResNet-18 trained from scratch on 
CIFAR-10, with checkpoints saved every $10$ epochs ($N\!=\!21$), 
evaluated on CIFAR-10.1 \cite{recht2018cifar} and on the 19 
CIFAR-10-C corruptions across five severities \cite{hendrycks2019benchmarking}. 
Setting~F fine-tunes a pretrained ConvNeXt-T \cite{liu2022convnet} 
on CIFAR-10. Setting~G evaluates five publicly available 
ConvNeXt-T variants on ImageNet$\to$ImageNet-C. Setting~H evaluates 
five MNLI-finetuned language models on MNLI$\to$HANS 
\cite{williams2018broad,mccoy2019right}. A provisional graph-modality 
setting (Setting~K, GCN on OGBN-Arxiv) is reported in 
Appendix~\ref{app:ogbn}.

\begin{table}[h]
\centering
\small
\caption{Empirical scope. $N$ denotes the number of checkpoints or 
models. Settings G--H are reported as supporting evidence for 
generality; the main score comparisons are anchored on A and 
CIFAR-10-C.}
\label{tab:settings}
\begin{tabular}{@{}llrl@{}}
\toprule
ID & Setting / source $\to$ target & $N$ & Type of variation \\
\midrule
A & ResNet-18 ERM, CIFAR-10 $\to$ CIFAR-10.1/10-C   & 21 & Training trajectory \\
F & ConvNeXt-T fine-tune, CIFAR-10 $\to$ CIFAR-10.1/10.2 & 6 & Fine-tuning trajectory \\
G & ConvNeXt-T variants, ImageNet $\to$ ImageNet-C       & 5 & Pretraining variation \\
H & MNLI-finetuned LMs, MNLI $\to$ HANS                  & 5 & NLI transfer \\
K & GCN (2 seeds), OGBN-Arxiv $\to$ OGBN-Arxiv (temporal shift) & $2\times 21$ & Training trajectory \\
\bottomrule
\end{tabular}
\end{table}

\paragraph{What ``target-label-free'' means:}
Source labels enter only through class-conditional graph 
construction; no target-domain labels and no target-domain samples 
(labelled or unlabelled) are used at any stage of diagnosis or 
selection. This is stricter than unsupervised accuracy estimation 
\cite{garg2022leveraging,xie2024mano,deng2024leveraging}, which 
permits unlabelled target data.

\paragraph{Embedding extraction and graph construction:}
We extract $\ell_2$-normalised penultimate-layer features on the 
source-domain validation set, sample $100$ points per class, and 
build mutual $k$-NN graphs ($k\!=\!10$) with the self-tuning kernel of 
Eq.~\eqref{eq:self-tuning}. \textsc{TopoGeoScore} weights are trained 
by projected gradient descent on Eq.~\eqref{eq:lssl} for $1500$ steps 
with $\lambda\!=\!1$, $m\!=\!0.5$, $\mu\!=\!10^{-3}$. Unless otherwise stated, other hyperparameters are held fixed across checkpoints within each setting.

\paragraph{Reporting:}
For each score we report Spearman $\rho$ with OOD accuracy, the 
selected-checkpoint accuracy, the oracle accuracy 
($\max$ OOD accuracy over the family), and the gap to oracle. 
Statistical-significance flags should be read in the context of the 
finite checkpoint family ($N\!=\!21$ on Setting~A; $N\!=\!105$ on 
CIFAR-10-C with $5$ severities $\times$ $21$ checkpoints).

\subsection{Predictivity: fixed scores vs.\ learned \textsc{TopoGeoScore}}
\label{sec:results-main}

\paragraph{Main results (CIFAR-10$\to$CIFAR-10.1 and CIFAR-10-C).}
Table~\ref{tab:main-compare} compares the score family of 
§\ref{sec:scorefamily} on the primary checkpoint trajectory 
(CIFAR-10$\to$CIFAR-10.1, $N\!=\!21$) and its severity-graded extension 
(CIFAR-10-C, $N\!=\!105$). 

Across both settings, the learned \textsc{TopoGeoScore} achieves the 
strongest correlation with OOD accuracy ($\rho\!=\!{-}0.945$ on 
CIFAR-10.1 and $-0.641$ on CIFAR-10-C), consistently outperforming 
torsion-only, topology-aware fixed, and the original fixed 
\textsc{GeoScore}. The gain over fixed \textsc{GeoScore} 
($\Delta\rho\!\approx\!0.36$ on CIFAR-10.1) shows that the learned 
scorer does not simply reproduce the original weighted sum.

All scores other than Ricci-only exhibit the desired negative correlation 
(lower score $\rightarrow$ higher OOD accuracy), while Ricci-only 
correlates positively, reflecting that higher mean curvature already 
indicates robustness under our convention. Importantly, the same 
ordering is preserved on CIFAR-10-C without retraining, demonstrating 
that the learned weights generalise across distribution shifts. While torsion-only is already a strong source-only diagnostic, TopoGeoScore improves the full checkpoint ranking and provides a principled way to combine global, local, and topological signals without hand-tuning. Its main benefit is therefore not replacing torsion in every setting, but stabilising selection when the geometric signals disagree.

\begin{table}[h]
	\centering
	\small
	\caption{\textbf{Predictivity and selection performance.} Spearman $\rho$ 
		vs.\ OOD accuracy and unsupervised selection metrics across two settings. 
		For lower-is-better scores, more negative $\rho$ indicates better alignment; 
		Ricci-only is reported with its raw higher-is-better curvature sign.}
	\label{tab:main-compare}
	\begin{tabular}{@{}llcccc@{}}
		\toprule
		Setting & Score & $\rho$ & Sel. acc & Oracle acc & Gap \\
		\midrule
		\multirow{5}{*}{CIFAR-10$\to$10.1 ($N\!=\!21$)}
		& Torsion-only            & $-0.913$ & 0.8975 & 0.8975 & 0.000 \\
		& Ricci-only              & $+0.527$ & 0.7125 & 0.8975 & 0.185 \\
		& Fixed GeoScore          & $-0.588$ & 0.8625 & 0.8975 & 0.035 \\
		& Topology-aware fixed    & $-0.904$ & 0.8975 & 0.8975 & 0.000 \\
		& \textbf{TopoGeoScore (SSL)} & $\mathbf{-0.945}$ & \textbf{0.8975} & 0.8975 & \textbf{0.000} \\
		\midrule
		\multirow{5}{*}{CIFAR-10-C ($N\!=\!105$)}
		& Torsion-only            & $-0.629$ & 0.92536 & 0.92536 & 0.000 \\
		& Ricci-only              & $+0.265$ & 0.78432 & 0.92536 & 0.141 \\
		& Fixed GeoScore          & $-0.415$ & 0.88842 & 0.92536 & 0.037 \\
		& Topology-aware fixed    & $-0.622$ & 0.92536 & 0.92536 & 0.000 \\
		& \textbf{TopoGeoScore (SSL)} & $\mathbf{-0.641}$ & \textbf{0.92536} & 0.92536 & \textbf{0.000} \\
		\bottomrule
	\end{tabular}
\end{table}

\subsection{Self-supervised ablations}
\label{sec:ablation}

We ablate the components of the self-supervised objective 
(Eq.~\ref{eq:lssl}) on CIFAR-10-C ($N\!=\!105$ anchors). Each variant 
removes one component and retrains $\mathbf{w}$ from scratch.

\begin{table}[h]
	\centering
	\small
	\caption{\textbf{Ablation of the self-supervised objective on CIFAR-10-C} ($N=105$ anchors). Each variant removes one component of Eq.~\ref{eq:lssl} and retrains $\mathbf{w}$. We report Spearman correlation, selection performance, and score variability.}
	\label{tab:ablation}
	\begin{tabular}{@{}lccccc@{}}
		\toprule
		Variant & $\rho$ & Sel. acc & Oracle acc & Gap & Score std \\
		\midrule
		\textbf{Full (\textsc{TopoGeoScore})} 
		& $\mathbf{-0.6408}$ & \textbf{0.92536} & 0.92536 & \textbf{0.0000} & 0.4656 \\
		
		no invariance loss 
		& $-0.6449$ & 0.70546 & 0.92536 & 0.2199 & 1.2989 \\
		
		no separation loss 
		& $-0.6340$ & 0.92536 & 0.92536 & 0.0000 & $2.87\times10^{-6}$ \\
		
		no topology defect ($z_\Omega$) 
		& $-0.5882$ & 0.92536 & 0.92536 & 0.0000 & 0.3511 \\
		
		no interaction term 
		& $-0.6408$ & 0.92536 & 0.92536 & 0.0000 & 0.4597 \\
		
		no monotonic constraint 
		& $-0.5707$ & 0.92536 & 0.92536 & 0.0000 & 0.4890 \\
		
		\bottomrule
	\end{tabular}
\end{table}

Three conclusions emerge. \textbf{(i) Invariance is critical.} 
Removing the invariance loss causes a severe degradation in selection 
performance (gap = 0.22), despite a similar $\rho$, showing that 
rank correlation alone can be misleading without stability under 
geometry-preserving transformations. 

\textbf{(ii) Separation mainly prevents degeneracy.} Removing the 
separation loss retains perfect selection accuracy but collapses the 
learned weights toward trivial solutions, indicating that separation 
acts as a regulariser rather than a primary signal. 

\textbf{(iii) Geometry terms drive predictivity.} Removing the topology 
defect ($z_\Omega$) or the monotonic constraint reduces correlation 
strength, confirming their role in capturing meaningful global--local 
structure. In contrast, the interaction term has negligible effect in 
this setting. 

Overall, these results show that the effectiveness of 
\textsc{TopoGeoScore} is primarily driven by enforcing invariance, 
while topology-aware terms refine the ranking signal and improve robustness prediction.

\subsection{Control consistency: invariance and separation}
\label{sec:controls-results}

By construction, \textsc{TopoGeoScore} should be approximately 
invariant under geometry-preserving views and significantly worse 
under structure-breaking views. We verify this directly on Setting~A 
by scoring every checkpoint under each view family and reporting the 
score change relative to the anchor.

\begin{table}[h]
\centering
\small
\caption{\textbf{Score deltas by view kind.} Mean and standard 
deviation of $s_\phi(G_\theta^{\pm}) - s_\phi(G_\theta)$ across all 
21 checkpoints; positive view families should produce small 
$|\Delta|$, negative view families should produce a large positive 
$\Delta$ (worse score).}
\label{tab:invariance}
\begin{tabular}{@{}llcccc@{}}
\toprule
Kind & View & mean $\Delta$ & |mean| $\Delta$ & std $\Delta$ & $n$ \\
\midrule
Anchor   & --                  & $0.000$ & $0.000$ & $0.000$ & $21$ \\
\addlinespace
Positive (geometry-preserving) & rotate           & $-0.000$ & $0.002$ & $0.002$ & 21 \\
                              & gauss\_noise     & $-0.154$ & $0.165$ & $0.133$ & 21 \\
                              & bootstrap        & $-0.291$ & $0.292$ & $0.122$ & 21 \\
                              & feature\_dropout & $0.035$  & $0.060$ & $0.071$ & 21 \\
                              & pca\_preserving  & $-0.171$ & $0.184$ & $0.158$ & 21 \\
\addlinespace
Negative (structure-breaking) & label\_shuffle    & $0.634$ & $0.634$ & $0.497$ & 21 \\
                              & feature\_shuffle & $2.032$ & $2.032$ & $0.682$ & 21 \\
                              & rewire           & --      & --      & --      & -- \\
\bottomrule
\end{tabular}
\end{table}

The expected pattern holds: anchor deltas are zero; structure-breaking 
views produce a large positive shift in score (consistently worse, 
average shift roughly an order of magnitude larger than the positive 
family); and geometry-preserving views shift the score by a much 
smaller amount. We note that the mean shift on positive views is 
slightly biased away from zero (the score is on average a small 
amount better, not exactly identical, on positive views); this reflects a residual asymmetry between the invariance and separation terms.

\subsection{Checkpoint selection}
\label{sec:selection-results}

Tables~\ref{tab:main-compare} and~\ref{tab:cifar_results} report the selected accuracy, oracle accuracy, and gap to oracle for each score. On Setting~A and CIFAR-10-C, \textsc{TopoGeoScore} matches the strongest fixed components in selection accuracy and improves the full ranking over fixed \textsc{GeoScore}. This is important because selection depends on the top-ranked checkpoint, whereas Spearman correlation measures the reliability of the complete ordering. Per-severity regret is reported in Table~\ref{tab:cifar_severity}.

Splitting the $21$ checkpoints by the median of $\Delta_{GL}$ shows where the learned combination is most useful. In the low-disagreement group, all scores recover oracle or near-oracle selection. In the high-disagreement group, torsion-only and Ricci-only diverge, while \textsc{TopoGeoScore} remains close to oracle. This supports the role of topology as a mediator when global and local geometry disagree; see Fig.~\ref{fig:plot_geoscore_diagnostic}.
Overall, lower torsion is a strong source-only predictor, but topology and self-supervised weighting improve robustness diagnosis when geometric signals conflict, while avoiding hand-tuned score combinations.

\section{Conclusion and Limitations}

We proposed \textsc{TopoGeoScore}, a source-only framework for ranking checkpoints by expected OOD robustness using only labelled source validation data. The method represents each checkpoint through class-conditional mutual $k$-NN graphs and combines global spectral complexity, local Ollivier--Ricci regularity, and higher-order topological defects into an interpretable non-negative score. The weights are learned from source-conditioned graph views by enforcing invariance to geometry-preserving transformations and separation from structure-breaking transformations.
Empirically, \textsc{TopoGeoScore} improves full-ranking consistency over fixed geometric combinations and achieves oracle-level selection in the main CIFAR-based settings, with supporting evidence on ImageNet-C, MNLI$\rightarrow$HANS, and OGBN-Arxiv. These results suggest that source representations can contain useful global--local--topological evidence of robustness before target samples are observed.

The method also has \textbf{limitations}. It is a diagnostic selector rather than a robustness training method, and the source-only setting cannot capture target-specific shifts that leave no measurable trace in the source representation. Some supporting studies use small checkpoint or model families, especially ImageNet-C, MNLI$\rightarrow$HANS, and OGBN-Arxiv, so they should be interpreted as evidence of transferability rather than exhaustive validation. The graph and spectral/topological computations are also more expensive than simple confidence- or norm-based proxies, although they remain practical for post-hoc checkpoint selection.

\bibliographystyle{plainnat}
\bibliography{references}

@inproceedings{bartlett2017spectrally,
  title     = {Spectrally-normalized margin bounds for neural networks},
  author    = {Bartlett, Peter L. and Foster, Dylan J. and Telgarsky, Matus J.},
  booktitle = {Advances in Neural Information Processing Systems},
  volume    = {30},
  year      = {2017}
}

@article{recht2018cifar,
  title   = {Do CIFAR-10 Classifiers Generalize to CIFAR-10?},
  author  = {Recht, Benjamin and Roelofs, Rebecca and Schmidt, Ludwig and Shankar, Vaishaal},
  journal = {arXiv preprint arXiv:1806.00451},
  year    = {2018}
}

@inproceedings{hendrycks2019benchmarking,
  title     = {Benchmarking Neural Network Robustness to Common Corruptions and Perturbations},
  author    = {Hendrycks, Dan and Dietterich, Thomas},
  booktitle = {International Conference on Learning Representations},
  year      = {2019}
}

@inproceedings{cuturi2013sinkhorn,
  title     = {Sinkhorn Distances: Lightspeed Computation of Optimal Transport},
  author    = {Cuturi, Marco},
  booktitle = {Advances in Neural Information Processing Systems},
  volume    = {26},
  pages     = {2292--2300},
  year      = {2013}
}

@article{arjovsky2019irm,
  title={Invariant risk minimization},
  author={Arjovsky, Martin and Bottou, L{\'e}on and Gulrajani, Ishaan and Lopez-Paz, David},
  journal={arXiv preprint arXiv:1907.02893},
  year={2019}
}

@inproceedings{rame2022fishr,
  title={Fishr: Invariant gradient variances for out-of-distribution generalization},
  author={Rame, Alexandre and Dancette, Corentin and Cord, Matthieu},
  booktitle={International Conference on Machine Learning},
  pages={18347--18377},
  year={2022},
  organization={PMLR}
}

@article{cha2021swad,
  title={Swad: Domain generalization by seeking flat minima},
  author={Cha, Junbum and Chun, Sanghyuk and Lee, Kyungjae and Cho, Han-Cheol and Park, Seunghyun and Lee, Yunsung and Park, Sungrae},
  journal={Advances in Neural Information Processing Systems},
  volume={34},
  pages={22405--22418},
  year={2021}
}

@inproceedings{kornblith2019similarity,
  title={Similarity of neural network representations revisited},
  author={Kornblith, Simon and Norouzi, Mohammad and Lee, Honglak and Hinton, Geoffrey},
  booktitle={International conference on machine learning},
  pages={3519--3529},
  year={2019},
  organization={PMlR}
}

@inproceedings{ahuja2020invariant,
  title={Invariant risk minimization games},
  author={Ahuja, Kartik and Shanmugam, Karthikeyan and Varshney, Kush and Dhurandhar, Amit},
  booktitle={International Conference on Machine Learning},
  pages={145--155},
  year={2020},
  organization={PMLR}
}

@InProceedings{krueger2021out,
  title = 	 {Out-of-Distribution Generalization via Risk Extrapolation (REx)},
  author =       {Krueger, David and Caballero, Ethan and Jacobsen, Joern-Henrik and Zhang, Amy and Binas, Jonathan and Zhang, Dinghuai and Priol, Remi Le and Courville, Aaron},
  booktitle = 	 {Proceedings of the 38th International Conference on Machine Learning},
  pages = 	 {5815--5826},
  year = 	 {2021},
  editor = 	 {Meila, Marina and Zhang, Tong},
  volume = 	 {139},
  series = 	 {Proceedings of Machine Learning Research},
  month = 	 {18--24 Jul},
  publisher =    {PMLR}
}

@article{izmailov2018averaging,
  title={Averaging weights leads to wider optima and better generalization},
  author={Izmailov, Pavel and Podoprikhin, Dmitrii and Garipov, Timur and Vetrov, Dmitry and Wilson, Andrew Gordon},
  journal={arXiv preprint arXiv:1803.05407},
  year={2018}
}

@article{nguyen2021do,
  title={Do wide and deep networks learn the same things? uncovering how neural network representations vary with width and depth},
  author={Nguyen, Thao and Raghu, Maithra and Kornblith, Simon},
  journal={arXiv preprint arXiv:2010.15327},
  year={2020}
}

@article{jiang2019fantastic,
  title={Fantastic generalization measures and where to find them},
  author={Jiang, Yiding and Neyshabur, Behnam and Mobahi, Hossein and Krishnan, Dilip and Bengio, Samy},
  journal={arXiv preprint arXiv:1912.02178},
  year={2019}
}

@article{foret2020sharpness,
  title={Sharpness-aware minimization for efficiently improving generalization},
  author={Foret, Pierre and Kleiner, Ariel and Mobahi, Hossein and Neyshabur, Behnam},
  journal={arXiv preprint arXiv:2010.01412},
  year={2020}
}

@inproceedings{wang2020understanding,
  title={Understanding contrastive representation learning through alignment and uniformity on the hypersphere},
  author={Wang, Tongzhou and Isola, Phillip},
  booktitle={International conference on machine learning},
  pages={9929--9939},
  year={2020},
  organization={PMLR}
}

@article{belkin2003laplacian,
  title={Laplacian eigenmaps for dimensionality reduction and data representation},
  author={Belkin, Mikhail and Niyogi, Partha},
  journal={Neural computation},
  volume={15},
  number={6},
  pages={1373--1396},
  year={2003},
  publisher={MIT Press}
}

@article{rieck2018neural,
  title={Neural persistence: A complexity measure for deep neural networks using algebraic topology},
  author={Rieck, Bastian and Togninalli, Matteo and Bock, Christian and Moor, Michael and Horn, Max and Gumbsch, Thomas and Borgwardt, Karsten},
  journal={arXiv preprint arXiv:1812.09764},
  year={2018}
}

@article{ollivier2007ricci,
  title={Ricci curvature of metric spaces},
  author={Ollivier, Yann},
  journal={Comptes Rendus Mathematique},
  volume={345},
  number={11},
  pages={643--646},
  year={2007},
  publisher={Elsevier}
}

@article{guss2018characterizing,
  title={On characterizing the capacity of neural networks using algebraic topology},
  author={Guss, William H and Salakhutdinov, Ruslan},
  journal={arXiv preprint arXiv:1802.04443},
  year={2018}
}

@inproceedings{smola2003kernels,
  title={Kernels and Regularization on Graphs},
  author={Smola, Alexander J. and Kondor, Risi},
  booktitle={Computational Learning Theory and Kernel Machines (COLT/Kernel 2003)},
  series={Lecture Notes in Computer Science},
  volume={2777},
  pages={144--158},
  year={2003},
  publisher={Springer}
}

@inproceedings{topping2022oversquashing,
  title={Understanding over-squashing and bottlenecks on graphs via curvature},
  author={Topping, Jake and Di Giovanni, Francesco and Chamberlain, Benjamin P. and Dong, Xiaowen and Bronstein, Michael M.},
  booktitle={International Conference on Learning Representations (ICLR)},
  year={2022}
}

@inproceedings{barannikov2022representation,
  title={Representation Topology Divergence: A Method for Comparing Neural Network Representations},
  author={Barannikov, Serguei and Trofimov, Ilya and Balabin, Nikita and Burnaev, Evgeny},
  booktitle={Proceedings of the 39th International Conference on Machine Learning (ICML)},
  volume={162},
  pages={1607--1626},
  year={2022},
  publisher={PMLR}
}

@article{vonluxburg2007tutorial,
  title={A tutorial on spectral clustering},
  author={von Luxburg, Ulrike},
  journal={Statistics and Computing},
  year={2007}
}

@article{raysinger1971torsion,
  title={R-torsion and the Laplacian},
  author={Ray, Daniel and Singer, Isadore},
  journal={Advances in Mathematics},
  year={1971}
}

@article{shen2025torgnn,
  author  = {Shen, Cong and Liu, Xiang and Luo, Jiawei and Xia, Kelin},
  title   = {Torsion Graph Neural Networks},
  journal = {IEEE Transactions on Pattern Analysis and Machine Intelligence},
  volume  = {47},
  number  = {4},
  pages   = {2946--2956},
  year    = {2025},
  doi     = {10.1109/TPAMI.2025.3528449}
}

@inproceedings{nguyen2023revisiting,
  author    = {Nguyen, Khang and Nong, Hieu and Nguyen, Vinh and Ho, Nhat and Osher, Stanley and Nguyen, Tan},
  title     = {Revisiting Over-smoothing and Over-squashing Using {Ollivier--Ricci} Curvature},
  booktitle = {Proceedings of the 40th International Conference on Machine Learning (ICML)},
  series    = {PMLR},
  volume    = {202},
  year      = {2023}
}

@article{schaub2021signal,
  author  = {Schaub, Michael T. and Zhu, Yu and Seby, Jean-Baptiste and Roddenberry, T. Mitchell and Segarra, Santiago},
  title   = {Signal Processing on Higher-Order Networks: {Livin'} on the Edge\ldots and Beyond},
  journal = {Signal Processing},
  volume  = {187},
  pages   = {108149},
  year    = {2021},
  doi     = {10.1016/j.sigpro.2021.108149}
}

@inproceedings{bodnar2021weisfeiler,
  author    = {Bodnar, Cristian and Frasca, Fabrizio and Wang, Yu Guang and Otter, Nina and Mont{\'u}far, Guido and Li{\`o}, Pietro and Bronstein, Michael M.},
  title     = {Weisfeiler and {Lehman} Go Topological: Message Passing Simplicial Networks},
  booktitle = {Proceedings of the 38th International Conference on Machine Learning (ICML)},
  series    = {PMLR},
  volume    = {139},
  year      = {2021}
}

@inproceedings{ebli2020simplicial,
  author    = {Ebli, Stefania and Defferrard, Micha{\"e}l and Spreemann, Gard},
  title     = {Simplicial Neural Networks},
  booktitle = {NeurIPS Workshop on Topological Data Analysis and Beyond},
  year      = {2020}
}

@article{huang2024hlhgat,
  author  = {Huang, Jinghan and Chen, Qiufeng and Bian, Yijun and Zhu, Pengli and Chen, Nanguang and Chung, Moo K. and Qiu, Anqi},
  title   = {Advancing Graph Neural Networks with {HL-HGAT}: A {Hodge--Laplacian} and Attention Mechanism Approach for Heterogeneous Graph-Structured Data},
  journal = {arXiv preprint arXiv:2403.06687},
  year    = {2024}
}

@inproceedings{zhou2024facilitating,
  author    = {Zhou, Cai and Wang, Xiyuan and Zhang, Muhan},
  title     = {Facilitating Graph Neural Networks with Random Walk on Simplicial Complexes},
  booktitle = {Advances in Neural Information Processing Systems (NeurIPS)},
  year      = {2024}
}

@inproceedings{keros2022dist2cycle,
  author    = {Keros, Alexandros D. and Nanda, Vidit and Subr, Kartic},
  title     = {{Dist2Cycle}: A Simplicial Neural Network for Homology Localization},
  booktitle = {Proceedings of the AAAI Conference on Artificial Intelligence},
  year      = {2022}
}

@article{bubenik2015persistence,
  author  = {Bubenik, Peter},
  title   = {Statistical Topological Data Analysis Using Persistence Landscapes},
  journal = {Journal of Machine Learning Research},
  volume  = {16},
  pages   = {77--102},
  year    = {2015}
}

@article{adams2017persistence,
  author  = {Adams, Henry and Emerson, Tegan and Kirby, Michael and Neville, Rachel and Peterson, Chris and Shipman, Patrick and Chepushtanova, Sofya and Hanson, Eric and Motta, Francis and Ziegelmeier, Lori},
  title   = {Persistence Images: A Stable Vector Representation of Persistent Homology},
  journal = {Journal of Machine Learning Research},
  volume  = {18},
  number  = {8},
  pages   = {1--35},
  year    = {2017}
}

@inproceedings{rucco2017characterisation,
  author    = {Rucco, Matteo and Castiglione, Filippo and Merelli, Emanuela and Pettini, Marco},
  title     = {Characterisation of the Idiotypic Immune Network through Persistent Entropy},
  booktitle = {Proceedings of the European Conference on Complex Systems (ECCS 2014)},
  pages     = {117--128},
  year      = {2016},
  publisher = {Springer},
  note      = {Persistent entropy summary of persistence diagrams}
}

@inproceedings{birdal2021intrinsic,
  author    = {Birdal, Tolga and Lou, Aaron and Guibas, Leonidas J. and {\c{S}}im{\c{s}}ekli, Umut},
  title     = {Intrinsic Dimension, Persistent Homology and Generalization in Neural Networks},
  booktitle = {Advances in Neural Information Processing Systems (NeurIPS)},
  year      = {2021}
}

@article{gutierrezfandino2021persistent,
  author  = {Guti{\'e}rrez-Fandi{\~n}o, Asier and P{\'e}rez-Fern{\'a}ndez, David and Armengol-Estap{\'e}, Jordi and Villegas, Marta},
  title   = {Persistent Homology Captures the Generalization of Neural Networks Without a Validation Set},
  journal = {arXiv preprint arXiv:2106.00012},
  year    = {2021}
}

@article{ballester2022predicting,
  author  = {Ballester, Rub{\'e}n and Clemente, Xavier Arnal and Casacuberta, Carles and Madadi, Meysam and Corneanu, Ciprian A. and Escalera, Sergio},
  title   = {Predicting the Generalization Gap in Neural Networks Using Topological Data Analysis},
  journal = {arXiv preprint arXiv:2203.12330},
  year    = {2022}
}

@inproceedings{garg2022leveraging,
  author    = {Garg, Saurabh and Balakrishnan, Sivaraman and Lipton, Zachary C. and Neyshabur, Behnam and Sedghi, Hanie},
  title     = {Leveraging Unlabeled Data to Predict Out-of-Distribution Performance},
  booktitle = {International Conference on Learning Representations (ICLR)},
  year      = {2022}
}

@inproceedings{yu2022predicting,
  author    = {Yu, Yaodong and Yang, Zitong and Wei, Alexander and Ma, Yi and Steinhardt, Jacob},
  title     = {Predicting Out-of-Distribution Error with the Projection Norm},
  booktitle = {Proceedings of the 39th International Conference on Machine Learning (ICML)},
  series    = {PMLR},
  volume    = {162},
  year      = {2022}
}

@article{deng2024leveraging,
  author  = {Deng, Renchunzi and Wei, Hongxin and Zhang, Zhi and Cao, Yuzhou and Feng, Lei and An, Bo},
  title   = {Leveraging Gradients for Unsupervised Accuracy Estimation under Distribution Shift},
  journal = {arXiv preprint arXiv:2401.08909},
  year    = {2024}
}

@inproceedings{xie2024mano,
  author    = {Xie, Renchunzi and Wei, Hongxin and Cao, Yuzhou and Feng, Lei and An, Bo},
  title     = {{MaNo}: Exploiting Matrix Norm for Unsupervised Accuracy Estimation under Distribution Shifts},
  booktitle = {Advances in Neural Information Processing Systems (NeurIPS)},
  year      = {2024}
}

@inproceedings{deng2021labels,
  author    = {Deng, Weijian and Zheng, Liang},
  title     = {Are Labels Always Necessary for Classifier Accuracy Evaluation?},
  booktitle = {Proceedings of the IEEE/CVF Conference on Computer Vision and Pattern Recognition (CVPR)},
  year      = {2021}
}

@article{deng2025confidence,
  author  = {Deng, Weijian and others},
  title   = {Confidence and Dispersity as Signals: Unsupervised Model Evaluation and Ranking},
  journal = {arXiv preprint arXiv:2510.02956},
  year    = {2025}
}

@article{roddenberry2022signal,
  author  = {Roddenberry, T. Mitchell and Schaub, Michael T. and Hajij, Mustafa},
  title   = {Signal Processing on Cell Complexes},
  journal = {arXiv preprint arXiv:2110.05614},
  year    = {2022}
}

@inproceedings{liu2022convnet,
  author    = {Liu, Zhuang and Mao, Hanzi and Wu, Chao-Yuan and Feichtenhofer, Christoph and Darrell, Trevor and Xie, Saining},
  title     = {A {ConvNet} for the 2020s},
  booktitle = {Proceedings of the IEEE/CVF Conference on Computer Vision and Pattern Recognition (CVPR)},
  year      = {2022}
}

@inproceedings{williams2018broad,
  author    = {Williams, Adina and Nangia, Nikita and Bowman, Samuel R.},
  title     = {A Broad-Coverage Challenge Corpus for Sentence Understanding through Inference},
  booktitle = {Proceedings of the 2018 Conference of the North American Chapter of the Association for Computational Linguistics: Human Language Technologies (NAACL-HLT)},
  year      = {2018}
}

@inproceedings{mccoy2019right,
  author    = {McCoy, R. Thomas and Pavlick, Ellie and Linzen, Tal},
  title     = {Right for the Wrong Reasons: Diagnosing Syntactic Heuristics in Natural Language Inference},
  booktitle = {Proceedings of the 57th Annual Meeting of the Association for Computational Linguistics (ACL)},
  year      = {2019}
}

@inproceedings{chen2020simple,
  title     = {A Simple Framework for Contrastive Learning of Visual Representations},
  author    = {Chen, Ting and Kornblith, Simon and Norouzi, Mohammad and Hinton, Geoffrey},
  booktitle = {Proceedings of the 37th International Conference on Machine Learning (ICML)},
  pages     = {1597--1607},
  year      = {2020}
}

@inproceedings{he2020momentum,
  title     = {Momentum Contrast for Unsupervised Visual Representation Learning},
  author    = {He, Kaiming and Fan, Haoqi and Wu, Yuxin and Xie, Saining and Girshick, Ross},
  booktitle = {Proceedings of the IEEE/CVF Conference on Computer Vision and Pattern Recognition (CVPR)},
  pages     = {9729--9738},
  year      = {2020}
}

@inproceedings{grill2020byol,
  title     = {Bootstrap Your Own Latent: A New Approach to Self-Supervised Learning},
  author    = {Grill, Jean-Bastien and Strub, Florian and Altch{\'e}, Florent and Tallec, Corentin and Richemond, Pierre H. and Buchatskaya, Elena and Doersch, Carl and Pires, Bernardo Avila and Guo, Zhaohan Daniel and Azar, Mohammad Gheshlaghi and Piot, Bilal and Kavukcuoglu, Koray and Munos, R{\'e}mi and Valko, Michal},
  booktitle = {Advances in Neural Information Processing Systems (NeurIPS)},
  volume    = {33},
  pages     = {21271--21284},
  year      = {2020}
}

@inproceedings{chen2021exploring,
  title     = {Exploring Simple Siamese Representation Learning},
  author    = {Chen, Xinlei and He, Kaiming},
  booktitle = {Proceedings of the IEEE/CVF Conference on Computer Vision and Pattern Recognition (CVPR)},
  pages     = {15750--15758},
  year      = {2021}
}

@inproceedings{zbontari2021barlow,
  title     = {Barlow Twins: Self-Supervised Learning via Redundancy Reduction},
  author    = {Zbontar, Jure and Jing, Li and Misra, Ishan and LeCun, Yann and Deny, St{\'e}phane},
  booktitle = {Proceedings of the 38th International Conference on Machine Learning (ICML)},
  pages     = {12310--12320},
  year      = {2021}
}

@inproceedings{bardes2022vicreg,
  title     = {{VICReg}: Variance-Invariance-Covariance Regularization for Self-Supervised Learning},
  author    = {Bardes, Adrien and Ponce, Jean and LeCun, Yann},
  booktitle = {International Conference on Learning Representations (ICLR)},
  year      = {2022}
}

@inproceedings{haochen2021provable,
  title     = {Provable Guarantees for Self-Supervised Deep Learning with Spectral Contrastive Loss},
  author    = {HaoChen, Jeff Z. and Wei, Colin and Gaidon, Adrien and Ma, Tengyu},
  booktitle = {Advances in Neural Information Processing Systems (NeurIPS)},
  volume    = {34},
  pages     = {5000--5011},
  year      = {2021}
}

@inproceedings{guillory2021predicting,
  title     = {Predicting with Confidence on Unseen Distributions},
  author    = {Guillory, Devin and Shankar, Vaishaal and Ebrahimi, Sayna and Darrell, Trevor and Schmidt, Ludwig},
  booktitle = {Proceedings of the IEEE/CVF International Conference on Computer Vision (ICCV)},
  pages     = {1134--1144},
  year      = {2021}
}

@inproceedings{baek2022agreementline,
  title     = {Agreement-on-the-Line: Predicting the Performance of Neural Networks under Distribution Shift},
  author    = {Baek, Christina and Jiang, Yiding and Raghunathan, Aditi and Kolter, J. Zico},
  booktitle = {Advances in Neural Information Processing Systems (NeurIPS)},
  volume    = {35},
  pages     = {19274--19289},
  year      = {2022}
}

@inproceedings{dziugaite2017computing,
  title     = {Computing Nonvacuous Generalization Bounds for Deep (Stochastic) Neural Networks with Many More Parameters than Training Data},
  author    = {Dziugaite, Gintare Karolina and Roy, Daniel M.},
  booktitle = {Proceedings of the Thirty-Third Conference on Uncertainty in Artificial Intelligence (UAI)},
  year      = {2017}
}

@inproceedings{hu2020ogb,
  title={Open Graph Benchmark: Datasets for Machine Learning on Graphs},
  author={Hu, Weihua and Fey, Matthias and Zitnik, Marinka and Dong, Yuxiao and Ren, Hongyu and Liu, Bowen and Catasta, Michele and Leskovec, Jure},
  booktitle={Advances in Neural Information Processing Systems (NeurIPS)},
  year={2020}
}

@article{zia2024topological,
  title={Topological deep learning: a review of an emerging paradigm},
  author={Zia, Ali and Khamis, Abdelwahed and Nichols, James and Tayab, Usman Bashir and Hayder, Zeeshan and Rolland, Vivien and Stone, Eric and Petersson, Lars},
  journal={Artificial Intelligence Review},
  volume={57},
  pages={77},
  year={2024}
}

@article{bartlett2002rademacher,
  author    = {Peter L. Bartlett and Shahar Mendelson},
  title     = {Rademacher and Gaussian Complexities: Risk Bounds and Structural Results},
  journal   = {Journal of Machine Learning Research},
  volume    = {3},
  pages     = {463--482},
  year      = {2002}
}

@book{bhatia1997matrix,
  author    = {Rajendra Bhatia},
  title     = {Matrix Analysis},
  publisher = {Springer},
  year      = {1997}
}

@inproceedings{chazal2009proximity,
  author    = {Fr{\'e}d{\'e}ric Chazal and Vin de Silva and Marc Glisse and Steve Oudot},
  title     = {Proximity of Persistence Modules and Their Diagrams},
  booktitle = {Proceedings of the 25th Annual Symposium on Computational Geometry},
  pages     = {237--246},
  year      = {2009}
}

@book{chung1997spectral,
  author    = {Fan R. K. Chung},
  title     = {Spectral Graph Theory},
  series    = {CBMS Regional Conference Series in Mathematics},
  volume    = {92},
  publisher = {American Mathematical Society},
  year      = {1997}
}

@article{cohensteiner2007stability,
  author    = {David Cohen-Steiner and Herbert Edelsbrunner and John Harer},
  title     = {Stability of Persistence Diagrams},
  journal   = {Discrete \& Computational Geometry},
  volume    = {37},
  number    = {1},
  pages     = {103--120},
  year      = {2007}
}

@article{eckmann1944harmonische,
  author    = {Beno Eckmann},
  title     = {Harmonische Funktionen und Randwertaufgaben in einem Komplex},
  journal   = {Commentarii Mathematici Helvetici},
  volume    = {17},
  pages     = {240--255},
  year      = {1944}
}

@article{lyons2005determinantal,
  author    = {Russell Lyons},
  title     = {Determinantal Probability Measures},
  journal   = {Publications Math{\'e}matiques de l'IH{\'E}S},
  volume    = {98},
  pages     = {167--212},
  year      = {2003}
}

@book{shalev2014understanding,
  author    = {Shai Shalev-Shwartz and Shai Ben-David},
  title     = {Understanding Machine Learning: From Theory to Algorithms},
  publisher = {Cambridge University Press},
  year      = {2014}
}


\newpage

\appendix
\section{Additional Results}
\label{app:additional}

\begin{figure}[h!]
    \centering
    \includegraphics[width=0.8\linewidth]{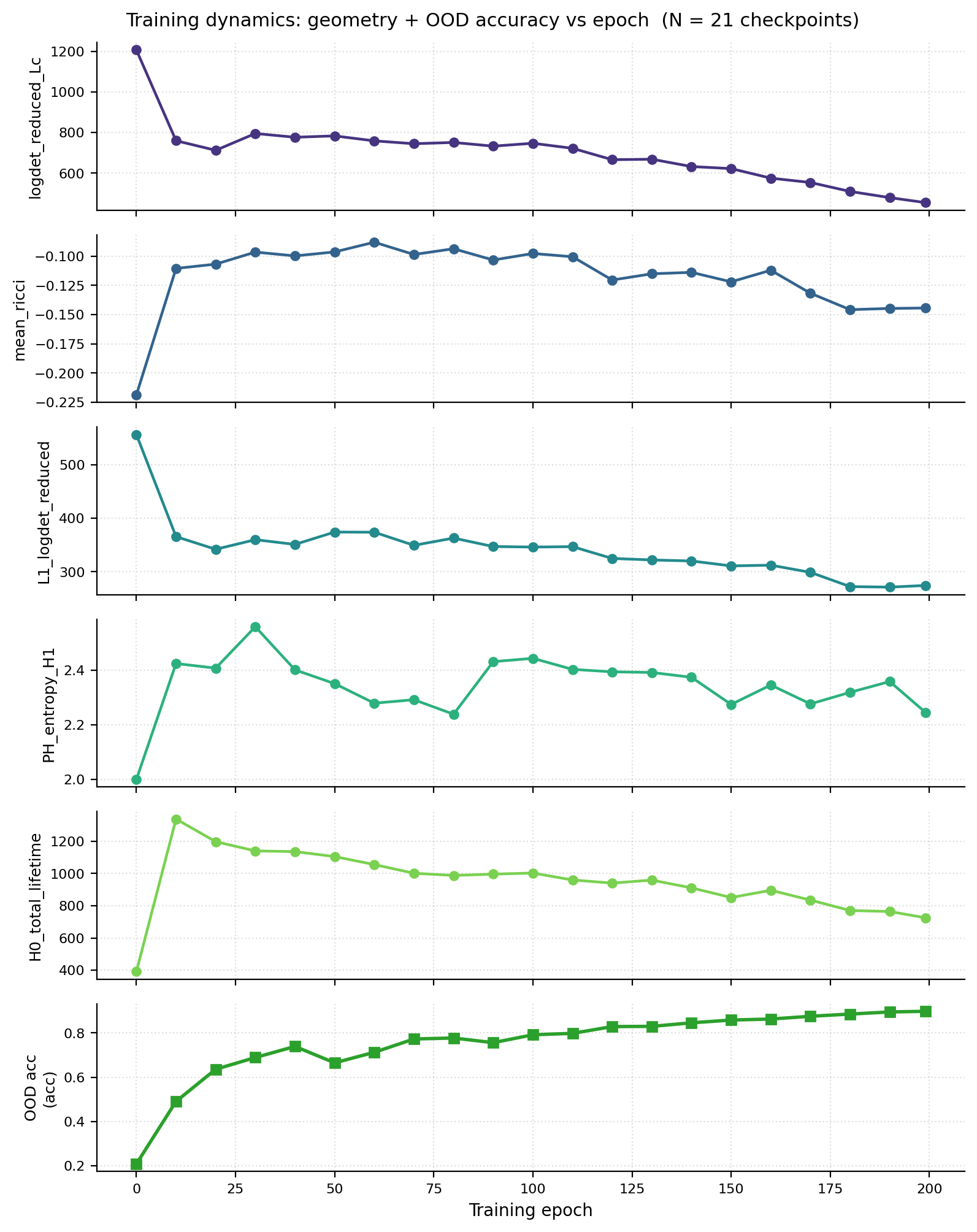}
    \caption{Training dynamics plots.}
    \label{fig:plot_training_dynamics}
\end{figure}

\begin{figure}[h!]
    \centering
    \includegraphics[width=0.8\linewidth]{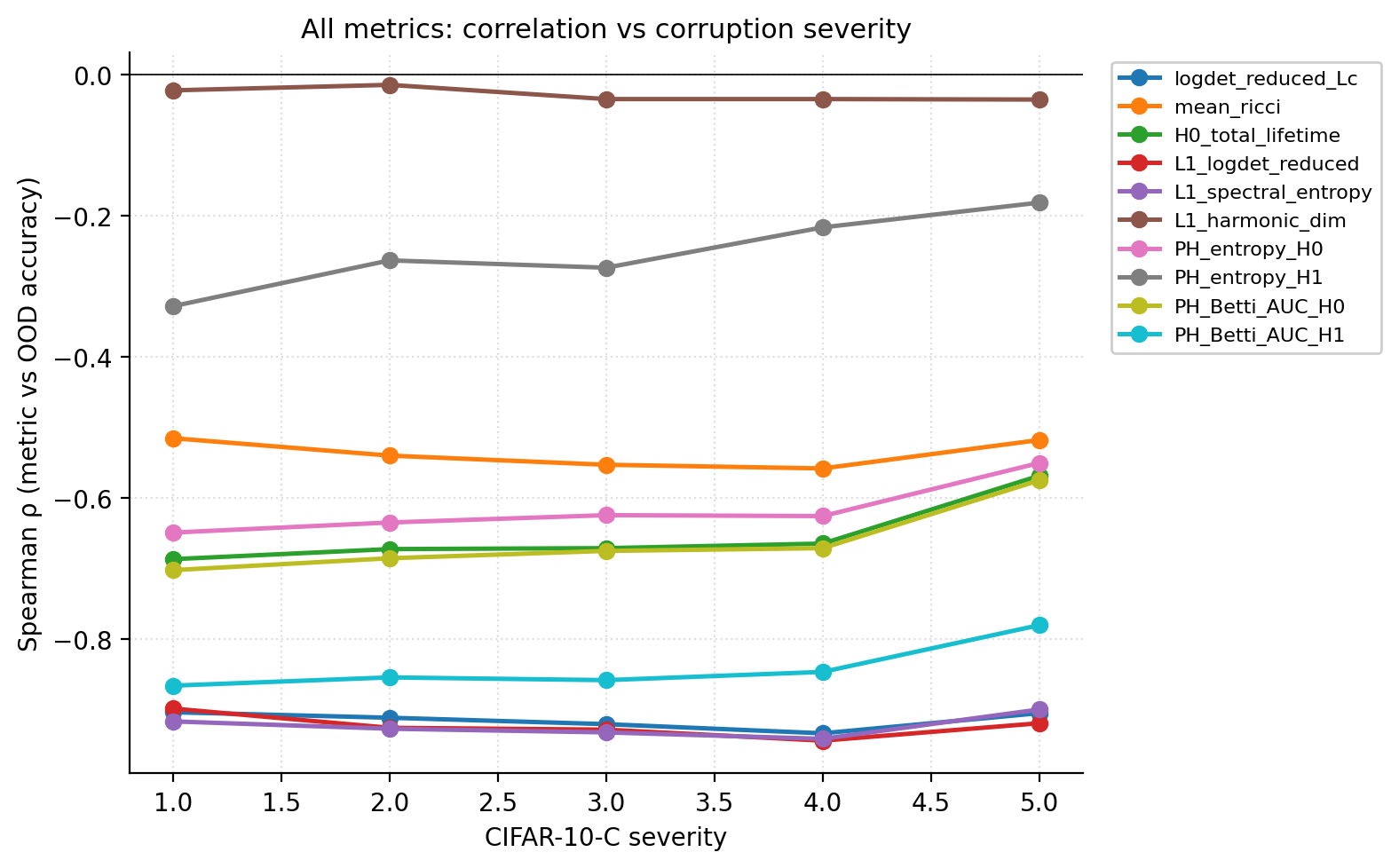}
    \caption{Severity sweep of all metrics on CIFAR-10-C.}
    \label{fig:severity_sweep_all_metrics}
\end{figure}

\begin{figure}[h!]
    \centering
    \includegraphics[width=1\linewidth]{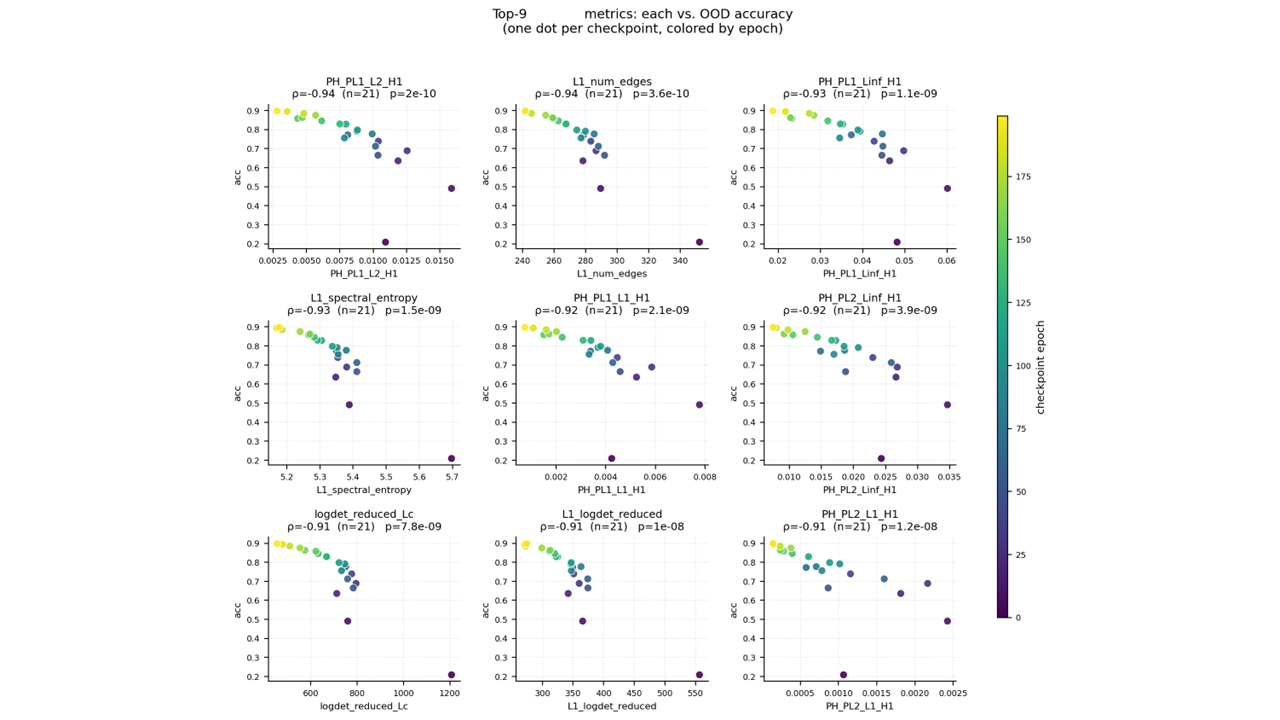}
    \caption{Small-multiples scatter grid: each panel shows one top metric 
versus OOD accuracy; points are checkpoints colored by epoch.}
    \label{fig:per_ckpt_grid}
\end{figure}

\begin{figure}[h!]
    \centering
    \includegraphics[width=1\textwidth]{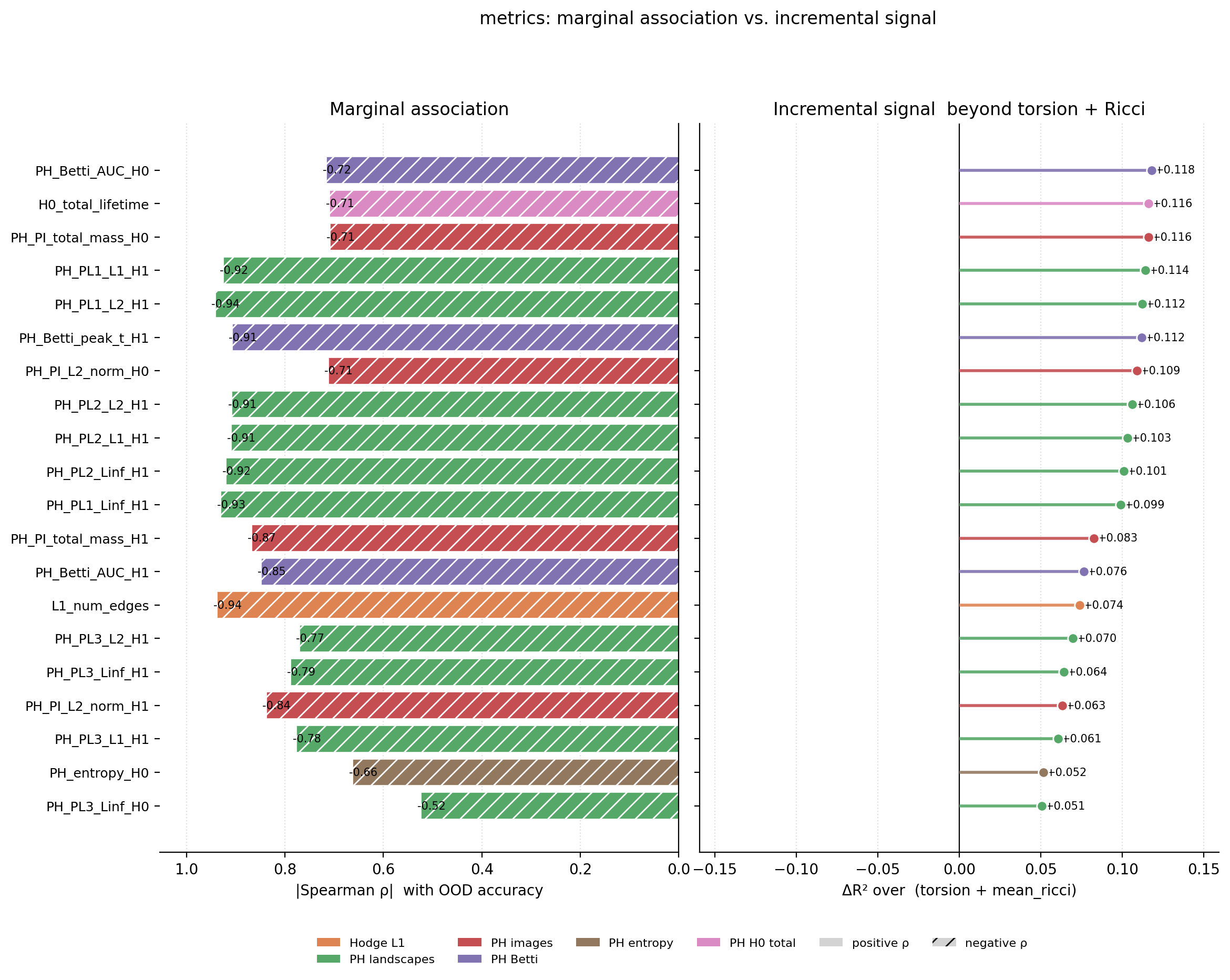}
    \caption{Joint view of $|\rho|$ (marginal predictive power)
and $\Delta R^2$ (incremental power beyond torsion + Ricci). Shows which metrics add
information that torsion/Ricci miss.}
    \label{fig:marginal_vs_dR2}
\end{figure}

Figure~\ref{fig:marginal_vs_dR2} investigates whether each new metric adds 
\emph{incremental} signal beyond torsion + mean Ricci curvature. We report the
partial correlation of the new metric with OOD accuracy while controlling
for the torsion+Ricci linear fit, and the $\Delta R^2$ from adding the new
metric to a linear model using torsion+Ricci as predictors. Specifically, for 
metric $M$, we fit the linear model $y = a\tau + b\kappa + c$ and compare 
$R^2$ with and without $M$.

\begin{figure}[h!]
    \centering
    \includegraphics[width=1\textwidth]{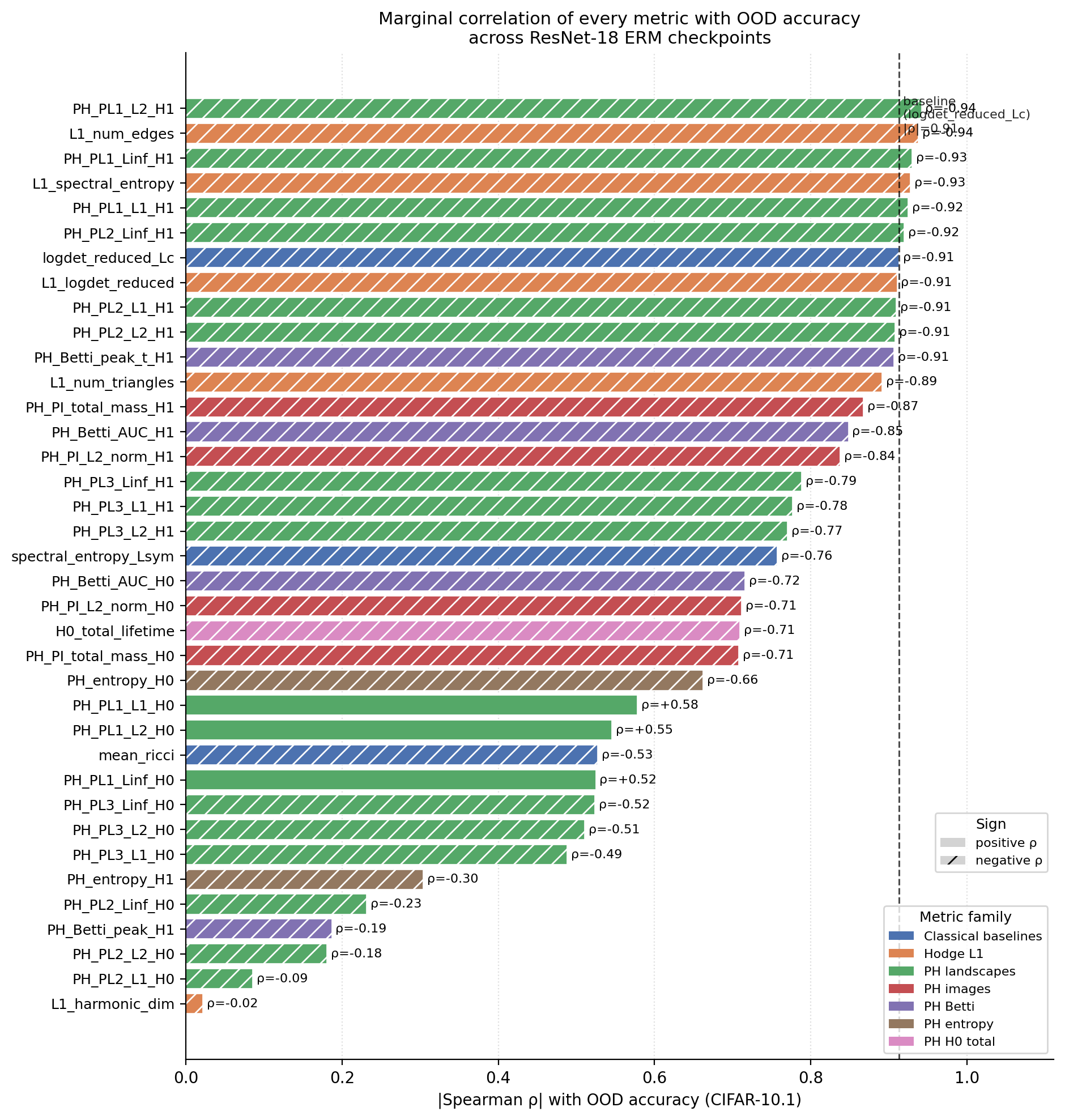}
    \caption{Horizontal bar chart of Spearman $\rho$ for
every metric versus OOD accuracy, with $\mathrm{Logdet\_reduced\_L_c}$ shown as the baseline; metrics are sorted and family-colored.}
    \label{fig:marginal_spearman}
\end{figure}

\begin{figure}[h!]
    \centering
    \includegraphics[width=1\textwidth]{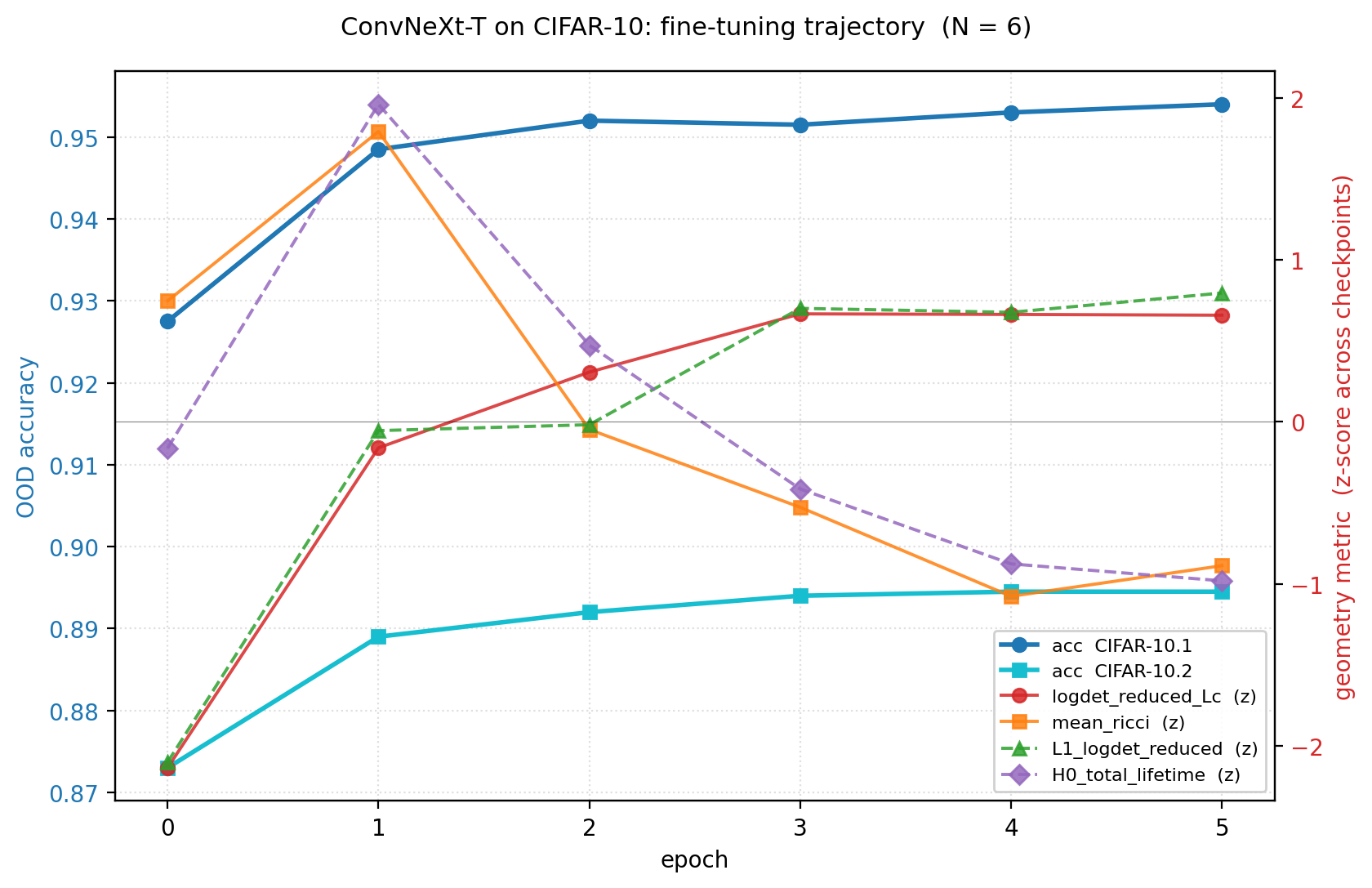}
    \caption{Line plot of accuracy on CIFAR-10.1 and CIFAR-10.2 across
epochs, with $z$-normalised key geometry metrics overlaid.}
    \label{fig:convnext_training_trajectory}
\end{figure}

\begin{figure}[h!]
    \centering
    \includegraphics[width=0.6\textwidth]{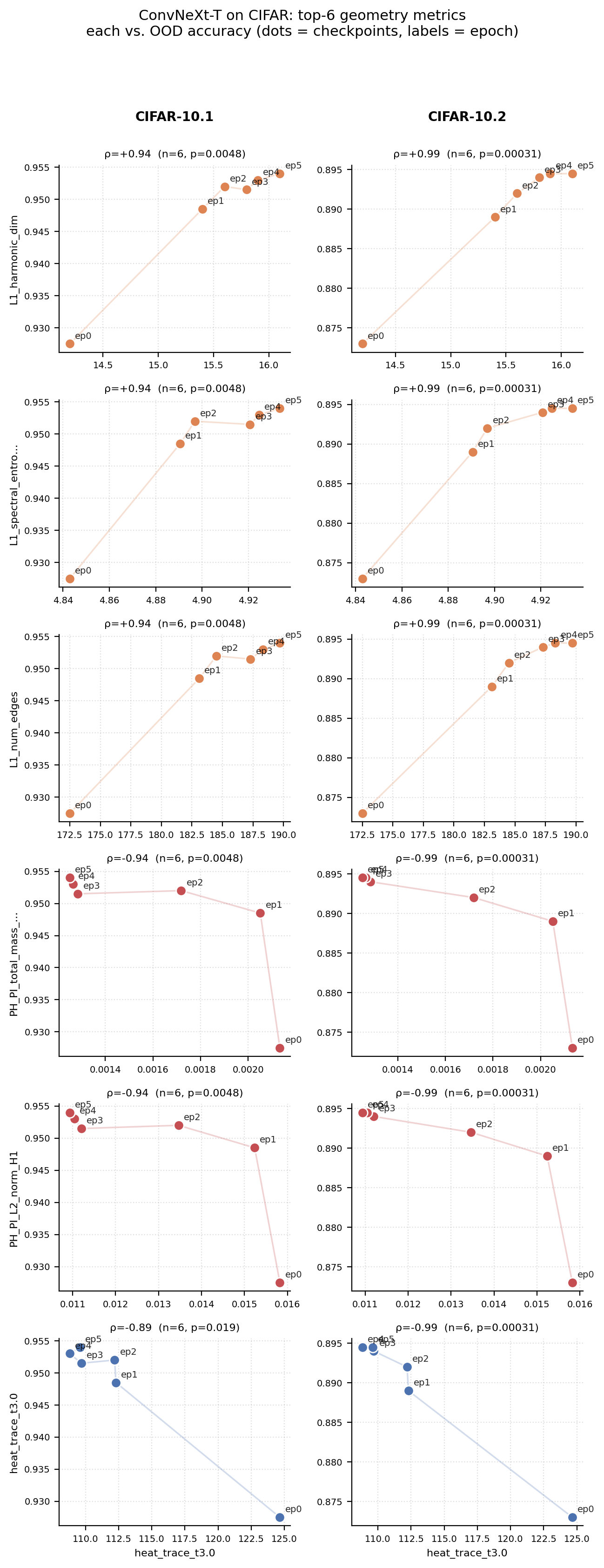}
    \caption{Top-6 metrics $\times$ 2 OOD targets scatter grid,
with points labeled by epoch.}
    \label{fig:convnext_scatter_grid}
\end{figure}

\begin{figure}[h!]
    \centering
    \includegraphics[width=0.7\textwidth]{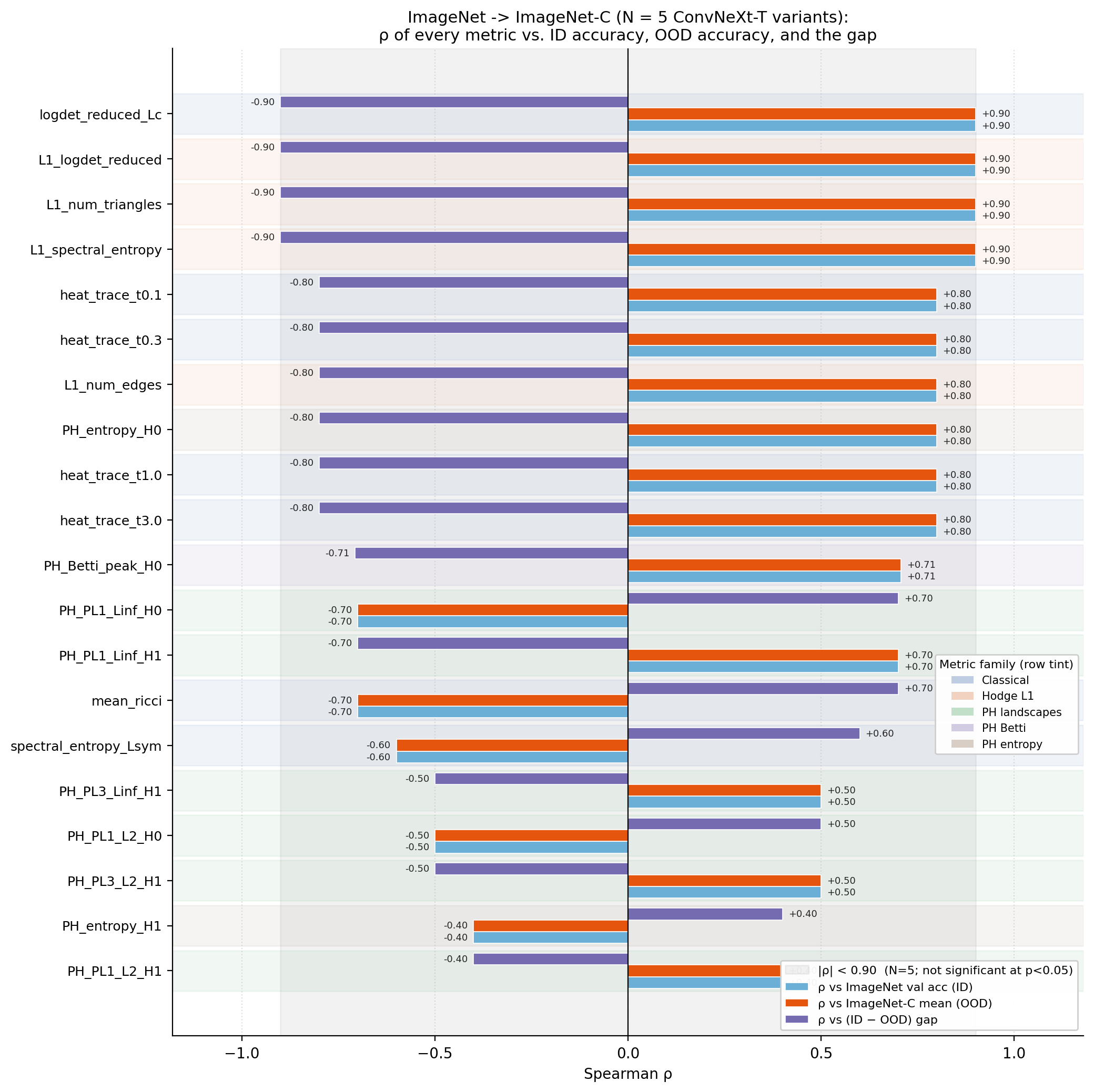}
    \caption{Metric-wise Spearman correlations with ID accuracy, OOD accuracy, and the ID--OOD gap for ImageNet-C variants. Metrics are sorted by $|\rho|$ with respect to the gap; the grey band indicates the non-significant region for $N=5$.}
    \label{fig:imagenetc_rho_three_way}
\end{figure}

Figure~\ref{fig:imagenetc_rho_three_way} shows that, across five pretrained 
ConvNeXt-T variants on ImageNet $\rightarrow$ ImageNet-C, classical Laplacian 
torsion ($\mathrm{Logdet\_reduced\_L_c}$) and three Hodge $L_{1}$ metrics 
($\mathrm{Logdet\_reduced}$, num\_triangles, spectral\_entropy) all achieve 
$\rho = 0.90$ ($p = 0.037$) with both ID and OOD accuracy. This replicates and 
extends the result on a modern ConvNet architecture and a 1000-class natural-image 
OOD benchmark. The Hodge $L_{1}$ family achieves comparable predictive value to 
classical torsion, supporting the hypothesis that higher-order combinatorial 
geometry captures the same robustness-relevant structure.

\begin{figure}[h!]
    \centering
    \includegraphics[width=1\textwidth]{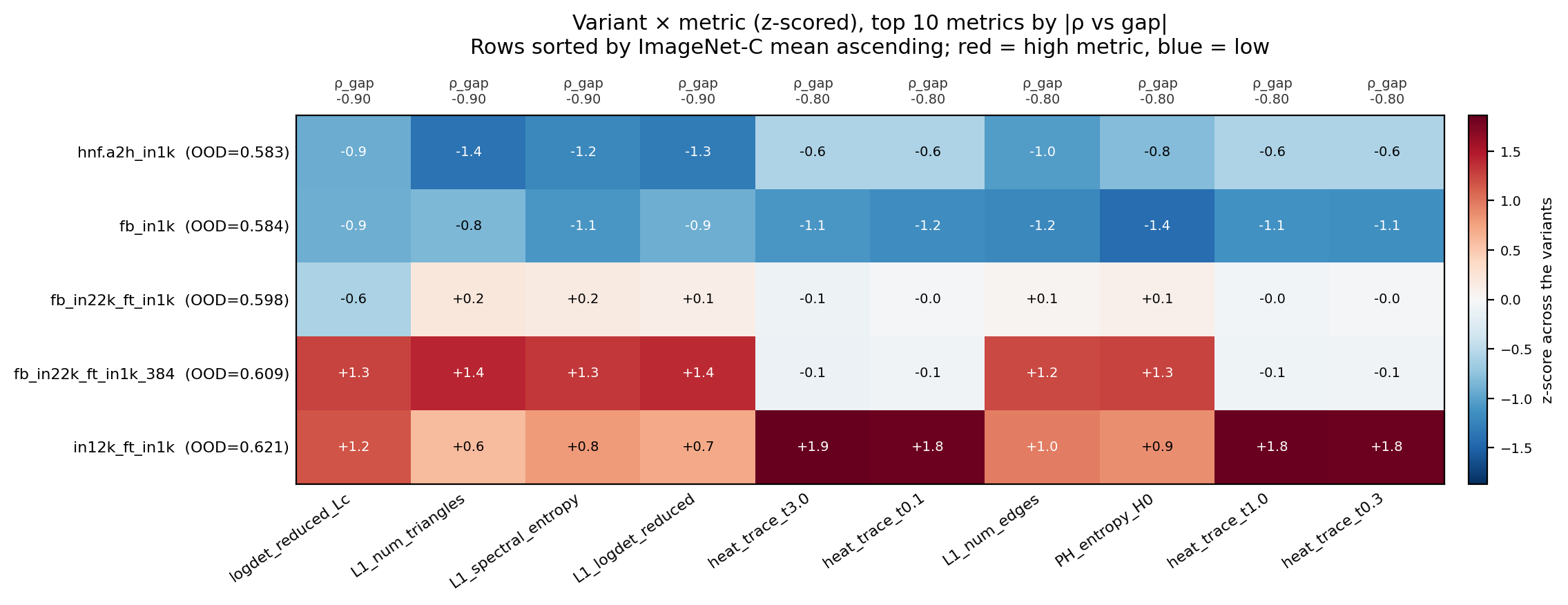}
    \caption{$5$ variants $\times$ top-$10$ metrics, with $z$-scored values and $\rho$ with respect to the ID--OOD gap shown above each column.}
    \label{fig:imagenetc_variant_metric_heatmap}
\end{figure}

\begin{figure}[h!]
    \centering
    \includegraphics[width=0.8\textwidth]{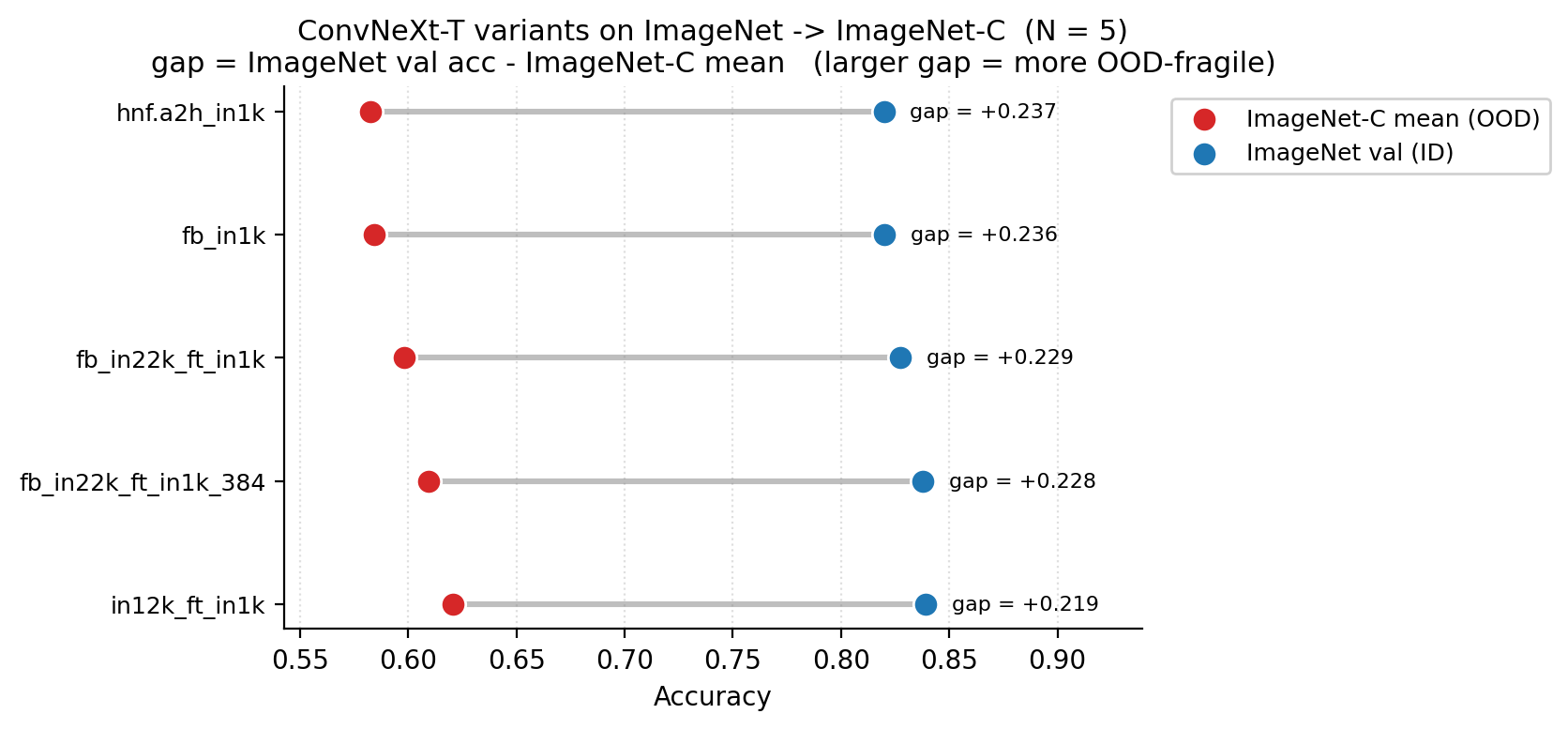}
    \caption{ImageNet validation accuracy and ImageNet-C mean accuracy for each variant, with the ID--OOD gap annotated.}
    \label{fig:imagenetc_variant_summary}
\end{figure}

\begin{figure}[h!]
    \centering
    \includegraphics[width=0.8\textwidth]{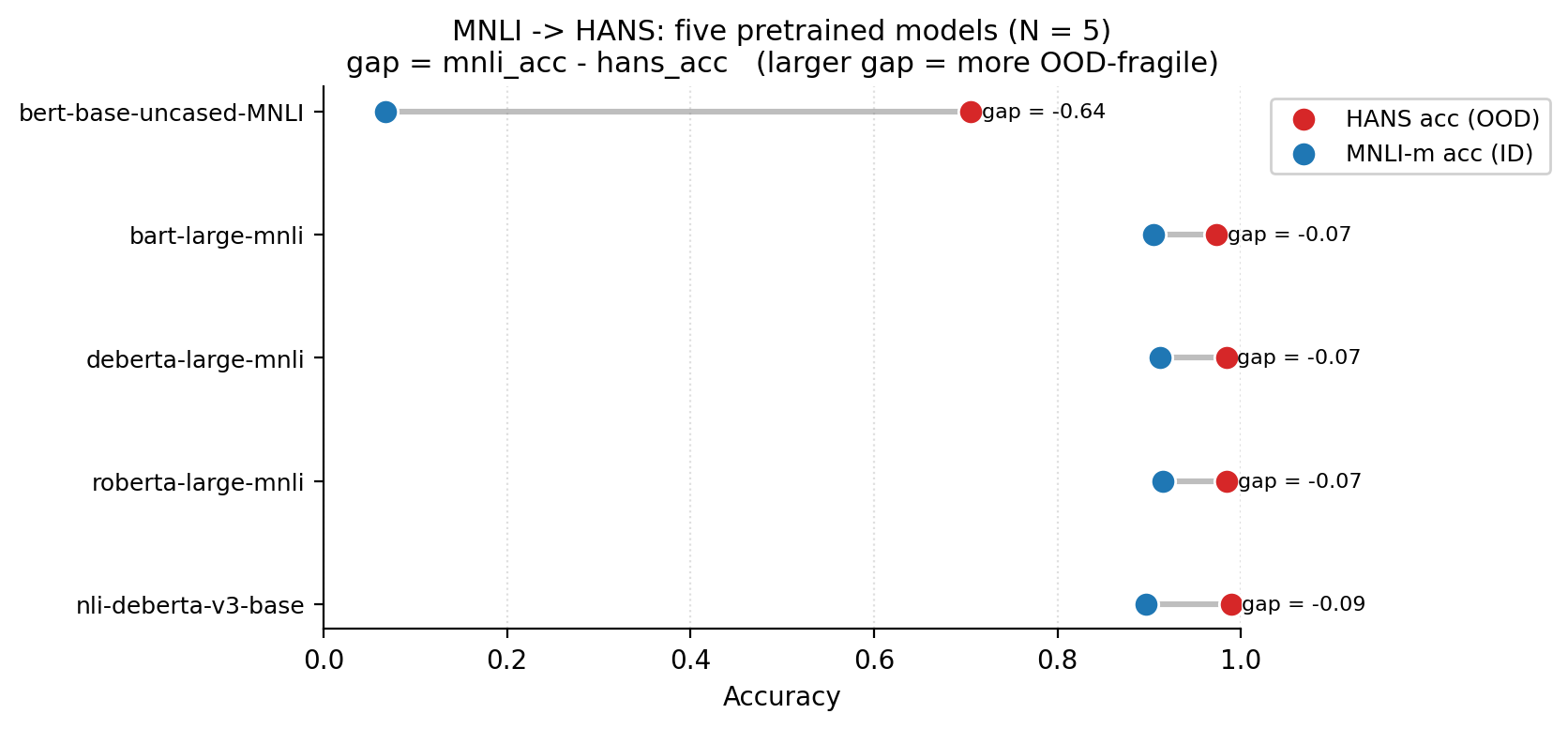}
    \caption{MNLI-m and HANS accuracy for each model, with the ID--OOD gap annotated.}
    \label{fig:nli_model_summary}
\end{figure}

\begin{figure}[h!]
	\centering
	\includegraphics[width=0.8\textwidth]{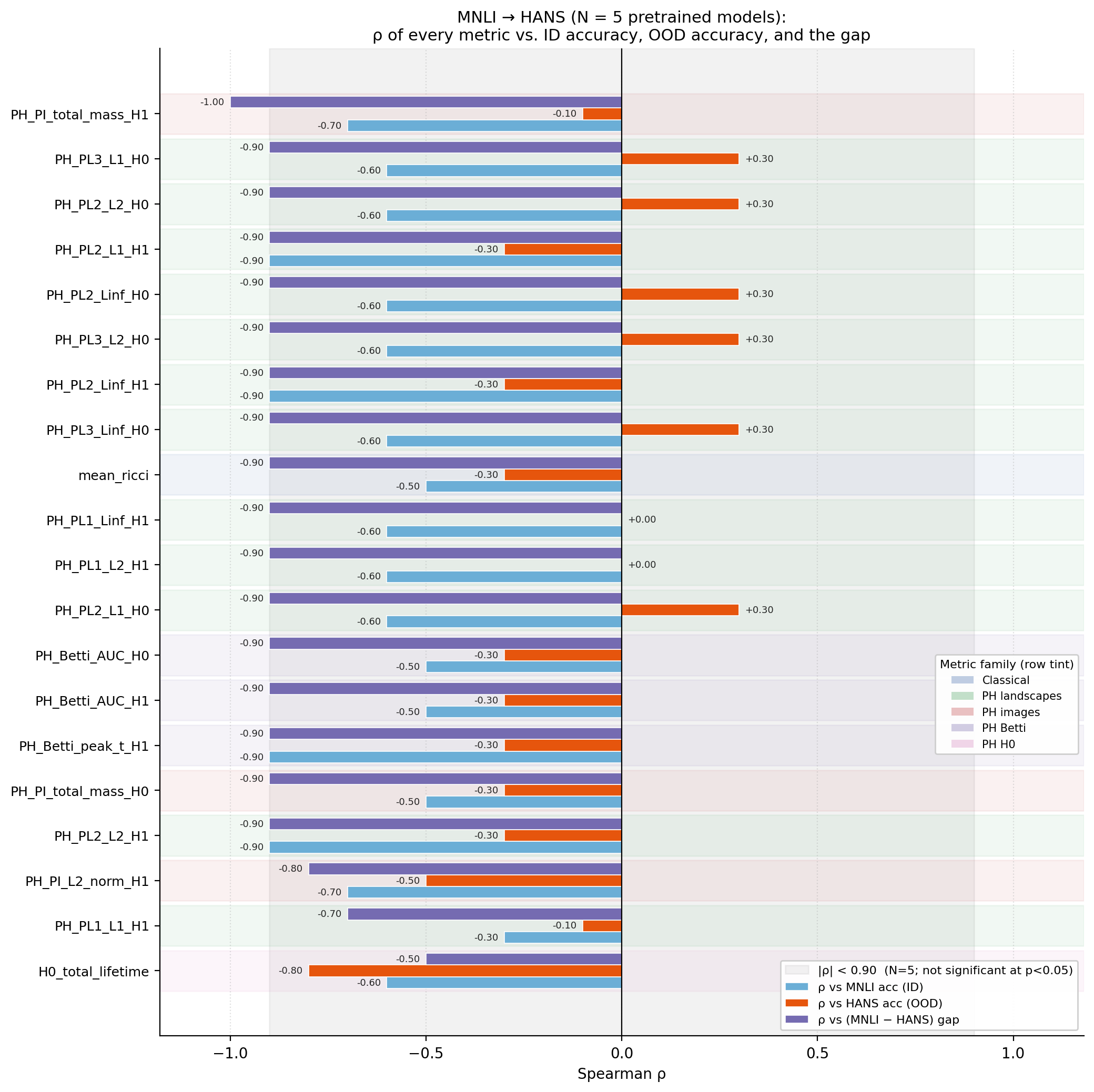}
	\caption{Metric correlations on MNLI$\to$HANS (N=5 models). Bars show Spearman $\rho$ between each metric and in-distribution accuracy (MNLI), out-of-distribution accuracy (HANS), and the generalization gap (MNLI$-$HANS). While many metrics strongly correlate with MNLI accuracy, their alignment with HANS accuracy is weaker and sometimes inconsistent. In contrast, correlations with the generalization gap are consistently strong (typically $\rho \approx -0.9$), indicating that these geometry- and topology-based metrics primarily capture robustness and shortcut-induced failure modes rather than raw accuracy.}
	\label{fig:plot_nli_rho_three_way}
\end{figure}

\begin{figure}[h!]
    \centering
    \includegraphics[width=0.9\textwidth]{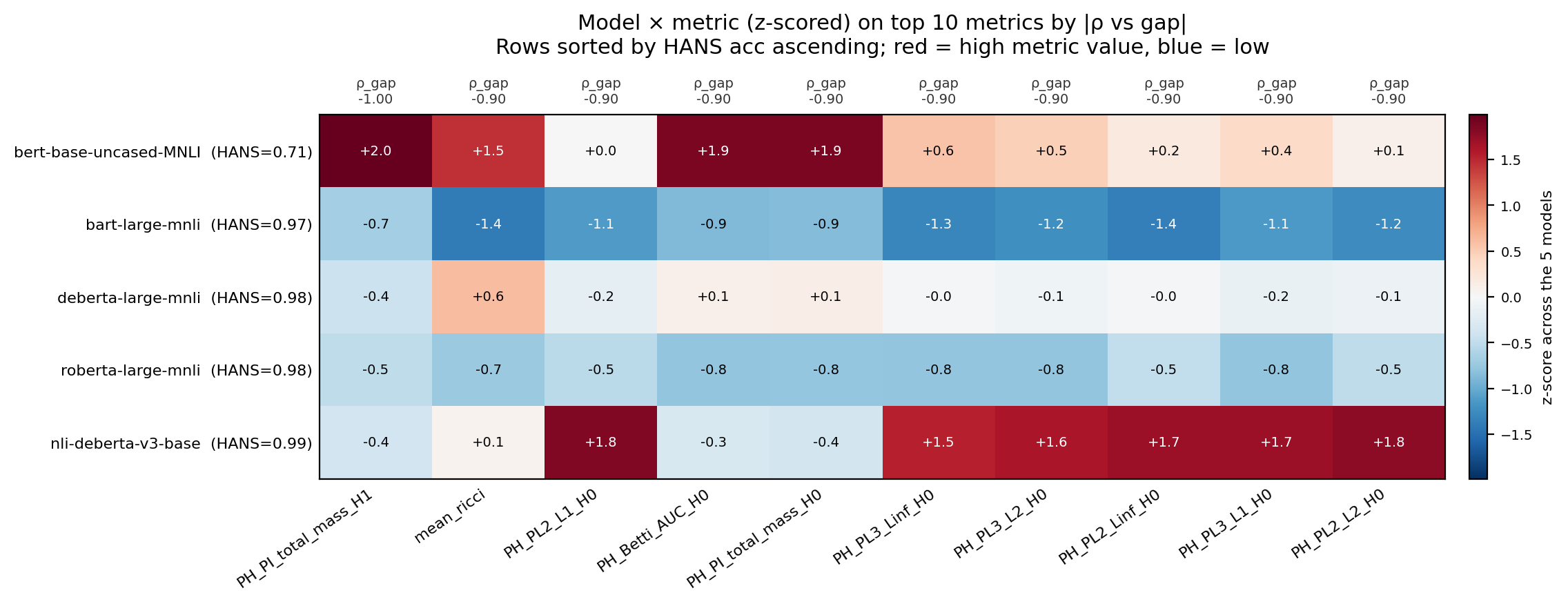}
    \caption{$5$ models $\times$ top-$10$ metrics heatmap, $z$-scored values, $\rho$-vs-gap headers.}
    \label{fig:nli_model_metric_heatmap}
\end{figure}

\begin{figure}[h!]
	\centering
	\includegraphics[width=0.9\textwidth]{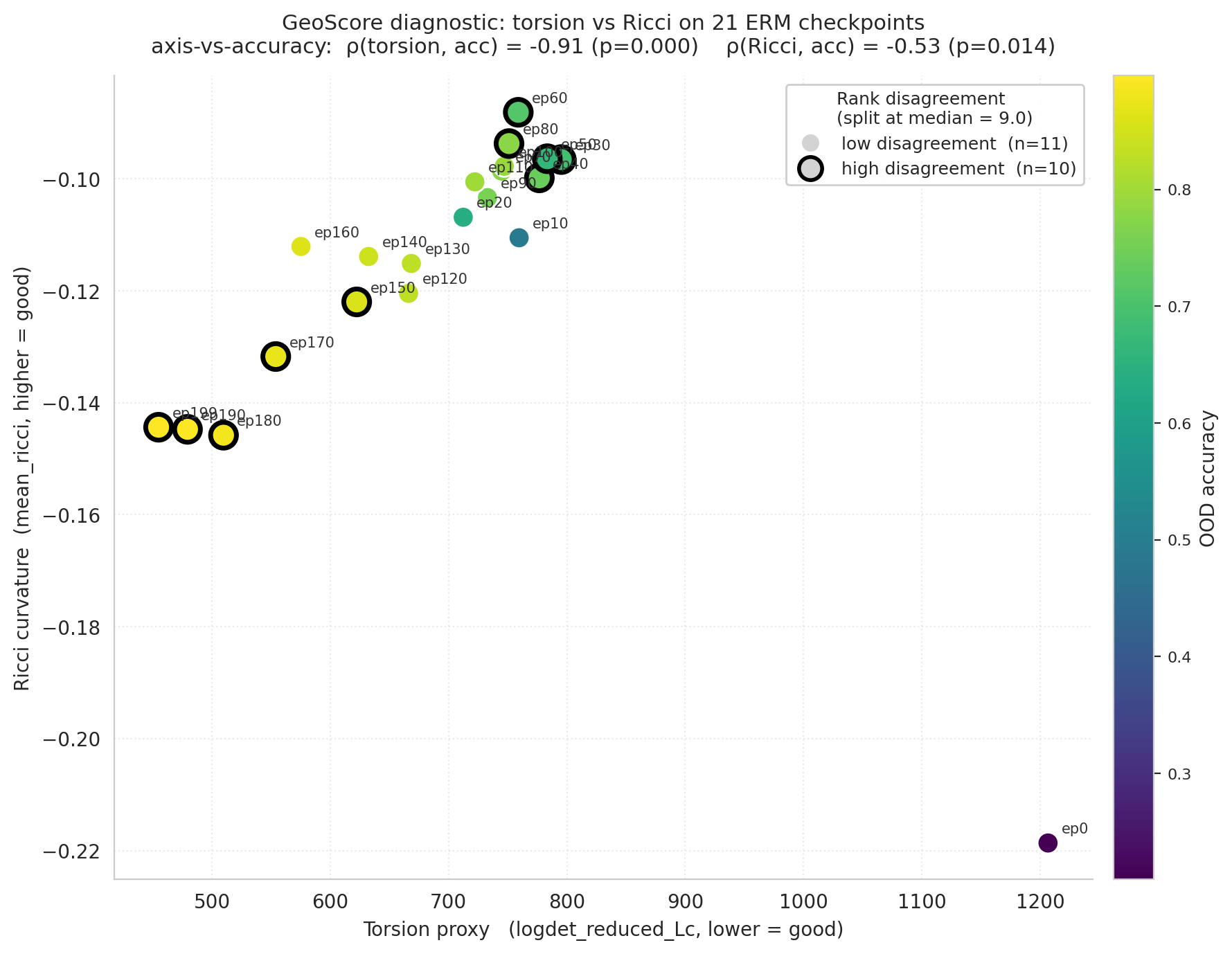}
	\caption{GeoScore diagnostic: disagreement between torsion and Ricci signals across checkpoints. Each point corresponds to a checkpoint (colored by OOD accuracy). While torsion (logdet proxy) exhibits a strong monotonic relationship with OOD accuracy ($\rho=-0.91$), Ricci curvature shows weaker and partially inconsistent alignment ($\rho=+0.53$). Points with high rank disagreement (black outlines) highlight regimes where local regularity (Ricci) and global complexity (torsion) provide conflicting signals. This discrepancy motivates combining geometric and topological descriptors within \textsc{TopoGeoScore} to resolve global--local inconsistencies.}
	\label{fig:plot_geoscore_diagnostic}
\end{figure}


\begin{figure}[h!]
    \centering
    \includegraphics[width=0.9\textwidth]{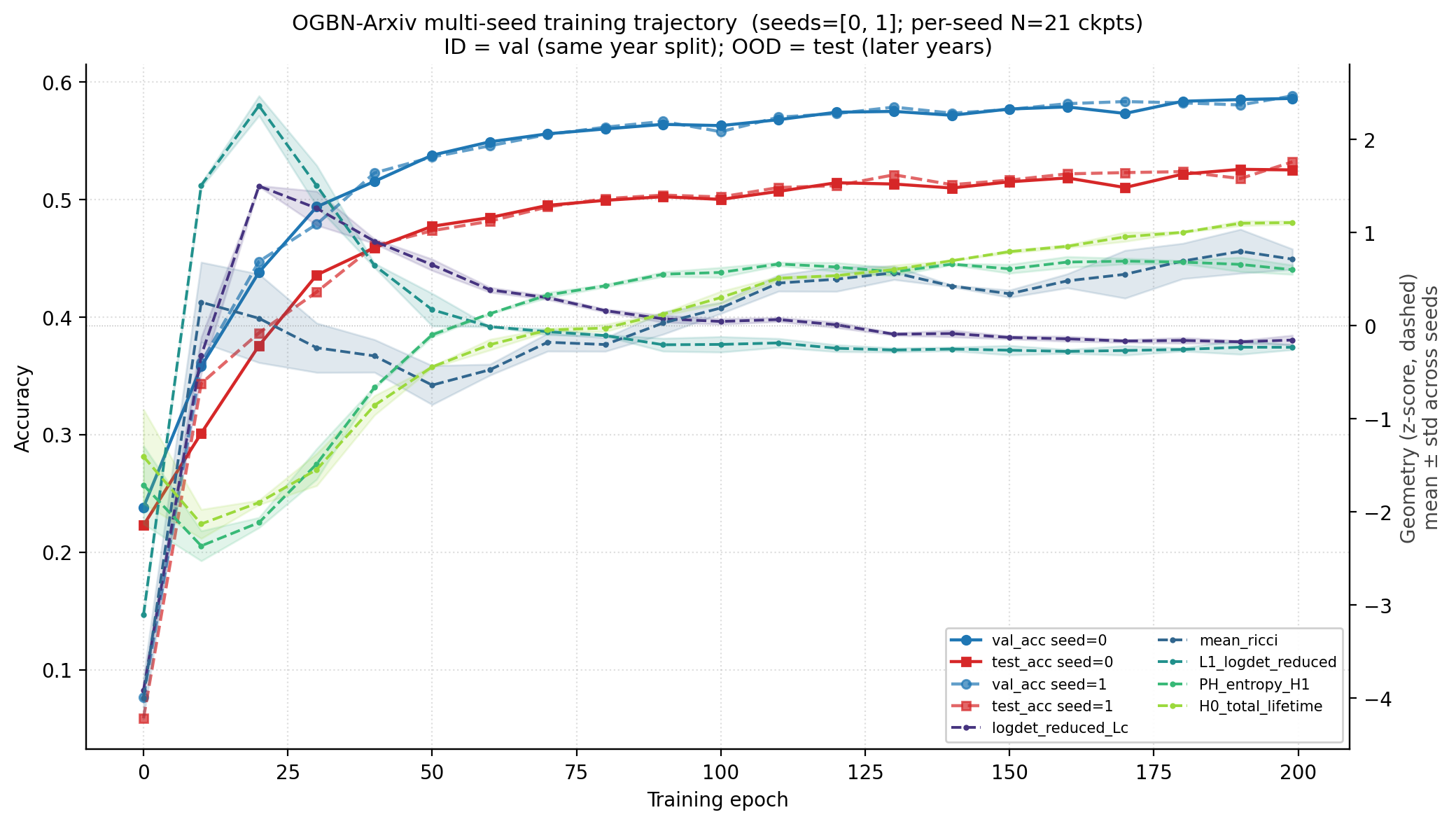}
    \caption{Multi-seed training trajectory. Two seeds are overlaid with different line styles; geometry curves are shown as mean $\pm$ standard-deviation bands across seeds. This tests whether the trajectory shape is stable across seeds.}
    \label{fig:plot_ogbn_v2_multiseed_trajectory}
\end{figure}

\begin{figure}[h!]
    \centering
    \includegraphics[width=0.9\textwidth]{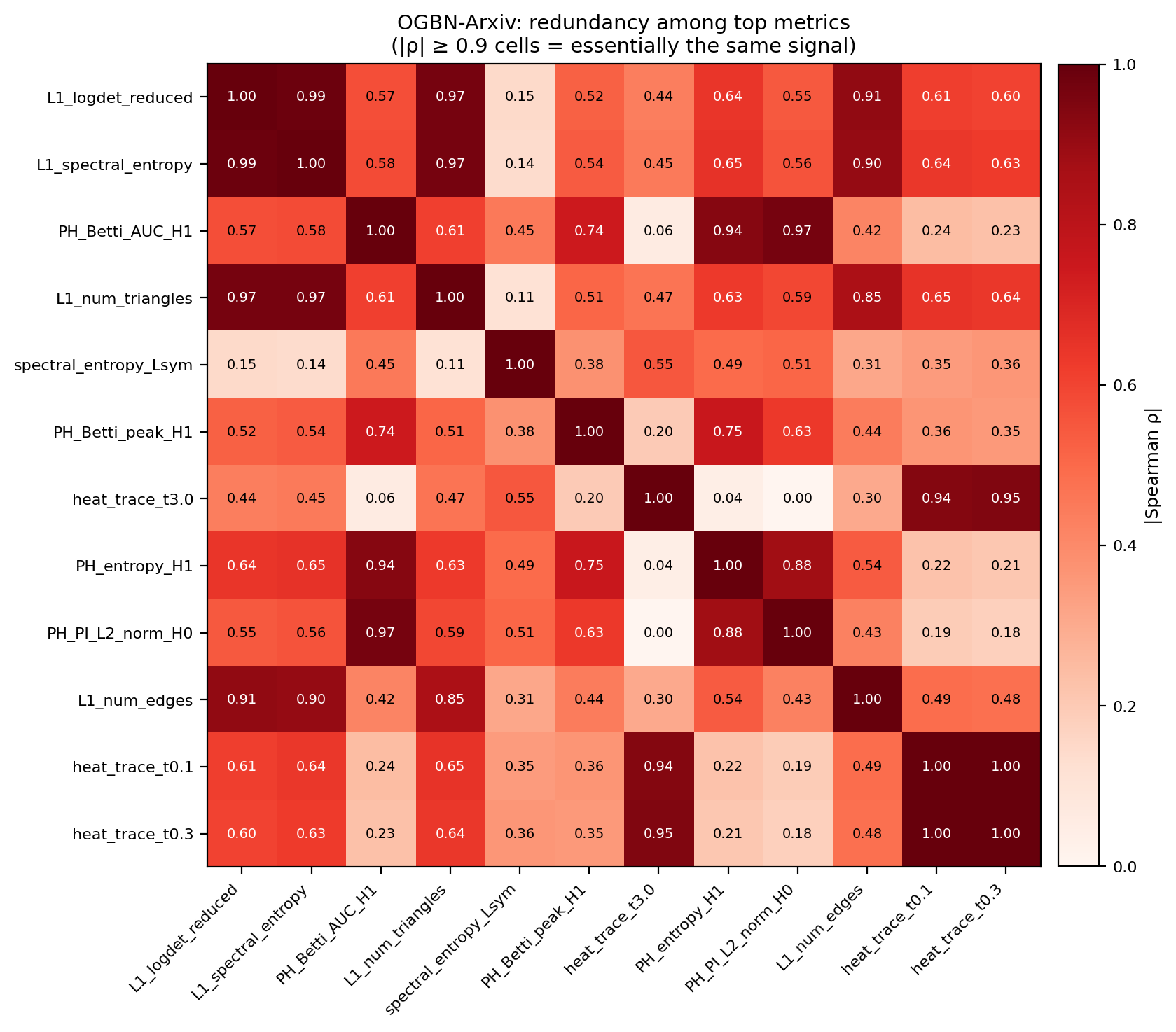}
    \caption{Matrix of absolute Spearman correlations among the top-12 metrics. Cells with $|\rho|\ge0.9$ indicate near-redundant metric signals.}
    \label{fig:plot_ogbn_v2_redundancy_heatmap}
\end{figure}

\begin{figure}[h!]
    \centering
    \includegraphics[width=0.9\textwidth]{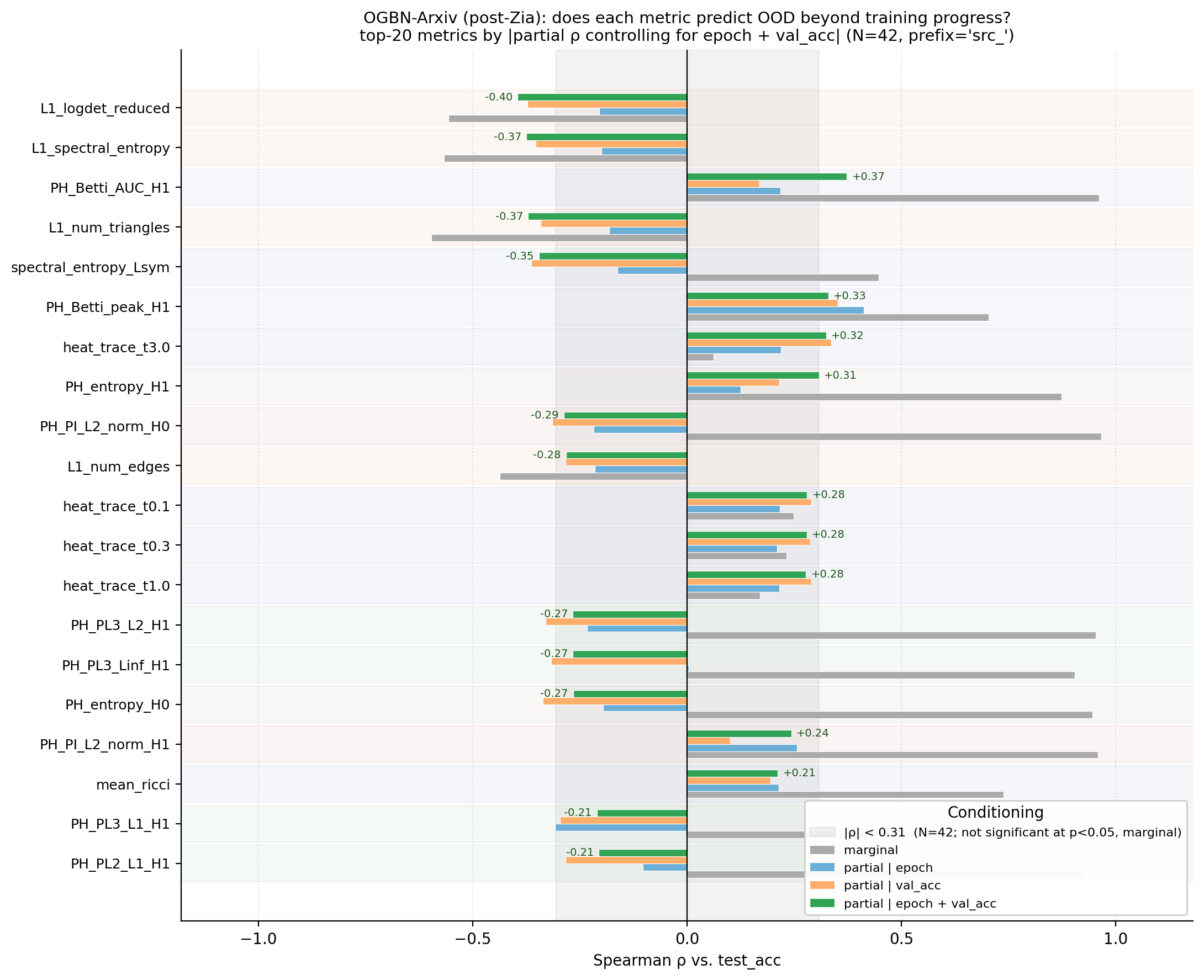}
    \caption{For each top-$20$ metric, four Spearman correlations are shown: marginal, partial controlling for epoch, partial controlling for validation accuracy, and partial controlling for both. Metrics are sorted by the strictest control condition.}
    \label{fig:plot_ogbn_v2_partial_correlations}
\end{figure}

\begin{figure}[h!]
    \centering
    \includegraphics[width=0.9\textwidth]{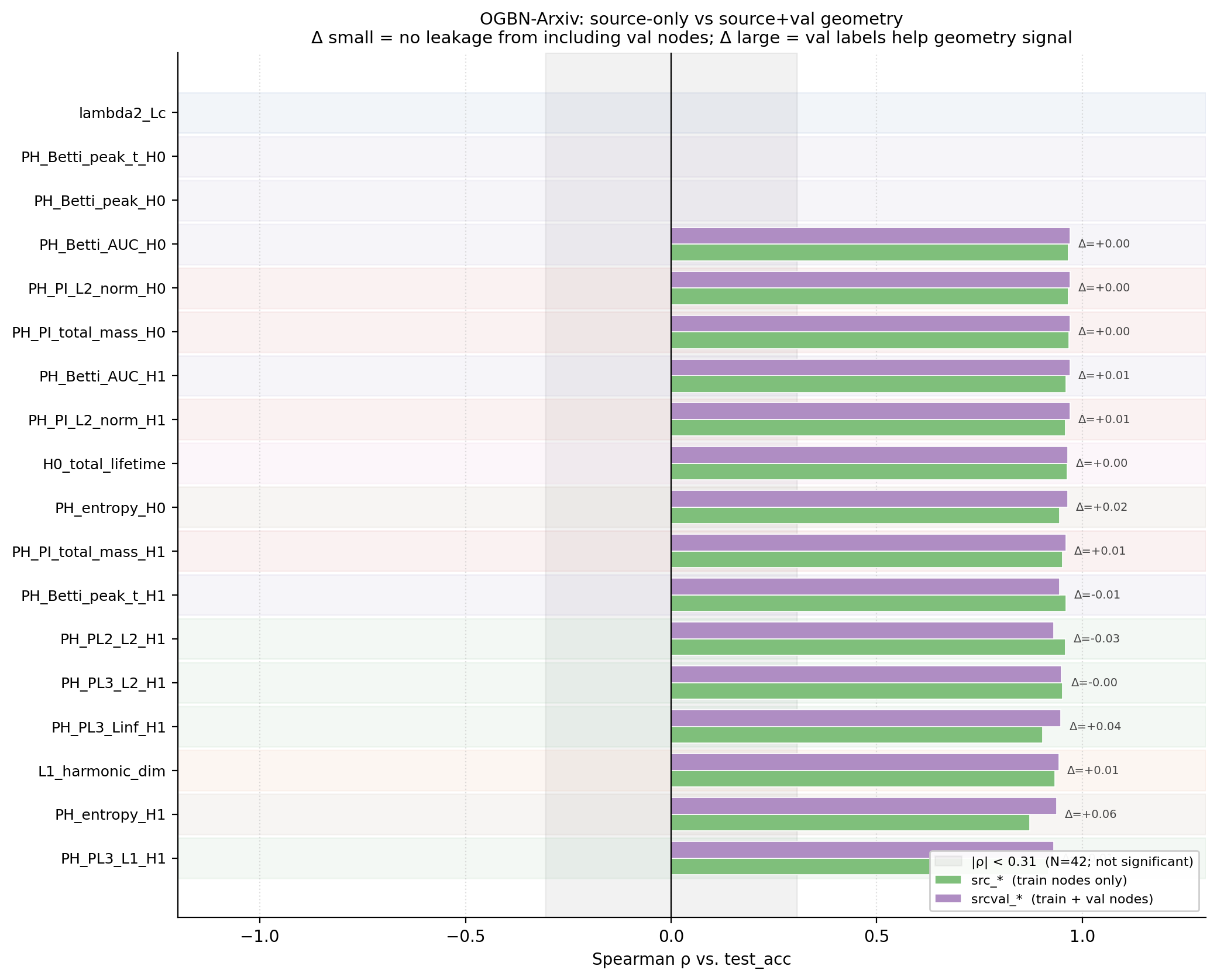}
    \caption{Per-metric paired bars: $\rho$ on \texttt{src\_*} (train nodes only) vs.\ $\rho$ on \texttt{srcval\_*} (train+val nodes), against \texttt{test\_acc}. $\Delta$ values are annotated. If $\Delta$ is small, including validation nodes does not materially change the source-only signal; if $\Delta$ is large, validation-node information affects the geometry signal.}
    \label{fig:plot_ogbn_v2_src_vs_srcval}
\end{figure}

\begin{figure}[h!]
    \centering
    \includegraphics[width=0.9\textwidth]{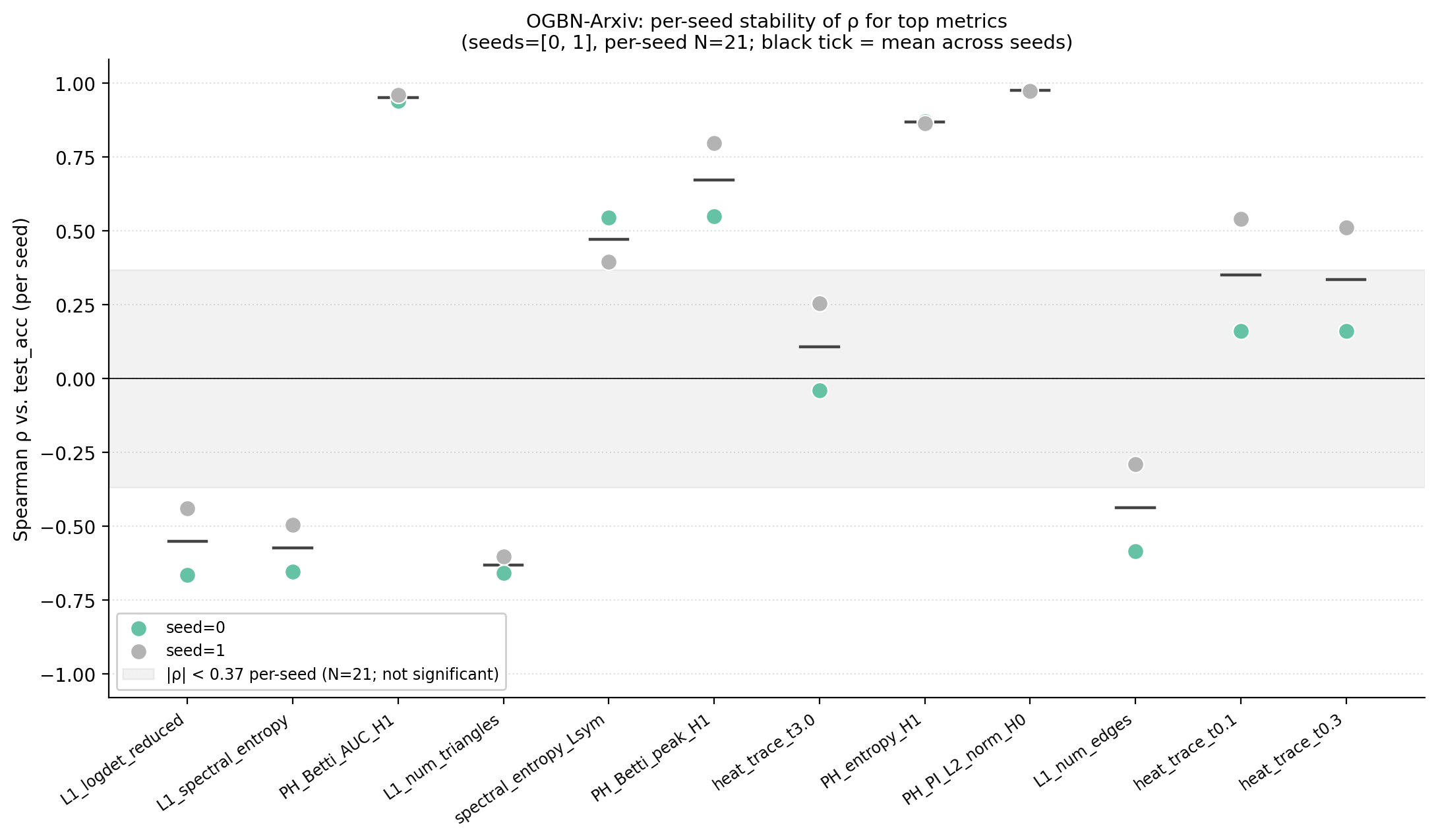}
    \caption{Per-metric scatter of per-seed Spearman correlations, with a mean line. The significance band is drawn for per-seed $N=21$ ($|\rho|>0.37$).}
    \label{fig:plot_ogbn_v2_seed_stability}
\end{figure}

\begin{figure}[h!]
	\centering
	\includegraphics[width=0.9\textwidth]{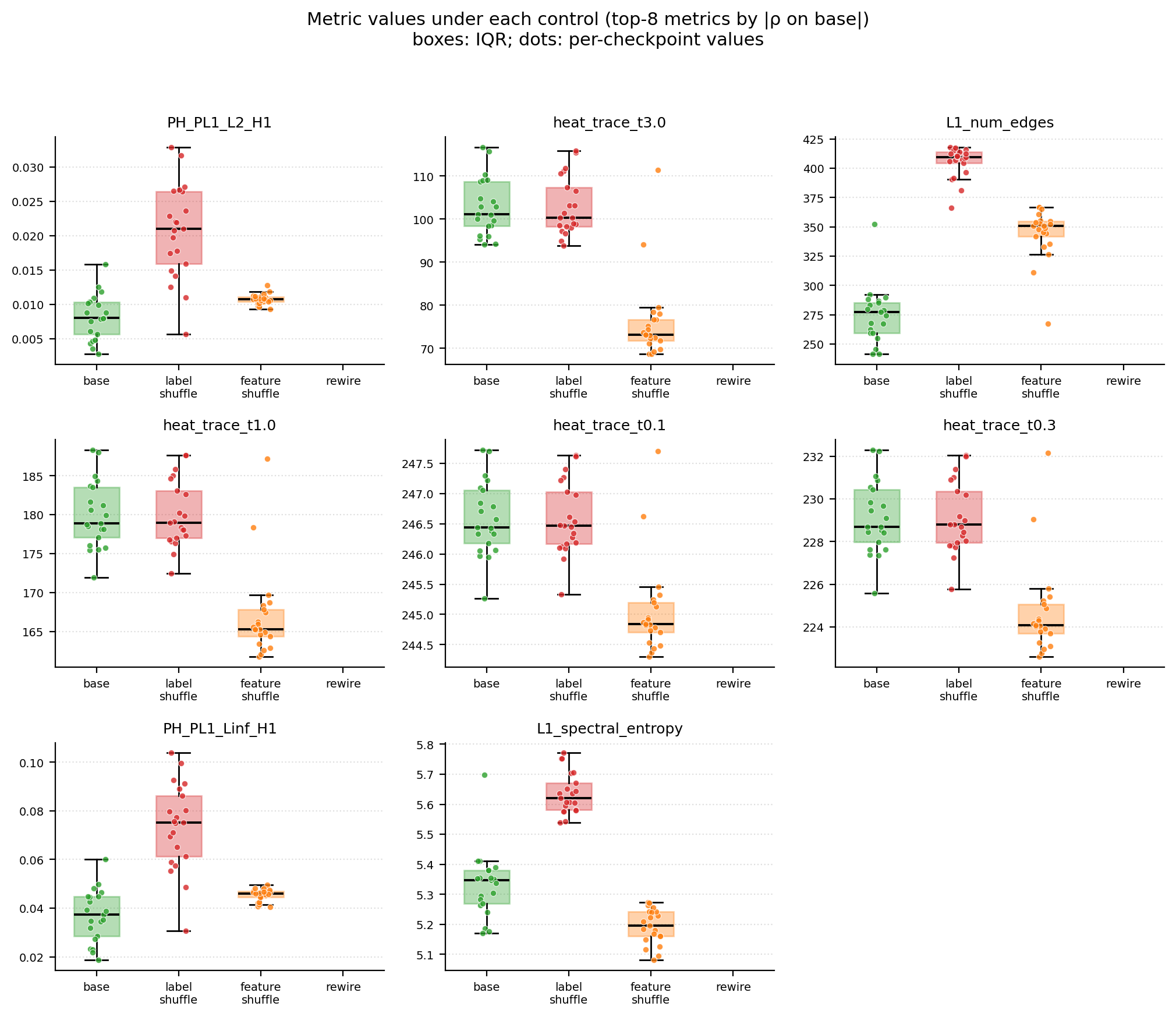}
	\caption{Metric values under structural controls for the top-$k$ metrics (ranked by $|\rho|$ on the base setting). Boxes show interquartile ranges and dots denote per-checkpoint values. Label shuffling perturbs metrics strongly, while feature shuffling generally reduces their magnitude, indicating sensitivity to label--structure and feature--structure alignment. Under full edge rewiring, several metrics become degenerate or undefined, leading to near-zero or missing values and highlighting their reliance on meaningful graph topology.}
	\label{fig:plot_controls_value_distributions}
\end{figure}

\begin{figure}[h!]
	\centering
	\includegraphics[width=0.9\textwidth]{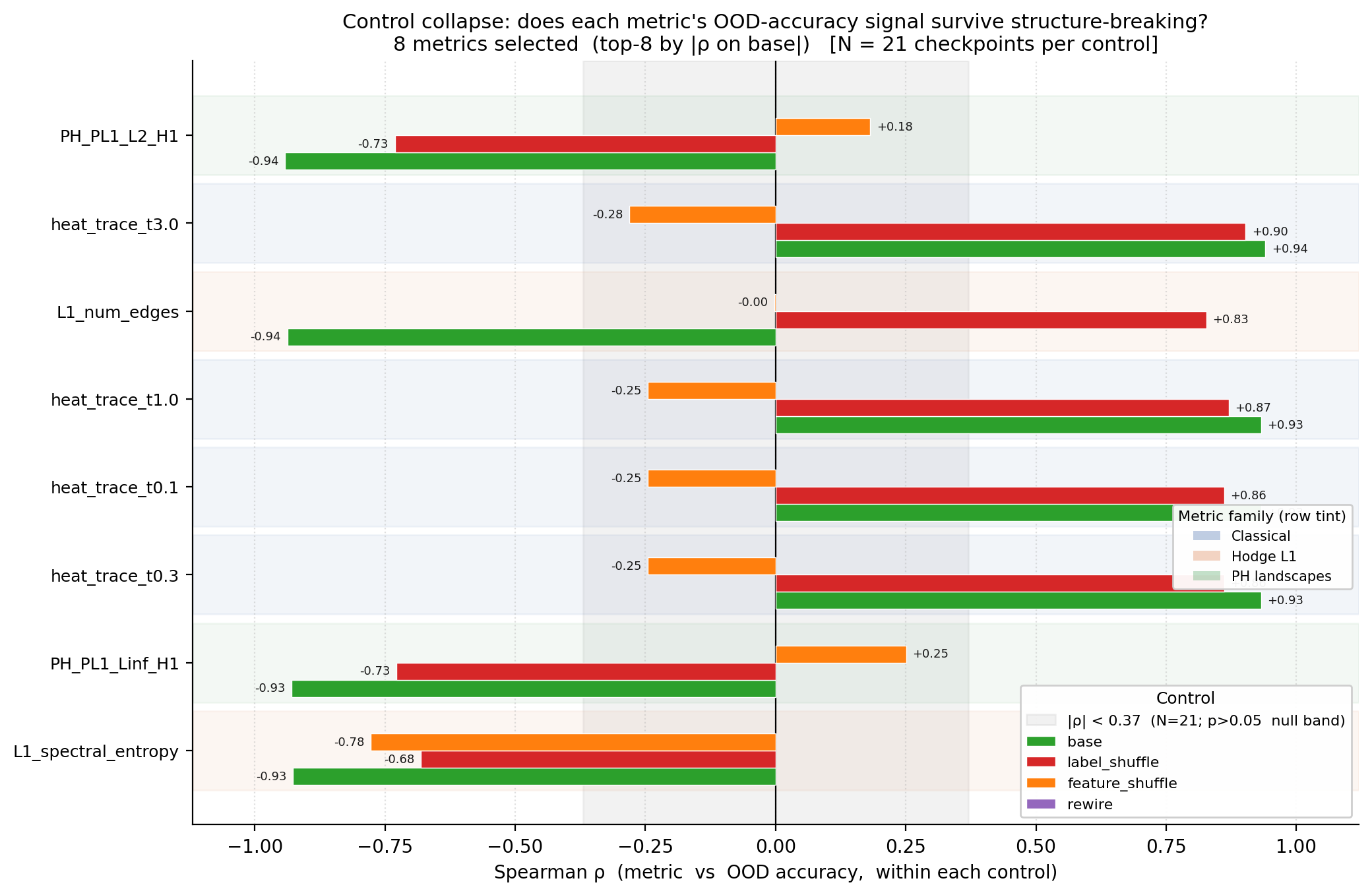}
	\caption{Control-collapse analysis of metric--OOD accuracy correlation (Spearman $\rho$) across structure-breaking interventions. Metrics are selected as the top-8 by $|\rho|$ in the base setting ($N=21$ checkpoints). While most metrics exhibit strong predictive power under the base condition, their behavior under controls reveals the source of this signal: correlations largely persist under label shuffling, indicating that metrics are not purely label-dependent, but collapse under feature shuffling and edge rewiring, demonstrating a strong reliance on feature geometry and graph topology. The shaded band indicates the non-significant region ($|\rho|<0.37$, $p>0.05$).}
	\label{fig:plot_controls_collapse}
\end{figure}

\begin{figure}[h!]
	\centering
	\includegraphics[width=0.9\textwidth]{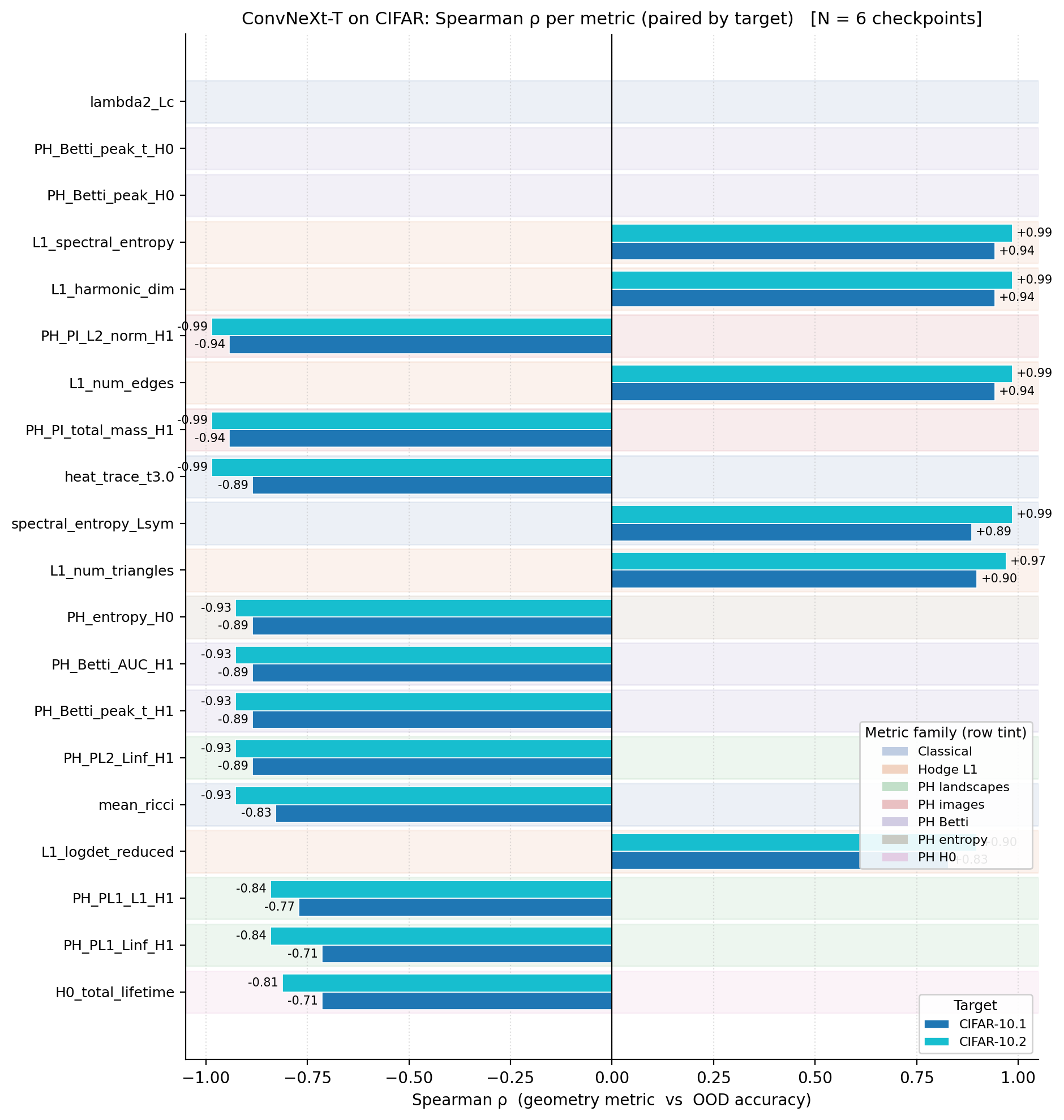}
	\caption{Spearman correlation ($\rho$) between geometry-based metrics and OOD accuracy for ConvNeXt-T on CIFAR, evaluated across two targets (CIFAR-10.1 and CIFAR-10.2) with $N=6$ checkpoints. Most metrics exhibit strong and consistent correlations across both targets, with stable sign and magnitude, indicating that the induced ranking signal is largely target-agnostic. Positive correlations are observed for spectral and structural metrics, while several PH-based metrics show strong negative correlations. These results support the view that geometry-based metrics capture intrinsic model quality rather than target-specific effects.}
	\label{fig:plot_convnext_rho_compare}
\end{figure}

\begin{figure}[h!]
	\centering
	\includegraphics[width=0.9\textwidth]{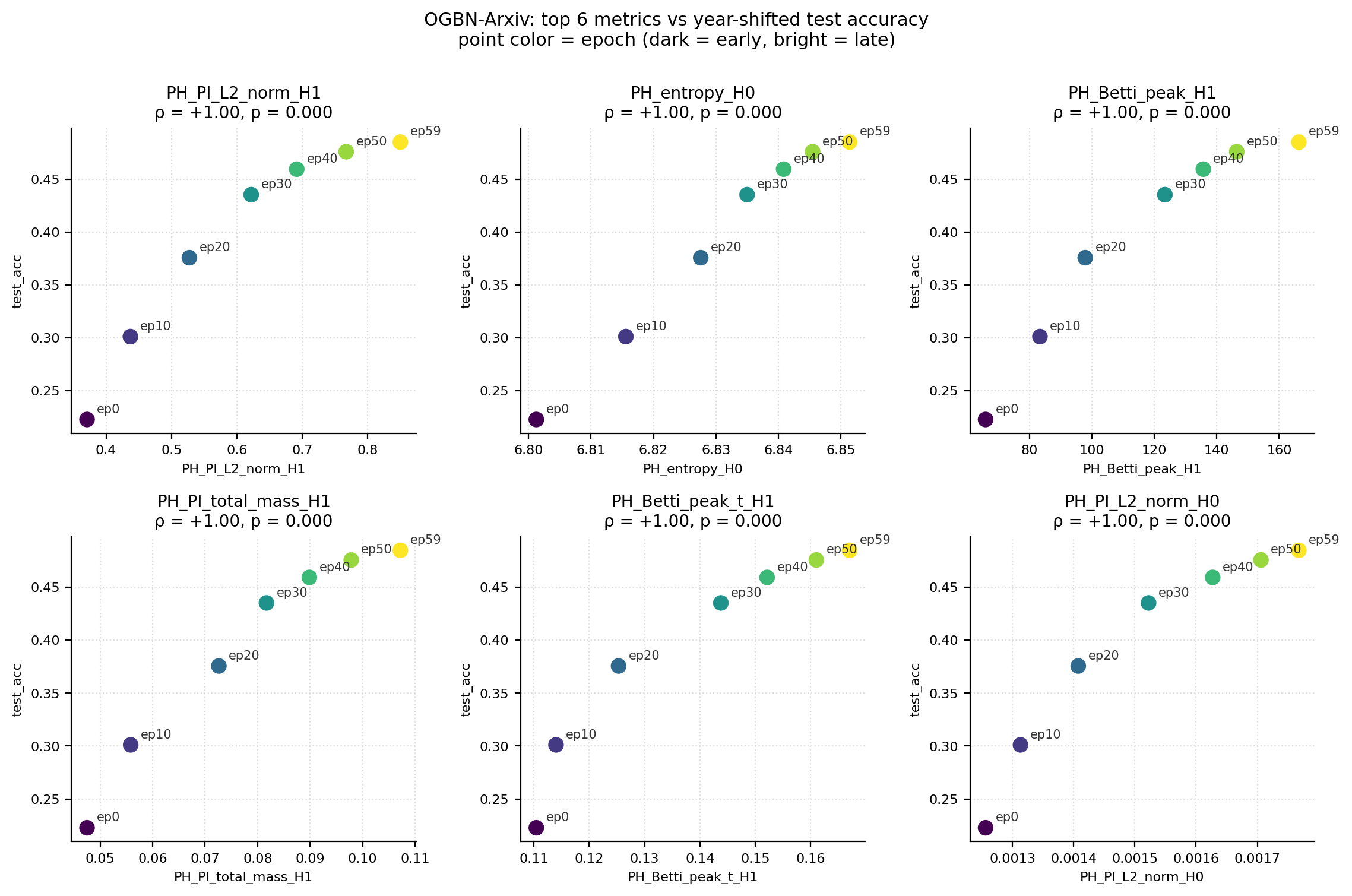}
	\caption{Per-epoch trajectory on OGBN-Arxiv (single seed). Each subplot shows a geometry-based metric versus year-shifted test accuracy, with points colored by training epoch (dark to bright). All metrics exhibit near-monotonic trends with very high Spearman correlation ($\rho \approx 0.97$--$0.98$), indicating that they track training progress and model quality within a run. As both metrics and accuracy co-evolve with epoch, these results primarily reflect shared temporal dynamics and serve as a consistency check rather than evidence for cross-model or cross-seed OOD selection.}
	\label{fig:plot_ogbn_scatter_grid}
\end{figure}

\begin{table}[h]
	\centering
	\small
	\caption{Selection performance across CIFAR-10 variants. $\rho$ denotes Spearman correlation between metric and OOD accuracy.}
	\label{tab:cifar_results}
	\begin{tabular}{llrrrrrr}
		\toprule
		Dataset & Score & $\rho$ & $p$ & $n$ & Sel. Acc & Oracle Acc & Gap \\
		\midrule
		\multicolumn{8}{l}{\textbf{CIFAR-10/10.1 (N=21 ckpts)}} \\
		& torsion\_only          & -0.913 & $7.85\mathrm{E}{-09}$ & 21 & 0.8975 & 0.8975 & 0.000 \\
		& ricci\_only            &  0.527 & $1.40\mathrm{E}{-02}$ & 21 & 0.7125 & 0.8975 & 0.185 \\
		& fixed\_GeoScore        & -0.588 & $5.28\mathrm{E}{-04}$ & 21 & 0.8625 & 0.8975 & 0.035 \\
		& topology\_aware\_fixed & -0.904 & $1.94\mathrm{E}{-08}$ & 21 & 0.8975 & 0.8975 & 0.000 \\
		& TopoGeoScore\_SSL      & -0.945 & $1.06\mathrm{E}{-10}$ & 21 & 0.8975 & 0.8975 & 0.000 \\
		\midrule
		\multicolumn{8}{l}{\textbf{CIFAR-10-C (N=105 ckpt-sev)}} \\
		& torsion\_only          & -0.629 & $6.57\mathrm{E}{-13}$ & 105 & 0.92536 & 0.92536 & 0.000 \\
		& ricci\_only            &  0.265 & $6.33\mathrm{E}{-05}$ & 105 & 0.78432 & 0.92536 & 0.141 \\
        & fixed\_GeoScore        & -0.415 & $1.10\mathrm{E}{-05}$ & 105 & 0.88842 & 0.92536 & 0.037 \\
		& topology\_aware\_fixed & -0.622 & $1.37\mathrm{E}{-12}$ & 105 & 0.92536 & 0.92536 & 0.000 \\
		& TopoGeoScore\_SSL      & -0.641 & $1.82\mathrm{E}{-13}$ & 105 & 0.92536 & 0.92536 & 0.000 \\
		\bottomrule
	\end{tabular}
\end{table}

\begin{table}[h]
	\centering
	\scriptsize
	\caption{Per-severity selection performance on CIFAR-10-C. $\rho$ denotes Spearman correlation between metric and OOD accuracy.}
	\label{tab:cifar_severity}
	\begin{tabular}{llrrrrrr}
		\toprule
		Severity & Score & $n$ & $\rho$ & $p$ & Sel. Acc & Oracle Acc & Gap \\
		\midrule
		
		\multicolumn{8}{l}{\textbf{Severity 1}} \\
		1 & torsion\_only          & 21 & -0.904 & $1.94\mathrm{E}{-08}$ & \textbf{0.92536} & 0.92536 & \textbf{0.000} \\
		1 & ricci\_only            & 21 &  0.516 & $1.67\mathrm{E}{-02}$ & 0.78432 & 0.92536 & 0.14104 \\
		1 & fixed\_GeoScore        & 21 & -0.571 & $6.81\mathrm{E}{-03}$ & 0.88842 & 0.92536 & 0.03694 \\
		1 & topology\_aware\_fixed & 21 & -0.914 & $6.84\mathrm{E}{-09}$ & \textbf{0.92536} & 0.92536 & \textbf{0.000} \\
		1 & TopoGeoScore\_SSL      & 21 & -0.956 & $1.48\mathrm{E}{-11}$ & \textbf{0.92536} & 0.92536 & \textbf{0.000} \\
		
		\midrule
		\multicolumn{8}{l}{\textbf{Severity 2}} \\
		2 & torsion\_only          & 21 & -0.912 & $8.99\mathrm{E}{-09}$ & 0.88442 & 0.88446 & \textbf{0.00004} \\
		2 & ricci\_only            & 21 &  0.540 & $1.15\mathrm{E}{-02}$ & 0.70798 & 0.88446 & 0.17648 \\
		2 & fixed\_GeoScore        & 21 & -0.534 & $1.27\mathrm{E}{-02}$ & 0.84116 & 0.88446 & 0.04330 \\
		2 & topology\_aware\_fixed & 21 & -0.890 & $6.15\mathrm{E}{-08}$ & 0.88442 & 0.88446 & \textbf{0.00004} \\
		2 & TopoGeoScore\_SSL      & 21 & -0.939 & $3.00\mathrm{E}{-10}$ & 0.88442 & 0.88446 & \textbf{0.00004} \\
		
		\midrule
		\multicolumn{8}{l}{\textbf{Severity 3}} \\
		3 & torsion\_only          & 21 & -0.921 & $3.32\mathrm{E}{-09}$ & 0.83946 & 0.84084 & \textbf{0.00138} \\
		3 & ricci\_only            & 21 &  0.553 & $9.28\mathrm{E}{-03}$ & 0.65912 & 0.84084 & 0.18172 \\
		3 & fixed\_GeoScore        & 21 & -0.522 & $1.52\mathrm{E}{-02}$ & 0.80098 & 0.84084 & 0.03986 \\
		3 & topology\_aware\_fixed & 21 & -0.894 & $4.95\mathrm{E}{-08}$ & 0.83946 & 0.84084 & \textbf{0.00138} \\
		3 & TopoGeoScore\_SSL      & 21 & -0.940 & $2.46\mathrm{E}{-10}$ & 0.83946 & 0.84084 & \textbf{0.00138} \\
		
		\midrule
		\multicolumn{8}{l}{\textbf{Severity 4}} \\
		4 & torsion\_only          & 21 & -0.934 & $6.38\mathrm{E}{-10}$ & 0.79974 & 0.80178 & \textbf{0.00204} \\
		4 & ricci\_only            & 21 &  0.558 & $8.51\mathrm{E}{-03}$ & 0.59806 & 0.80178 & 0.20372 \\
		4 & fixed\_GeoScore        & 21 & -0.529 & $1.38\mathrm{E}{-02}$ & 0.75978 & 0.80178 & 0.04200 \\
		4 & topology\_aware\_fixed & 21 & -0.881 & $1.40\mathrm{E}{-07}$ & 0.79974 & 0.80178 & \textbf{0.00204} \\
		4 & TopoGeoScore\_SSL      & 21 & -0.925 & $2.09\mathrm{E}{-09}$ & 0.79974 & 0.80178 & \textbf{0.00204} \\
		
		\midrule
		\multicolumn{8}{l}{\textbf{Severity 5}} \\
		5 & torsion\_only          & 21 & -0.905 & $1.72\mathrm{E}{-08}$ & 0.70546 & 0.70968 & \textbf{0.00422} \\
		5 & ricci\_only            & 21 &  0.518 & $1.61\mathrm{E}{-02}$ & 0.48626 & 0.70968 & 0.22342 \\
		5 & fixed\_GeoScore        & 21 & -0.505 & $1.95\mathrm{E}{-02}$ & 0.66994 & 0.70968 & 0.03974 \\
		5 & topology\_aware\_fixed & 21 & -0.834 & $2.67\mathrm{E}{-06}$ & 0.70546 & 0.70968 & \textbf{0.00422} \\
		5 & TopoGeoScore\_SSL      & 21 & -0.862 & $5.00\mathrm{E}{-07}$ & 0.70546 & 0.70968 & \textbf{0.00422} \\
		
		\bottomrule
	\end{tabular}
\end{table}

\subsection{Graph modality: OGBN-Arxiv}
\label{app:ogbn}

To assess whether the framework carries to a non-image, non-language 
modality with a natural temporal distribution shift, the same 
diagnostic pipeline is applied to a node-classification setting on 
OGBN-Arxiv \cite{hu2020ogb}. Two GCN seeds are trained, with 
checkpoints saved every $10$ epochs ($N\!=\!21$ per seed). For each 
checkpoint, geometry is computed under two label-free settings on the 
embedding cloud: \emph{source-only} (training nodes only) and 
\emph{source$+$val} (training and in-distribution validation nodes 
only); test nodes never enter the diagnostic. The OOD target is the 
year-shifted test accuracy. Because every metric tends to drift 
monotonically with training, marginal Spearman correlations are 
inflated by training progress alone.
Predictivity is therefore reported through \emph{partial} correlations 
that control for the training epoch and the in-distribution 
validation accuracy.

\begin{table}[h]
\centering
\small
\caption{\textbf{OGBN-Arxiv (graph modality), source-only geometry.} 
For each metric family, the strongest representative under the 
strictest partial-correlation control (epoch + val accuracy) is 
reported. Marginal correlations are large for nearly all metrics 
because of training-progress confounding; partial correlations 
isolate the residual source-only signal. Higher-order topological 
summaries retain a stable residual signal at the \emph{family} level; 
they should be interpreted as one persistent-homology family signal 
rather than as several independent metrics 
(per-metric redundancy heatmap in Fig.~\ref{fig:plot_ogbn_v2_redundancy_heatmap}).}
\label{tab:ogbn-main}
\begin{tabular}{@{}lcccccc@{}}
\toprule
& \multicolumn{4}{c}{\textbf{Source-only} (training nodes)} 
& \multicolumn{2}{c}{\textbf{Source$+$val} comparison} \\
\cmidrule(lr){2-5}\cmidrule(lr){6-7}
Metric family / representative 
& $\rho$ marg. 
& $\rho_{\mid \mathrm{epoch}}$ 
& $\rho_{\mid \mathrm{val}}$ 
& $\rho_{\mid \mathrm{epoch{+}val}}$ 
& $\rho_{\mid \mathrm{epoch{+}val}}$ 
& $\Delta$ \\
\midrule
Classical (heat\_trace\_t3.0)        & $+0.05$ & $+0.21$ & $+0.30$ & $+0.32$ & $+0.32$ & $+0.00$ \\
Hodge $L_1$ (L1\_logdet\_reduced)    & $-0.55$ & $-0.28$ & $-0.36$ & $-0.40$ & $-0.40$ & $+0.00$ \\
PH H$_0$ (PH\_PI\_L2\_norm\_H0)      & $+0.96$ & $-0.21$ & $-0.28$ & $-0.29$ & $-0.29$ & $+0.00$ \\
PH H$_1$ (PH\_Betti\_AUC\_H1)        & $+0.94$ & $+0.21$ & $+0.22$ & $+0.37$ & $+0.38$ & $+0.01$ \\
\bottomrule
\end{tabular}
\end{table}

\paragraph{Reading the table.}
Three observations follow. First, marginal correlations are uniformly 
large because all monotone-with-training metrics correlate with a 
test accuracy that itself rises with training. Second, after 
partialling out the training epoch and the in-distribution validation 
accuracy, the persistent-homology family retains the most stable 
residual source-only signal, while several classical Laplacian 
metrics show non-monotonic training trajectories on OGBN whose 
residual rank correlation is weaker. We do \emph{not} read this as evidence that PH is uniformly superior on graph data; rather, it indicates that the topology-mediator family retains the most stable residual signal under the strictest source-only control on this temporal shift. Individual within-family metrics should therefore be treated as redundant proxies for one signal rather than as independent discoveries. Third, the source-only and source$+$val columns agree closely ($\Delta$ small), supporting the target-label-free claim: including in-distribution validation nodes in the geometry does not materially change the ranking.

\paragraph{Scope of the OGBN claim.}
This setting reports two GCN seeds with $21$ checkpoints each. We 
present it as graph-modality \emph{evidence at the family level}, not 
as a stand-alone metric ranking. The full per-metric partial correlations, source-only versus source$+$val agreement, redundancy heatmap among top-ranked PH metrics, per-seed stability scatter, and multi-seed training trajectory are shown in Figs.~\ref{fig:plot_ogbn_v2_partial_correlations}--\ref{fig:plot_ogbn_v2_seed_stability} and Fig.~\ref{fig:plot_ogbn_v2_multiseed_trajectory}.


\clearpage

\section{Theoretical Results and Proofs}
\label{app:theory_proofs}

This appendix records the mathematical facts used and referenced throughout the text.

\subsection{Auxiliary results}

\begin{lemma}[weighted Matrix--Tree theorem]\label{lem:mtt} For
a connected weighted graph with combinatorial Laplacian $L$, every
principal cofactor of $L$ equals 
\[
\tau_{{\rm tree}}(G)=\sum_{T\in\mathsf{Tree}(G)}\prod_{e\in T}w_{e}.
\]
Moreover, 
\[
\det{}^{*}L=n\tau_{{\rm tree}}(G).
\]
\end{lemma}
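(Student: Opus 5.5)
I will prove the cofactor statement with the Cauchy--Binet formula applied to the oriented incidence factorisation of $L$. The determinant identity will then follow from the characteristic polynomial.

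\emph{Setup.} Fix an arbitrary orientation of each edge and let $B\in\mathbb{R}^{n\times m}$ be the vertex--edge incidence matrix. Column $e=(u,v)$ has $+1$ in row $u$, $-1$ in row $v$, and zeros elsewhere. Let $W=\mathrm{diag}(w_e)$ with $w_e>0$. A direct check on entries gives $L=BWB^\top$:
\begin{itemize}
\item the diagonal entries equal $\sum_{e\ni u}w_e=d_u$;
\item the off-diagonal entries equal $-w_{uv}$.
\end{itemize}

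\emph{Cofactor identity.} Delete row and column $r$ to obtain the principal submatrix $L_{(r)}=B_{(r)}WB_{(r)}^\top$, where $B_{(r)}$ is $B$ with row $r$ removed. Cauchy--Binet gives
\[
\det L_{(r)}=\sum_{S\subseteq E,\ |S|=n-1}\det\bigl(B_{(r)}[S]\bigr)^2\prod_{e\in S}w_e .
\]
The combinatorial core is the following lemma about a set $S$ of $n-1$ edges:
\begin{itemize}
\item if $S$ contains a cycle, the signed sum of the cycle's columns vanishes, so $\det B_{(r)}[S]=0$;
\item if $S$ is acyclic, then, having $n-1$ edges on $n$ vertices, it is a spanning tree.
\end{itemize}
For a spanning tree $T$ I will show $\det B_{(r)}[T]=\pm1$ by induction on $n$. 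The tree has a leaf $\ell\neq r$, since every tree on $n\ge2$ vertices has at least two leaves. Row $\ell$ of $B_{(r)}[T]$ has a single nonzero entry $\pm1$, in the column of the leaf edge. Expanding along that row reduces the claim to the tree $T\setminus\ell$ with root $r$ still present. Squaring removes the signs, so $\det L_{(r)}=\tau_{\mathrm{tree}}(G)$ for every $r$.

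\emph{Pseudo-determinant.} Write $\det(tI-L)=\sum_k(-1)^{n-k}e_{n-k}(L)\,t^{k}$. The coefficient of $t$ is $(-1)^{n-1}$ times the sum of all $(n-1)\times(n-1)$ principal minors, which by the first part equals $(-1)^{n-1}n\,\tau_{\mathrm{tree}}(G)$. In spectral form, $\det(tI-L)=t\prod_{i\ge2}(t-\lambda_i)$ since $\lambda_1=0$. Its coefficient of $t$ is $\prod_{i\ge2}(-\lambda_i)=(-1)^{n-1}\prod_{i\ge2}\lambda_i$.

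Connectedness gives $\ker L=\mathrm{span}(\mathbf 1)$: if $x^\top Lx=\sum_e w_e(x_u-x_v)^2=0$ with all $w_e>0$, then $x$ is constant on each component. Hence exactly one eigenvalue is zero and $\prod_{i\ge2}\lambda_i=\det{}^*L$. Comparing the two expressions for the coefficient of $t$ yields $\det{}^*L=n\,\tau_{\mathrm{tree}}(G)$.

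The main obstacle is the $\pm1$ determinant claim for tree submatrices together with the vanishing on cycle-containing edge sets. The remaining steps are standard algebra.
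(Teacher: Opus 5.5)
Your proof is correct, but it takes a different route from the paper. The paper does not prove the cofactor identity; it cites the weighted Matrix--Tree theorem. It then obtains $\det{}^{*}L=n\,\tau_{\mathrm{tree}}(G)$ from the rank-one adjugate formula $\operatorname{adj}(L)=\frac{\det{}^{*}L}{\|\mathbf 1\|_2^2}\mathbf 1\mathbf 1^\top$, reading off that each diagonal entry is both $\tau_{\mathrm{tree}}(G)$ and $\det{}^{*}L/n$.

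You instead prove the cofactor identity from scratch, via the incidence factorisation, Cauchy--Binet, and the zero/$\pm1$ dichotomy for $(n-1)$-edge minors of the reduced incidence matrix. You then replace the adjugate formula by the linear coefficient of the characteristic polynomial, $e_{n-1}(L)=\sum_r\det L_{(r)}$. Your argument is self-contained and in a sense more primitive. The paper states the adjugate formula without proof, and the standard way to fix its constant is $\operatorname{tr}\operatorname{adj}(L)=e_{n-1}(L)=\det{}^{*}L$, which is exactly your identity. Because you only need the sum of the cofactors, the same computation also gives Proposition~\ref{prop:normalized-mtt} in one line, since $\sum_i\tau_{\mathrm{tree}}(G)/\prod_{j\neq i}d_j=\mathrm{vol}(G)\,\tau_{\mathrm{tree}}(G)/\prod_j d_j$.

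The adjugate route, in exchange, gives entrywise information. Every cofactor of $L$, off-diagonal ones included, equals $\det{}^{*}L/n$. For the normalized Laplacian, each principal cofactor is individually $\det{}^{*}(\mathcal{L})\,d_i/\mathrm{vol}(G)$, which is how the paper structures its proof of that proposition.

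One step is worth making explicit. Going from $\ker L=\operatorname{span}(\mathbf 1)$ to ``exactly one zero eigenvalue'' uses the symmetry of $L$, so that algebraic and geometric multiplicities coincide.
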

\begin{proof}
The cofactor identity is the weighted Matrix--Tree theorem \citep{chung1997spectral,lyons2005determinantal}. Since
$L$ is symmetric positive semidefinite with nullspace $\operatorname{span}\{\mathbf{1}\}$,
its adjugate is 
\[
\operatorname{adj}(L)=\frac{\det{}^{*}L}{\|\mathbf{1}\|_{2}^{2}}\mathbf{1}\mathbf{1}^{\top}=\frac{\det{}^{*}L}{n}\mathbf{1}\mathbf{1}^{\top}.
\]
The diagonal entries of $\operatorname{adj}(L)$ are the principal
cofactors, each equal to $\tau_{{\rm tree}}(G)$. Hence $\det{}^{*}L/n=\tau_{{\rm tree}}(G)$. 
\end{proof}
\begin{lemma}[rank-one determinant update]\label{lem:det-update}
If $A$ is positive definite and $u$ is a vector, then 
\[
\det(A+uu^{\top})=\det(A)(1+u^{\top}A^{-1}u).
\]
The same identity applies to connected graph Laplacians after restricting
to $\mathbf{1}^{\perp}$, for update vectors in $\mathbf{1}^{\perp}$.
\end{lemma}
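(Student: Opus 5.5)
The first identity is the matrix determinant lemma, and I will prove it by factoring out $A$. Since $A$ is positive definite, it has a symmetric positive definite square root $A^{1/2}$. This gives
\[
A+uu^{\top}=A^{1/2}\bigl(I+vv^{\top}\bigr)A^{1/2},\qquad v=A^{-1/2}u.
\]
Multiplicativity of the determinant then gives $\det(A+uu^{\top})=\det(A)\det(I+vv^{\top})$.

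It remains to compute $\det(I+vv^{\top})$. The matrix $vv^{\top}$ has rank at most one, with eigenvalue $\|v\|_2^2$ on $\operatorname{span}\{v\}$ and eigenvalue $0$ on $v^{\perp}$. So $I+vv^{\top}$ has eigenvalues $1+\|v\|_2^2$ (once) and $1$ (with multiplicity $n-1$), and
\[
\det(I+vv^{\top})=1+\|v\|_2^2=1+u^{\top}A^{-1}u.
\]
If $v=0$ the formula holds trivially. A square root is not essential: one can write $A+uu^{\top}=A(I+A^{-1}uu^{\top})$ and use $\det(I+ab^{\top})=1+b^{\top}a$, which follows from the block-matrix identity for
$\begin{pmatrix} I & -a\\ b^{\top} & 1\end{pmatrix}$.

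For the Laplacian version, let $L$ be the Laplacian of a connected graph and let $u\in\mathbf{1}^{\perp}$. The approach is to pass to the restriction of $L$ to $\mathbf{1}^{\perp}$.
\begin{enumerate}
\item By connectedness, $\ker L=\operatorname{span}\{\mathbf{1}\}$. Since $L$ is symmetric, $\mathbf{1}^{\perp}$ is $L$-invariant, and $\tilde L:=L|_{\mathbf{1}^{\perp}}$ is positive definite. Its determinant (computed in any orthonormal basis of $\mathbf{1}^{\perp}$) equals $\det{}^{*}L$.
\item Since $u\perp\mathbf{1}$, we have $(uu^{\top})\mathbf{1}=0$ and $uu^{\top}$ maps $\mathbf{1}^{\perp}$ into itself. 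Hence $L+uu^{\top}$ still annihilates $\mathbf{1}$, preserves $\mathbf{1}^{\perp}$, and restricts there to $\tilde L+uu^{\top}$. This restriction is positive definite, so $\ker(L+uu^{\top})=\operatorname{span}\{\mathbf{1}\}$ exactly.
\item Pick an orthonormal basis $Q\in\mathbb{R}^{n\times(n-1)}$ of $\mathbf{1}^{\perp}$ and apply the first part to $A=Q^{\top}LQ$ and $Q^{\top}u$. This gives
\[
\det{}^{*}(L+uu^{\top})=\det{}^{*}(L)\bigl(1+u^{\top}L^{+}u\bigr),
\]
where $L^{+}=Q\,A^{-1}Q^{\top}$ is the Moore--Penrose pseudoinverse.
\end{enumerate}

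The step that needs the most care is the second one: the update must not change the null space. This is exactly where the hypothesis $u\in\mathbf{1}^{\perp}$ is used. If $u$ had a component along $\mathbf{1}$, the zero eigenvalue would be lifted and the pseudo-determinant identity would fail. With that checked, everything else is routine linear algebra.
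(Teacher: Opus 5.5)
Your proof is correct and follows the paper's route. You factor $A+uu^{\top}=A^{1/2}(I+vv^{\top})A^{1/2}$ with $\det(I+vv^{\top})=1+\|v\|_2^2$, and then restrict the connected Laplacian to $\mathbf{1}^{\perp}$, where it is positive definite. You are more careful than the paper: it leaves implicit your step~2, that $u\in\mathbf{1}^{\perp}$ keeps $\ker(L+uu^{\top})=\operatorname{span}\{\mathbf{1}\}$, so that the pseudo-determinant of the updated matrix equals the determinant of its restriction. The paper also never writes out the resulting identity $\det{}^{*}(L+uu^{\top})=\det{}^{*}(L)\,(1+u^{\top}L^{+}u)$.
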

\begin{proof}
For positive definite $A$, 
\[
A+uu^{\top}=A^{1/2}(I+A^{-1/2}uu^{\top}A^{-1/2})A^{1/2}.
\]
The middle factor has determinant $1+\|A^{-1/2}u\|_{2}^{2}=1+u^{\top}A^{-1}u$.
For a connected Laplacian, the restriction to $\mathbf{1}^{\perp}$ is positive
definite and the pseudo-determinant is the determinant of this restriction. 
\end{proof}
\begin{lemma}[finite Hodge decomposition]\label{lem:hodge-decomp}
For a finite oriented simplicial complex with boundary maps $B_{q}$,
the $q$th Hodge Laplacian 
\[
L_{q}=B_{q}^{\top}B_{q}+B_{q+1}B_{q+1}^{\top}
\]
satisfies 
\[
C_{q}=\operatorname{im} B_{q}^{\top}\oplus\ker L_{q}\oplus\operatorname{im} B_{q+1},\qquad\ker L_{q}\cong H_{q}(K;\mathbb{R}).
\]
\end{lemma}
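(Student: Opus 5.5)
The plan is to work entirely in finite-dimensional real linear algebra. Equip each chain space $C_q$ with the standard inner product in which oriented $q$-simplices form an orthonormal basis, so that $B_q^\top$ is the adjoint of $B_q:C_q\to C_{q-1}$. The only structural input is the chain-complex identity $B_qB_{q+1}=0$, which holds because the boundary of a boundary vanishes. This identity is the first thing to record; it follows by the usual sign cancellation when expanding the boundary of a boundary of a single oriented simplex.

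The first step identifies the kernel of the Hodge Laplacian. Since $L_q$ is a sum of two positive semidefinite operators, for any $x$
\[
\langle x,L_qx\rangle=\|B_qx\|^2+\|B_{q+1}^\top x\|^2 .
\]
Hence $x\in\ker L_q$ if and only if $B_qx=0$ and $B_{q+1}^\top x=0$. In other words,
\[
\ker L_q=\ker B_q\cap\ker B_{q+1}^\top=\ker B_q\cap(\operatorname{im}B_{q+1})^\perp .
\]

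The second step establishes the orthogonal decomposition. The subspaces $\operatorname{im}B_q^\top$ and $\operatorname{im}B_{q+1}$ are orthogonal, because
\[
\langle B_q^\top a,B_{q+1}b\rangle=\langle a,B_qB_{q+1}b\rangle=0 .
\]
Next, $(\operatorname{im}B_q^\top)^\perp=\ker B_q$. Since $B_qB_{q+1}=0$, we have $\operatorname{im}B_{q+1}\subseteq\ker B_q$. The orthogonal complement of $\operatorname{im}B_{q+1}$ inside $\ker B_q$ is exactly $\ker B_q\cap(\operatorname{im}B_{q+1})^\perp$, which is $\ker L_q$ by the first step. Putting these together gives
\[
C_q=\operatorname{im}B_q^\top\oplus\ker B_q=\operatorname{im}B_q^\top\oplus\ker L_q\oplus\operatorname{im}B_{q+1},
\]
with all three summands mutually orthogonal.

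The third step proves the homology isomorphism. Consider the composite map $\ker L_q\hookrightarrow\ker B_q\twoheadrightarrow\ker B_q/\operatorname{im}B_{q+1}=H_q(K;\mathbb{R})$. It is injective, because $\ker L_q\cap\operatorname{im}B_{q+1}=0$ by orthogonality. It is surjective, because $\ker B_q=\ker L_q\oplus\operatorname{im}B_{q+1}$, so every cycle is a harmonic chain plus a boundary. The map is therefore a linear isomorphism, and each homology class has a unique harmonic representative.

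No step is a real obstacle here; the argument is the classical one of Eckmann. The point needing the most care is the convention that $B_q^\top$ really is the adjoint, which requires the orthonormal simplex basis (or any fixed inner product with adjoints replacing transposes). One should also treat the edge cases: take $B_0=0$, so that $L_0$ is the graph Laplacian, and take $B_{q+1}=0$ above the top dimension. Finally, the isomorphism with simplicial homology over $\mathbb{R}$ is independent of the chosen orientations, since reorienting a simplex conjugates the $B_q$ by diagonal $\pm1$ matrices, which are orthogonal.
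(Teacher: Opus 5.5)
Your proposal is correct and follows essentially the same route as the paper's proof: the quadratic-form identity gives $\ker L_q=\ker B_q\cap\ker B_{q+1}^\top$, and splitting $C_q=\operatorname{im}B_q^\top\oplus\ker B_q$, then $\ker B_q$ orthogonally into $\operatorname{im}B_{q+1}$ and its complement, yields both the decomposition and the harmonic representatives. You also supply details the paper only sketches, namely the explicit injectivity and surjectivity of $\ker L_q\to\ker B_q/\operatorname{im}B_{q+1}$, and the correct attribution of $\operatorname{im}B_q^\top\perp\ker B_q$ to plain linear algebra rather than to $B_qB_{q+1}=0$.
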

\begin{proof}
The proof is standard finite-dimensional linear algebra \citep{eckmann1944harmonische,schaub2021signal}. The identities
$B_{q}B_{q+1}=0$ imply $\operatorname{im} B_{q+1}\subseteq\ker B_{q}$ and $\operatorname{im} B_{q}^{\top}\perp\ker B_{q}$.
Also 
\[
x^{\top}L_{q}x=\|B_{q}x\|_{2}^{2}+\|B_{q+1}^{\top}x\|_{2}^{2},
\]
so $\ker L_{q}=\ker B_{q}\cap\ker B_{q+1}^{\top}$. Decomposing $\ker B_{q}$
orthogonally into $\operatorname{im} B_{q+1}$ and its orthogonal complement gives
the harmonic representative of each homology class. 
\end{proof}
\begin{lemma}[Weyl perturbation and log pseudo-determinant]\label{lem:logdet-stability}
Let $A$ and $A'$ be symmetric positive semidefinite matrices with
the same kernel dimension. Suppose the positive eigenvalues of $A$
are at least $a>0$ and 
\[
\|A'-A\|_{2}\le a/2.
\]
Then, with $p=\operatorname{rank}(A)$, 
\[
\left|\log\det{}^{*}A'-\log\det{}^{*}A\right|\le\frac{2p}{a}\|A'-A\|_{2}.
\]
\end{lemma}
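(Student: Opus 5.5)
The plan is to reduce everything to an eigenvalue-by-eigenvalue comparison via Weyl's inequality, and then apply a mean-value bound for the logarithm on an interval bounded away from zero.

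\textbf{Step 1 (ordering and Weyl).} Write $\varepsilon=\|A'-A\|_2\le a/2$. Order the eigenvalues of both matrices decreasingly, $\lambda_1(A)\ge\cdots\ge\lambda_n(A)$ and $\lambda_1(A')\ge\cdots\ge\lambda_n(A')$. Weyl's inequality for symmetric matrices \citep{bhatia1997matrix} gives $|\lambda_i(A')-\lambda_i(A)|\le\varepsilon$ for every $i$.

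\textbf{Step 2 (matching the positive spectra).} This is the only delicate point. Since $p=\operatorname{rank}(A)$, the positive eigenvalues of $A$ are exactly $\lambda_1(A),\dots,\lambda_p(A)$, and each is at least $a$. For $i\le p$, Weyl then gives
\[
\lambda_i(A')\ge a-\varepsilon\ge a/2>0 .
\]
So $A'$ has at least $p$ positive eigenvalues. Because $A'$ is positive semidefinite with the same kernel dimension as $A$, it has exactly $p$ positive eigenvalues, namely $\lambda_1(A'),\dots,\lambda_p(A')$. Hence both pseudo-determinants are products over the same index set $\{1,\dots,p\}$. The equal-kernel hypothesis is what prevents a small positive eigenvalue of $A'$ from appearing where $A$ has a zero; such an eigenvalue would make $\log\det{}^{*}$ blow up.

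\textbf{Step 3 (log bound and summation).} For $x,y\ge a/2$, the mean value theorem gives
\[
|\log x-\log y|\le\frac{|x-y|}{\min(x,y)}\le\frac{2|x-y|}{a}.
\]
Apply this with $x=\lambda_i(A')$ and $y=\lambda_i(A)$ for each $i\le p$, and use the triangle inequality:
\[
\bigl|\log\det{}^{*}A'-\log\det{}^{*}A\bigr|\le\sum_{i=1}^{p}\bigl|\log\lambda_i(A')-\log\lambda_i(A)\bigr|\le\frac{2p}{a}\,\varepsilon .
\]
This is the claimed bound. All steps are routine; the main obstacle is the index matching in Step 2.
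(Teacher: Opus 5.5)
Your proposal is correct and follows essentially the same route as the paper's proof: Weyl's inequality on ordered eigenvalues, a lower bound of $a/2$ on the positive eigenvalues of $A'$, and the $2/a$-Lipschitz bound for $\log$ on $[a/2,\infty)$ summed over $p$ terms. Your Step 2 spells out something the paper leaves implicit: the equal-kernel-dimension hypothesis is what makes the positive spectra of $A$ and $A'$ occupy the same indices $1,\dots,p$.
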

\begin{proof}
Weyl's inequality \citep{bhatia1997matrix} gives $|\lambda_{i}(A')-\lambda_{i}(A)|\le\|A'-A\|_{2}$
after ordering eigenvalues. The positive eigenvalues of $A'$ are
at least $a/2$. Since $\log$ is $2/a$-Lipschitz on $[a/2,\infty)$,
\[
\sum_{i=1}^{p}|\log\lambda_{i}(A')-\log\lambda_{i}(A)|\le\frac{2p}{a}\|A'-A\|_{2}.
\]
\end{proof}
\begin{lemma}[Vietoris--Rips stability under distance-matrix perturbations]\label{lem:vr-stability}
Let two finite metric spaces on the same index set have distance matrices
$D$ and $D'$ satisfying 
\[
\|D-D'\|_{\infty}\le\eta.
\]
Then their Vietoris--Rips persistence modules are $\eta$-interleaved
when the filtration parameter is interpoint distance, and 
\[
d_{B}(\operatorname{Dgm}_{q}(D),\operatorname{Dgm}_{q}(D'))\le\eta
\]
for every homological degree $q$. \end{lemma}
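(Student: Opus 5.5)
The plan is to build an explicit $\eta$-interleaving between the two Vietoris--Rips filtrations and then invoke the algebraic stability theorem. Write $\mathrm{VR}_t(D)$ for the flag complex on the common index set whose simplices are the subsets $\sigma$ with $\max_{i,j\in\sigma} D_{ij}\le t$. The filtration value of a simplex is $f_D(\sigma)=\max_{i,j\in\sigma}D_{ij}$, and $f_{D'}$ is defined the same way. The key elementary observation is that, because the vertex set is shared and the hypothesis bounds every entry, $|f_D(\sigma)-f_{D'}(\sigma)|\le\max_{i,j\in\sigma}|D_{ij}-D'_{ij}|\le\eta$ for every simplex $\sigma$. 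In particular, if $\sigma\in\mathrm{VR}_t(D)$ then $\sigma\in\mathrm{VR}_{t+\eta}(D')$, and symmetrically with $D$ and $D'$ exchanged.

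These inclusions give simplicial maps, in fact identity inclusions on vertices, $\varphi_t:\mathrm{VR}_t(D)\hookrightarrow\mathrm{VR}_{t+\eta}(D')$ and $\psi_t:\mathrm{VR}_t(D')\hookrightarrow\mathrm{VR}_{t+\eta}(D)$. Each map sends a simplex to itself, so all the compatibility diagrams commute on the nose:
\begin{itemize}
\item $\varphi_t$ commutes with the internal structure maps $\mathrm{VR}_s\hookrightarrow\mathrm{VR}_t$, and likewise for $\psi_t$;
\item the composite $\psi_{t+\eta}\circ\varphi_t$ equals the inclusion $\mathrm{VR}_t(D)\hookrightarrow\mathrm{VR}_{t+2\eta}(D)$, and symmetrically for $\varphi\circ\psi$.
\end{itemize}
Applying $H_q(\,\cdot\,;\mathbb{R})$, which is functorial, transports these commuting diagrams to the persistence modules. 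The result is an $\eta$-interleaving of $H_q(\mathrm{VR}_\bullet(D))$ and $H_q(\mathrm{VR}_\bullet(D'))$ in every degree $q$.

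Next, the two modules are finite (finitely many simplices), so they are pointwise finite-dimensional with finitely many critical values, and hence q-tame. The algebraic stability theorem of \citet{chazal2009proximity} therefore applies and gives $d_B(\operatorname{Dgm}_q(D),\operatorname{Dgm}_q(D'))\le\eta$. An alternative route, via \citet{cohensteiner2007stability}, is to note that $f_D$ and $f_{D'}$ are monotone filtration functions on the same full simplicial complex (the full simplex on the index set) with $\|f_D-f_{D'}\|_\infty\le\eta$. The sublevel-set stability theorem for such functions on a fixed finite complex then gives the same bound directly. I would mention both routes and present the first.

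The main obstacle is bookkeeping rather than substance. We must make sure the filtration convention (closed sublevel sets $\le t$, parameter equal to the distance itself rather than half of it) matches the one in the statement, and that the index sets really coincide so the inclusions are literal identity maps. If the convention used half the diameter, the same argument would give $\eta/2$, which is still at most $\eta$. Points at infinity, namely the essential $H_0$ class and any essential classes of the full simplex, are matched to each other at cost $0$, since both filtrations eventually reach the full simplex. These classes therefore do not affect the bound.
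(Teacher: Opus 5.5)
Your proposal is correct and follows the same route as the paper. The simplex-level inclusions $\operatorname{VR}(D;t)\subseteq\operatorname{VR}(D';t+\eta)$ and their converse give an $\eta$-interleaving, and the algebraic stability theorem then yields $d_B\le\eta$; your extra bookkeeping (functoriality of $H_q$, q-tameness, the distance-versus-radius convention, and the essential $H_0$ class) only makes explicit what the paper's three-line proof leaves implicit.
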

\begin{proof}
For every scale $\alpha$, any simplex present in $\operatorname{VR}(D;\alpha)$
is present in $\operatorname{VR}(D';\alpha+\eta)$, and conversely. These inclusions
define an $\eta$-interleaving of the two filtrations. The algebraic
stability theorem for persistence modules \citep{cohensteiner2007stability,chazal2009proximity} then gives the bottleneck
bound. 
\end{proof}
\begin{lemma}[Rademacher uniform convergence template]\label{lem:rademacher-template}
Let $\mathcal{F}$ be a class of functions bounded in $[0,M]$. For
$N\ge1$ and $0<\delta<1$, with probability at least $1-\delta$ over
$N$ i.i.d. samples, 
\[
\sup_{f\in\mathcal{F}}\left(\mathrm{E} f-\frac{1}{N}\sum_{i=1}^{N}f(X_{i})\right)\le2\mathfrak{R}_{N}(\mathcal{F})+M\sqrt{\frac{\log(2/\delta)}{2N}}.
\]
For linear functions $x\mapsto w^{\top}x$ with $\|w\|_{2}\le R$
and $\|x\|_{2}\le B$, the Rademacher complexity is at most $RB/\sqrt{N}$.
\end{lemma}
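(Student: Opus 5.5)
The statement to prove is the Rademacher template (Lemma~\ref{lem:rademacher-template}). It has two parts: a uniform deviation bound, and a complexity bound for bounded linear classes. I will prove them in that order using the standard symmetrization route \citep{bartlett2002rademacher,shalev2014understanding}.

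\textbf{Step 1: concentration of the supremum.} Define
\[
\Phi(X_1,\dots,X_N)=\sup_{f\in\mathcal{F}}\Bigl(\mathrm{E} f-\tfrac1N\sum_{i}f(X_i)\Bigr).
\]
Since every $f$ takes values in $[0,M]$, replacing a single $X_i$ changes each inner average by at most $M/N$, and hence changes $\Phi$ by at most $M/N$. McDiarmid's bounded-differences inequality then gives, with probability at least $1-\delta$,
\[
\Phi\le \mathrm{E}\Phi+M\sqrt{\log(1/\delta)/(2N)}.
\]
This is already stronger than the stated $\log(2/\delta)$ term. The $\log(2/\delta)$ form also covers a two-sided use, or a version with the empirical Rademacher complexity obtained via a union bound, so the stated inequality follows a fortiori.

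\textbf{Step 2: symmetrization.} Introduce an independent ghost sample $X'_1,\dots,X'_N$ and write $\mathrm{E} f=\mathrm{E}'\frac1N\sum_i f(X'_i)$. Moving the supremum inside the expectation (Jensen) gives
\[
\mathrm{E}\Phi\le \mathrm{E}\sup_f \tfrac1N\sum_i\bigl(f(X'_i)-f(X_i)\bigr).
\]
Each difference $f(X'_i)-f(X_i)$ is symmetric in distribution, so inserting independent Rademacher signs $\sigma_i$ leaves the law unchanged. Splitting the supremum into two terms then yields $\mathrm{E}\Phi\le 2\mathfrak{R}_N(\mathcal{F})$. Combining this with Step 1 proves the first claim.

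\textbf{Step 3: linear class.} For $f_w(x)=w^\top x$ with $\|w\|_2\le R$, Cauchy--Schwarz gives
\[
\sup_w \tfrac1N\sum_i\sigma_i w^\top x_i=\tfrac RN\Bigl\|\sum_i\sigma_i x_i\Bigr\|_2 .
\]
By Jensen, $\mathrm{E}_\sigma\|\sum_i\sigma_i x_i\|_2\le(\mathrm{E}_\sigma\|\sum_i\sigma_ix_i\|_2^2)^{1/2}$. Since the cross terms vanish, $\mathrm{E}_\sigma\|\sum_i\sigma_ix_i\|_2^2=\sum_i\|x_i\|_2^2\le NB^2$. The complexity is therefore at most $RB\sqrt N/N=RB/\sqrt N$.

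\textbf{Main obstacle.} No step is hard. The only point needing care is Step 2, where exchanging the supremum and the expectation must be justified. This requires $\mathcal{F}$ to be suitably measurable, for example countable or separable, which holds for the finite-dimensional linear classes used later. One should also fix conventions for $\mathfrak{R}_N$: take the expected complexity, with or without an absolute value. The constant $2$ holds under either convention.
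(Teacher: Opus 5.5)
Your proposal is correct and takes essentially the same route as the paper. The paper only cites the standard McDiarmid-plus-symmetrization argument for the uniform deviation bound; you write it out, and you correctly note that McDiarmid already gives $\log(1/\delta)\le\log(2/\delta)$. For the linear class the paper uses the same Cauchy--Schwarz identity $\sup_{\|w\|_2\le R}\frac1N\sum_i\sigma_i w^\top x_i=\frac RN\|\sum_i\sigma_i x_i\|_2$ followed by Jensen, giving $(\sum_i\|x_i\|_2^2)^{1/2}\le B\sqrt N$.
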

\begin{proof}
This is the standard symmetrization and concentration bound for empirical
processes \citep{bartlett2002rademacher,shalev2014understanding}. The linear-class estimate follows from Cauchy--Schwarz:
\[
\mathrm{E}_{\sigma}\sup_{\|w\|\le R}\frac{1}{N}\sum_{i=1}^{N}\sigma_{i}w^{\top}x_{i}=\frac{R}{N}\mathrm{E}_{\sigma}\left\Vert \sum_{i=1}^{N}\sigma_{i}x_{i}\right\Vert _{2}\le\frac{R}{N}\left(\sum_{i=1}^{N}\|x_{i}\|_{2}^{2}\right)^{1/2}\le\frac{RB}{\sqrt{N}}.
\]
\end{proof}

\subsection{Distance-distortion assumptions}

For a point cloud $Z=\{z_{i}\}_{i=1}^{n}$, write 
\[
\delta_{ij}=\|z_{i}-z_{j}\|_{2}.
\]
Assume $1\le k<n-1$, so that each directed $k$NN list and its
non-self complement are nonempty. Let $N_{k}(i)$ be a fixed tie-broken
directed $k$NN list for $i$, computed among the points other than $i$.
Define the directed $k$NN margin 
\[
\Gamma_{k}(Z)=\min_{i}\left(\min_{\ell\notin N_{k}(i),\,\ell\neq i}\delta_{i\ell}-\max_{j\in N_{k}(i)}\delta_{ij}\right).
\]
The margin is positive exactly when every selected neighbor is separated
by a nonzero gap from every non-selected non-self point.

\begin{assumption}[small pairwise distortion]\label{ass:distortion}
A transformed cloud $Z'=\{z_{i}'\}_{i=1}^{n}$ satisfies 
\[
\max_{i,j}|\delta_{ij}'-\delta_{ij}|\le\eta,\qquad\delta_{ij}'=\|z_{i}'-z_{j}'\|_{2}.
\]
\end{assumption}

\begin{assumption}[self-tuning weights and nondegeneracy]\label{ass:nondegenerate}
The self-tuning bandwidths are 
\[
\sigma_{i}=\max_{j\in N_{k}(i)}\delta_{ij},
\]
and the edge weights are 
\[
w_{ij}=\exp\left(-\frac{\delta_{ij}^{2}}{\sigma_{i}\sigma_{j}}\right)\quad\text{on mutual \ensuremath{k}NN edges.}
\]
Assume 
\[
\sigma_{i}\ge\sigma_{\min}>0,\qquad0<d_{\min}\le d_{i}\le d_{\max}<\infty,
\]
for all vertices in the original graph. \end{assumption}

\subsection{Formal statements referenced in the main text}

\begin{proposition}[Normalized Matrix--Tree identity]\label{prop:normalized-mtt}
Let $G=(V,E,w)$ be a weighted graph whose vertices all have positive degrees, and let $\mathcal{L}$ be its normalized Laplacian. In the connected case, denote by
\[
\tau_{\mathrm{tree}}(G)=\sum_{T\in\mathsf{Tree}(G)}\prod_{e\in T}w_e
\]
the weighted spanning-tree partition function and let $\mathrm{vol}(G)=\sum_{v\in V}d_v$. Then
\[
\det{}^*(\mathcal{L})=\frac{\mathrm{vol}(G)\tau_{\mathrm{tree}}(G)}{\prod_{v\in V}d_v}.
\]
Equivalently,
\[
\log\det{}^*(\mathcal{L})=\log\tau_{\mathrm{tree}}(G)+\log\mathrm{vol}(G)-\sum_{v\in V}\log d_v .
\]
If $G$ has connected components $G_1,\ldots,G_r$, then the identity applies componentwise and
\[
\det{}^*(\mathcal{L}_G)=
\prod_{a=1}^{r}
\frac{\mathrm{vol}(G_a)\tau_{\mathrm{tree}}(G_a)}
{\prod_{v\in V(G_a)}d_v}.
\]
\end{proposition}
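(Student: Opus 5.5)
We prove the connected case via the weighted Matrix--Tree theorem (Lemma~\ref{lem:mtt}), applied to a bordered determinant, and then handle components. Write $L=D-W$ for the combinatorial Laplacian, so that $\mathcal{L}=D^{-1/2}LD^{-1/2}$. Since every $d_v>0$, $D^{1/2}$ is invertible. Hence $\ker\mathcal{L}=D^{1/2}\ker L$, which for connected $G$ is $\operatorname{span}\{u\}$ with $u=D^{1/2}\mathbf{1}$ and $\|u\|_2^2=\mathrm{vol}(G)$.

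The key device is the adjugate argument used in Lemma~\ref{lem:mtt}. For a symmetric PSD matrix $M$ with one-dimensional kernel spanned by a vector $u$,
\[
\operatorname{adj}(M)=\frac{\det{}^{*}M}{\|u\|_2^2}\,uu^{\top}.
\]
This holds because $\operatorname{adj}(M)M=M\operatorname{adj}(M)=\det(M)I=0$, which forces $\operatorname{adj}(M)$ to be a multiple of $uu^\top$. The constant is fixed by taking the trace: $\operatorname{tr}\operatorname{adj}(M)$ equals the $(n-1)$-th elementary symmetric function of the eigenvalues, which is $\det{}^*M$. Applying this with $M=\mathcal{L}$ gives $\operatorname{adj}(\mathcal{L})_{vv}=\det{}^*(\mathcal{L})\,d_v/\mathrm{vol}(G)$.

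We then compute that same principal cofactor directly. Deleting row and column $v$ from $\mathcal{L}$ gives $D_{-v}^{-1/2}L_{-v}D_{-v}^{-1/2}$, whose determinant is
\[
\frac{\det L_{-v}}{\prod_{w\ne v}d_w}=\frac{\tau_{\mathrm{tree}}(G)\,d_v}{\prod_{w}d_w},
\]
using the cofactor statement of Lemma~\ref{lem:mtt}. Equating the two expressions, the factor $d_v$ cancels, giving
\[
\det{}^*(\mathcal{L})=\frac{\mathrm{vol}(G)\,\tau_{\mathrm{tree}}(G)}{\prod_v d_v}.
\]
Taking logarithms, which is legitimate since all quantities are positive for a connected graph with positive weights, yields the stated log form.

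For the disconnected case, order the vertices by component. Then $\mathcal{L}_G$ is block diagonal with blocks $\mathcal{L}_{G_a}$. Each vertex degree computed in $G_a$ equals its degree in $G$, since there are no edges between components. The nonzero spectrum of $\mathcal{L}_G$ is the union of the nonzero spectra of the blocks, so $\det{}^*$ factorizes over components, and we apply the connected result to each. One caveat: an isolated vertex would have $d_v=0$, but this is excluded by the positive-degree hypothesis, so every component has at least one edge.

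The main obstacle is the adjugate identity with a non-constant kernel vector: justifying the rank-one form of $\operatorname{adj}(M)$ and pinning its constant through the trace. This is a direct generalization of the step in Lemma~\ref{lem:mtt}, where the kernel vector was $\mathbf{1}$ rather than $D^{1/2}\mathbf{1}$. Everything else is bookkeeping with the diagonal scaling by $D^{-1/2}$.
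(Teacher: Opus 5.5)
Your proposal is correct and follows essentially the same route as the paper: compute a principal cofactor of $\mathcal{L}$ once via the weighted Matrix--Tree theorem with the diagonal scaling $D^{-1/2}$, and once via the rank-one adjugate formula with kernel vector $D^{1/2}\mathbf{1}$ (so $\|D^{1/2}\mathbf{1}\|_2^2=\mathrm{vol}(G)$); then cancel $d_v$, and handle components by block diagonalization. Your justification of the adjugate formula (rows and columns of $\operatorname{adj}(\mathcal{L})$ lie in $\ker\mathcal{L}$, with the constant fixed by $\operatorname{tr}\operatorname{adj}(\mathcal{L})=e_{n-1}(\lambda)=\det{}^{*}\mathcal{L}$) usefully fills in a step that the paper only asserts.
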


\begin{proposition}[Hodge kernel and first homology]\label{prop:hodge-cycles}
Let $K$ be a finite oriented simplicial complex, and let $L_1=B_1^\top B_1+B_2B_2^\top$ be its unweighted first Hodge Laplacian, where $B_1:C_1(K;\mathbb{R})\to C_0(K;\mathbb{R})$ and $B_2:C_2(K;\mathbb{R})\to C_1(K;\mathbb{R})$ are the oriented boundary matrices. Then
\[
\ker L_1=\ker B_1\cap\ker B_2^\top\cong H_1(K;\mathbb{R}).
\]
Consequently, $\dim\ker L_1=\beta_1(K)$, and, for a graph flag complex truncated at dimension two,
\[
\beta_1(K)=|E|-|V|+\beta_0(K)-\operatorname{rank}(B_2).
\]
\end{proposition}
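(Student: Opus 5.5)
The plan is to reduce everything to Lemma~\ref{lem:hodge-decomp} with $q=1$, and then to a rank--nullity count.

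\textbf{Step 1: the kernel of $L_1$.} For any $x\in C_1(K;\mathbb{R})$ we have the quadratic form identity
\[
x^\top L_1 x=\|B_1x\|_2^2+\|B_2^\top x\|_2^2 .
\]
Since $L_1$ is symmetric positive semidefinite, $L_1x=0$ holds iff $x^\top L_1x=0$. This in turn holds iff $B_1x=0$ and $B_2^\top x=0$, which gives $\ker L_1=\ker B_1\cap\ker B_2^\top$.

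\textbf{Step 2: the isomorphism with $H_1$.} I will use $B_1B_2=0$, so that $\operatorname{im}B_2\subseteq\ker B_1$. Split $\ker B_1$ orthogonally as $\operatorname{im}B_2\oplus(\ker B_1\cap(\operatorname{im}B_2)^\perp)$. Since $(\operatorname{im}B_2)^\perp=\ker B_2^\top$, the second summand is exactly $\ker L_1$. The quotient map $\ker B_1\to\ker B_1/\operatorname{im}B_2=H_1(K;\mathbb{R})$ therefore restricts to a linear isomorphism on $\ker L_1$. It is injective because $\ker L_1\cap\operatorname{im}B_2=0$, and surjective by the orthogonal decomposition. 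This is precisely the statement of Lemma~\ref{lem:hodge-decomp}, which I would simply invoke. Taking dimensions gives $\dim\ker L_1=\beta_1(K)$.

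\textbf{Step 3: the Betti-number formula.} I will compute
\[
\beta_1=\dim\ker B_1-\operatorname{rank}B_2 .
\]
By rank--nullity on $B_1:C_1\to C_0$,
\[
\dim\ker B_1=|E|-\operatorname{rank}B_1 .
\]
It remains to show $\operatorname{rank}B_1=|V|-\beta_0(K)$, i.e.\ that $H_0=\operatorname{coker}B_1$ has dimension equal to the number of connected components. For this I will argue that $\ker B_1^\top$ (the vertex functions constant along every edge) is spanned by the indicator vectors of connected components. These indicators are linearly independent, so $\dim\ker B_1^\top=\beta_0$, and hence
\[
\operatorname{rank}B_1=\operatorname{rank}B_1^\top=|V|-\beta_0 .
\]
Substituting gives
\[
\beta_1(K)=|E|-|V|+\beta_0(K)-\operatorname{rank}(B_2).
\]
For the two-dimensional truncated flag complex, $C_1$ and $C_2$ are spanned by the graph edges and the triangles (3-cliques), so the formula is stated in terms of $|E|$ and $B_2$ on triangles.

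\textbf{Main obstacle.} No step is deep. The only point needing care is the identification $(\operatorname{im}B_2)^\perp=\ker B_2^\top$ together with the orientation conventions. These conventions must make $B_1B_2=0$ hold for the chosen oriented boundary matrices; I would check this directly on a single oriented triangle $[a,b,c]$, whose boundary is $[b,c]-[a,c]+[a,b]$ and which maps to zero under $B_1$.
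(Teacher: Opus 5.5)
Your proposal is correct and follows the paper's proof essentially step for step. You use the quadratic-form identity to get $\ker L_1=\ker B_1\cap\ker B_2^\top$, split cycles orthogonally into boundaries plus harmonic representatives to get the isomorphism with $H_1(K;\mathbb{R})$, and apply rank--nullity for the Betti-number formula; your one addition, justifying $\operatorname{rank}B_1=|V|-\beta_0(K)$ by showing that the component indicator vectors span $\ker B_1^\top$, fills in a step the paper only asserts.
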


\begin{theorem}[positive-view invariance and stability]\label{thm:view-stability}
Let $Z$ and $Z'$ satisfy Assumptions~\ref{ass:distortion}--\ref{ass:nondegenerate}. Suppose
\[
2\eta<\Gamma_k(Z),\qquad \eta\le\sigma_{\min}/2,\qquad kC_w\eta\le d_{\min}/2,
\]
where, with $D_\eta=\max_{(i,j)\in E}\delta_{ij}+\eta$,
\[
C_w=\frac{8D_\eta}{\sigma_{\min}^2}+\frac{16D_\eta^2}{\sigma_{\min}^3}.
\]
Then:
\begin{enumerate}[label=(\alph*)]
\item Every directed $k$NN list is unchanged, hence the mutual $k$NN edge set is unchanged.
\item The self-tuning weights satisfy $\max_{(i,j)\in E}|w_{ij}'-w_{ij}|\le C_w\eta$.
\item The normalized graph Laplacians satisfy
\[
\|\mathcal{L}(Z')-\mathcal{L}(Z)\|_2\le C_{\mathcal{L}}\eta,\qquad
C_{\mathcal{L}}=kC_w\left(\frac{2}{d_{\min}}+\frac{(2+\sqrt{2})d_{\max}}{d_{\min}^2}\right).
\]
\item If $a_0>0$ is the smallest positive eigenvalue of $\mathcal{L}(Z)$, $p_0=\operatorname{rank}\mathcal{L}(Z)$, and $C_{\mathcal{L}}\eta\le a_0/2$, then
\[
\bigl|\log\det{}^{*}\mathcal{L}(Z')-\log\det{}^{*}\mathcal{L}(Z)\bigr|
\le \frac{2p_0C_{\mathcal{L}}}{a_0}\eta .
\]
\item If the Hodge complex is the unweighted flag complex of the mutual $k$NN graph, then the flag complex, $L_1$, $\beta_1$, and the unweighted Hodge spectrum are exactly unchanged. More generally, if a weighted Hodge construction has the same kernel dimension for $Z$ and $Z'$ and satisfies $\|L_1(Z')-L_1(Z)\|_2\le C_1\eta$, and if the smallest positive eigenvalue of $L_1(Z)$ is $a_1>0$ with $p_1=\operatorname{rank}L_1(Z)$ and $C_1\eta\le a_1/2$, then
\[
\bigl|\log\det{}^{*}L_1(Z')-\log\det{}^{*}L_1(Z)\bigr|
\le \frac{2p_1C_1}{a_1}\eta .
\]
\item For Vietoris--Rips persistence built from the pairwise distance matrices,
\[
d_B\!\left(\operatorname{Dgm}_q(Z),\operatorname{Dgm}_q(Z')\right)\le \eta,\qquad q=0,1,
\]
when the filtration parameter is interpoint distance. If the filtration parameter is a radius rather than an interpoint-distance threshold, the bound is rescaled by the corresponding deterministic change of parameter. Any vectorization map $V$ that is $L_V$-Lipschitz with respect to bottleneck distance satisfies
\[
\|V(\operatorname{Dgm}_q(Z))-V(\operatorname{Dgm}_q(Z'))\|\le L_V\eta .
\]
\end{enumerate}
\end{theorem}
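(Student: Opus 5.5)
The proof proceeds in the order (a)--(f), each part feeding the next.

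For (a), fix $i$, a selected neighbour $j\in N_k(i)$, and a non-selected $\ell\notin N_k(i)$, $\ell\neq i$. Assumption~\ref{ass:distortion} gives
\[
\delta'_{i\ell}-\delta'_{ij}\ge(\delta_{i\ell}-\delta_{ij})-2\eta\ge\Gamma_k(Z)-2\eta>0 .
\]
So every selected neighbour remains strictly closer than every non-selected point, and the directed list $N_k(i)$ is the unique $k$NN list of $i$ in $Z'$. Hence the mutual edge set $E$ is identical.

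For (b), we use that $N_k(i)$ is fixed. Then $\sigma_i'=\max_{j\in N_k(i)}\delta'_{ij}$, so $|\sigma_i'-\sigma_i|\le\eta$ and $\sigma_i'\ge\sigma_{\min}/2$ by $\eta\le\sigma_{\min}/2$. Write $w_{ij}=\exp(-g)$ with $g=\delta_{ij}^2/(\sigma_i\sigma_j)$. Since $\exp(-\cdot)$ is $1$-Lipschitz on $[0,\infty)$, it suffices to bound $|g'-g|$ by the mean value theorem in the three variables $(\delta_{ij},\sigma_i,\sigma_j)$ on the segment between the two configurations.
\begin{itemize}
\item The partial in $\delta$ is $2\delta/(\sigma_i\sigma_j)\le 2D_\eta/(\sigma_{\min}/2)^2=8D_\eta/\sigma_{\min}^2$.
\item Each partial in $\sigma$ is $\delta^2/(\sigma_i^2\sigma_j)\le 8D_\eta^2/\sigma_{\min}^3$; with two such terms this gives $16D_\eta^2/\sigma_{\min}^3$.
\end{itemize}
Multiplying by $\eta$ yields $C_w\eta$. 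Here $D_\eta$ bounds $\delta$ along the segment, which is why it carries the $+\eta$.

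For (c), write $\Delta W=W'-W$; it is supported on $E$ with entries at most $C_w\eta$. Each row has at most $k$ nonzeros, since mutual edges are contained in directed lists. By Schur's test (row/column sums), $\|\Delta W\|_2\le kC_w\eta$, and also $|d_i'-d_i|\le kC_w\eta\le d_{\min}/2$, so $d'_i\ge d_{\min}/2$. Now decompose, with $M=D^{-1/2}$:
\[
M'W'M'-MWM=M'\,\Delta W\,M'+(M'-M)WM'+MW(M'-M).
\]
Bound each piece using $\|W\|_2\le d_{\max}$, $\|M'\|_2\le\sqrt{2/d_{\min}}$, and $|d_i'^{-1/2}-d_i^{-1/2}|\lesssim|d'_i-d_i|\,d_{\min}^{-3/2}$, the latter via the mean value theorem on $[d_{\min}/2,\infty)$. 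The first term gives $2kC_w\eta/d_{\min}$, and the two cross terms give the $(2+\sqrt2)d_{\max}/d_{\min}^2$ factor. I expect this constant bookkeeping to be the fiddliest step. If the exact constant does not fall out, I would first tighten $\|M'-M\|$ using the explicit form $d'^{-1/2}-d^{-1/2}=(d-d')/(\sqrt{dd'}(\sqrt d+\sqrt{d'}))$.

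For (d), the edge set is unchanged by (a), so the connected components are unchanged. For the normalized Laplacian the kernel dimension equals the number of components, so the kernel dimensions agree. Lemma~\ref{lem:logdet-stability} with $A=\mathcal{L}(Z)$, $a=a_0$, and the bound from (c) then gives the claim.

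For (e), the flag complex depends only on $E$, so by (a) it is identical. Hence $B_1$, $B_2$, $L_1$, its spectrum, and $\beta_1$ (Proposition~\ref{prop:hodge-cycles}) are exactly unchanged. The weighted statement is again Lemma~\ref{lem:logdet-stability}, applied with $a_1$ and $C_1\eta$.

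For (f), Assumption~\ref{ass:distortion} says exactly that $\|D-D'\|_\infty\le\eta$, so Lemma~\ref{lem:vr-stability} gives the bottleneck bound. A radius parameter $r=\alpha/2$ rescales the interleaving to $\eta/2$, and the vectorization bound follows by composing with the Lipschitz constant $L_V$.
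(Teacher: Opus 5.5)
Your proposal is correct and follows the paper's proof essentially step for step: the same margin argument in (a), the same mean-value bound on $\exp(-x^2/(ab))$ over $\sigma\ge\sigma_{\min}/2$, $\delta\le D_\eta$ in (b), the same three-term decomposition $A'\Delta W A'+(A'-A)WA'+AW(A'-A)$ with $\|\Delta W\|_2\le\|\Delta W\|_\infty\le kC_w\eta$ in (c), and the Weyl/log-Lipschitz lemma and Vietoris--Rips interleaving stability for (d)--(f). Your worry about the constant in (c) is unnecessary: the mean value theorem on $[d_{\min}/2,\infty)$ gives $|(d_i')^{-1/2}-d_i^{-1/2}|\le\sqrt{2}\,kC_w\eta/d_{\min}^{3/2}$, so the two cross terms contribute exactly $2d_{\max}/d_{\min}^2$ and $\sqrt{2}\,d_{\max}/d_{\min}^2$ (times $kC_w\eta$), which is the stated $C_{\mathcal{L}}$.
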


\begin{corollary}[exact and approximate positive-view invariance]\label{cor:positive-views}
If $Z'=ZQ$ for an orthogonal matrix $Q$, then all pairwise distances are identical. Hence the mutual $k$NN graph, self-tuning weights, normalized Laplacian spectrum, torsion proxy, Ollivier--Ricci curvatures, unweighted Hodge features, and Vietoris--Rips persistence diagrams are exactly invariant.

If a transformation, such as mild noise, feature masking followed by renormalization, or PCA-preserving projection, satisfies Assumption~\ref{ass:distortion} with $\eta$ below the margin and spectral-gap thresholds in Theorem~\ref{thm:view-stability}, then the quantities covered by Theorem~\ref{thm:view-stability} are stable with the displayed bounds.
\end{corollary}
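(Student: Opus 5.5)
For the first part of Corollary~\ref{cor:positive-views}, I will show that orthogonal maps preserve every pairwise distance and that every listed feature depends on the point cloud only through the distance matrix $(\delta_{ij})$. If $Z'=ZQ$ with $Q^\top Q=I$, then for any $i,j$
\[
\delta_{ij}'=\|(z_i-z_j)Q\|_2=\sqrt{(z_i-z_j)QQ^\top(z_i-z_j)^\top}=\delta_{ij},
\]
so Assumption~\ref{ass:distortion} holds with $\eta=0$. The $\ell_2$ normalization used in the pipeline also commutes with $Q$, since $\|z_iQ\|_2=\|z_i\|_2$.

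I will then check, feature by feature, that each quantity factors through the distance matrix:
\begin{itemize}
\item The directed $k$NN lists are determined by the ordering of $\delta_{i\cdot}$, with a fixed tie-break rule indexed by vertex labels that does not depend on the coordinates. Hence the lists and the mutual $k$NN edge set are unchanged.
\item The bandwidths $\sigma_i$ and the weights $w_{ij}$ in Assumption~\ref{ass:nondegenerate} are explicit functions of the $\delta_{ij}$. Therefore $W$, $D$, the normalized Laplacian $\mathcal{L}$, its spectrum, and $\log\det{}^{*}\mathcal{L}$ are identical.
\item The Ollivier--Ricci curvature uses only the graph, its weights, and the graph metric $d(u,v)$, which is either the hop or weighted shortest-path distance. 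The lazy random-walk measures and $W_1$ are therefore identical. The same holds for the Sinkhorn approximation, because it is a deterministic function of the cost matrix.
\item The flag complex, the boundary matrices $B_1,B_2$, $L_1$, and $\beta_1$ (Proposition~\ref{prop:hodge-cycles}) depend only on the edge set.
\item The Vietoris--Rips filtrations are defined from $D$, so the persistence diagrams coincide. This is also the case $\eta=0$ of Lemma~\ref{lem:vr-stability}.
\end{itemize}
One could alternatively invoke Theorem~\ref{thm:view-stability} with $\eta=0$. That route needs $\Gamma_k(Z)>0$ for part (a), whereas the direct argument avoids the margin condition entirely, because equal distance matrices give equal tie-broken lists. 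I will use the direct argument for this reason.

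The second part is essentially a restatement. For a given transformation, assume Assumption~\ref{ass:distortion} holds with an $\eta$ satisfying the hypotheses of Theorem~\ref{thm:view-stability}: $2\eta<\Gamma_k(Z)$, $\eta\le\sigma_{\min}/2$, $kC_w\eta\le d_{\min}/2$, and the spectral-gap conditions of parts (d) and (e). Parts (a)--(f) of that theorem then give the stated bounds directly. The corollary asserts stability only for quantities covered by the theorem, so curvature does not need to be handled there.

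The one point needing care is the scope of the claim. For noise, masking with renormalization, or PCA projection, the condition is an assumption to be verified per instance, not a consequence of the construction. I will state this explicitly, and note that the vertex set stays fixed under these transformations, unlike bootstrap resampling. The main potential obstacle is curvature invariance in the exact case: I must confirm that the metric $d(u,v)$ used there is graph-intrinsic. If it were the embedding distance, it would still be invariant, since it equals $\delta_{uv}$.
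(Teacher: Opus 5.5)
Your proposal is correct and matches the paper's proof. You show $\|(z_i-z_j)Q\|_2=\|z_i-z_j\|_2$, then observe that the $k$NN lists, self-tuning weights, normalized Laplacian and its spectrum, flag complex, random-walk measures and graph metric, and Vietoris--Rips diagrams all depend only on the unchanged distance matrix, so the torsion proxy, Hodge features and Ollivier--Ricci curvatures are invariant; the approximate part is a direct application of Theorem~\ref{thm:view-stability}, and, like the paper, you handle the exact case directly rather than through the theorem with $\eta=0$, which avoids needing $\Gamma_k(Z)>0$.
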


For the self-supervised objective statements, let $G_i$ denote an anchor graph, $G_{ip}^+$ a positive view, and $G_{iq}^-$ a negative view. Write
\[
g_i=g(G_i),\qquad g_{ip}^+=g(G_{ip}^+),\qquad g_{iq}^-=g(G_{iq}^-),
\]
and define $a_{ip}=g_i-g_{ip}^+$ and $b_{iq}=g_{iq}^--g_i$. For $M_+$ positive anchor-view pairs and $M_-$ negative anchor-view pairs, define
\[
\widehat{\mathcal{J}}(w)
=
\frac{1}{M_+}\sum_{i,p}(w^\top a_{ip})^2
+
\frac{\lambda}{M_-}\sum_{i,q}[m-w^\top b_{iq}]_+
+
\mu\|w\|_2^2,
\qquad w\in\mathbb{R}_{\ge0}^{5}.
\]
The population objective $\mathcal{J}$ is defined analogously by replacing empirical averages over positive and negative pairs by expectations under $\mathcal{P}_+$ and $\mathcal{P}_-$.

\begin{theorem}[convex monotone separator]\label{thm:ssl-convex}
Assume $m>0$, $\lambda\ge0$, and $\mu>0$. Then:
\begin{enumerate}[label=(\alph*)]
\item $\widehat{\mathcal{J}}$ is convex on $\mathbb{R}_{\ge0}^{5}$ and is $2\mu$-strongly convex in the Euclidean norm. Hence it has a unique minimizer over $\mathbb{R}_{\ge0}^{5}$.
\item Every subgradient KKT point is globally optimal.
\item The score is coordinatewise monotone: if $w\ge0$ and $g'\ge g$ coordinatewise, then $s_w(g')\ge s_w(g)$.
\item Suppose there is a vector $u\in\mathbb{R}_{\ge0}^{5}$ with $\|u\|_2=1$ and constants $\varepsilon_+\ge0$, $\gamma_->0$ such that $|u^\top(g_{ip}^+-g_i)|\le\varepsilon_+$ for all positive views and $u^\top(g_{iq}^--g_i)\ge\gamma_-$ for all negative views. Then $w=(m/\gamma_-)u$ has zero empirical separation loss and invariance loss at most $m^2\varepsilon_+^2/\gamma_-^2$. Consequently,
\[
\widehat{\mathcal{J}}(w^\star)\le \frac{m^2\varepsilon_+^2}{\gamma_-^2}+\frac{\mu m^2}{\gamma_-^2}.
\]
\end{enumerate}
\end{theorem}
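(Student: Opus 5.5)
The plan is to treat the four parts in order, each by elementary convex analysis on the closed convex cone $\mathbb{R}_{\ge0}^{5}$.

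\textbf{Part (a): convexity and uniqueness.} Write $\widehat{\mathcal{J}}=J_1+\lambda J_2+\mu\|w\|_2^2$.
\begin{itemize}
\item Each term $(w^\top a_{ip})^2$ is a convex quadratic, with Hessian $2a_{ip}a_{ip}^\top\succeq0$.
\item Each hinge term $[m-w^\top b_{iq}]_+$ is the maximum of two affine functions, hence convex. Nonnegative averages preserve convexity, and $\lambda\ge0$.
\item Adding $\mu\|w\|_2^2$ makes $\widehat{\mathcal{J}}-\mu\|w\|_2^2$ convex. This is exactly $2\mu$-strong convexity in the Euclidean norm, with modulus $2\mu$ because the Hessian of $\mu\|w\|^2$ is $2\mu I$.
\item For existence, $\widehat{\mathcal{J}}$ is continuous and coercive, since $\widehat{\mathcal{J}}(w)\ge\mu\|w\|^2$. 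Its sublevel sets intersected with the closed cone are therefore compact, so a minimizer exists.
\item For uniqueness, if $w_1\neq w_2$ were both minimizers, strong convexity would give
\[
\widehat{\mathcal{J}}\bigl(\tfrac{w_1+w_2}{2}\bigr)<\min\widehat{\mathcal{J}}.
\]
The cone is convex, so the midpoint is feasible, which is a contradiction.
\end{itemize}

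\textbf{Part (b): KKT points are optimal.} Interpret a subgradient KKT point $w^\star\ge0$ as one for which there exist $g\in\partial\widehat{\mathcal{J}}(w^\star)$ and a multiplier $\nu\ge0$ satisfying
\[
g-\nu=0,\qquad \nu^\top w^\star=0.
\]
Equivalently, $-g$ lies in the normal cone of $\mathbb{R}_{\ge0}^5$ at $w^\star$. Then for any $w\ge0$, the subgradient inequality gives
\[
\widehat{\mathcal{J}}(w)\ge\widehat{\mathcal{J}}(w^\star)+g^\top(w-w^\star)=\widehat{\mathcal{J}}(w^\star)+\nu^\top w\ge\widehat{\mathcal{J}}(w^\star).
\]
Hence $w^\star$ is globally optimal, and by (a) it is the unique minimizer.

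\textbf{Part (c): monotonicity.} This is immediate:
\[
s_w(g')-s_w(g)=\sum_j w_j(g'_j-g_j)\ge0,
\]
since every term is a product of two nonnegative numbers.

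\textbf{Part (d): explicit feasible point.} Set $w=(m/\gamma_-)u$, which is feasible because $u\ge0$.
\begin{itemize}
\item \emph{Separation term.} For each negative pair, $w^\top b_{iq}=(m/\gamma_-)u^\top(g^-_{iq}-g_i)\ge m$. Each hinge therefore vanishes, and the separation loss is zero.
\item \emph{Invariance term.} Since $a_{ip}=-(g^+_{ip}-g_i)$,
\[
(w^\top a_{ip})^2=(m/\gamma_-)^2\bigl(u^\top(g^+_{ip}-g_i)\bigr)^2\le m^2\varepsilon_+^2/\gamma_-^2.
\]
The average over pairs obeys the same bound.
\item \emph{Regularizer.} Because $\|u\|_2=1$, we get $\mu\|w\|^2=\mu m^2/\gamma_-^2$.
\item \emph{Conclusion.} Summing the three terms and using $\widehat{\mathcal{J}}(w^\star)\le\widehat{\mathcal{J}}(w)$ for the minimizer $w^\star$ yields the stated bound.
\end{itemize}

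\textbf{Main obstacle.} None of the steps is deep. The only care needed is in (b): fixing the precise meaning of "subgradient KKT point" for the nonsmooth hinge under the cone constraint, and checking that the subdifferential sum rule applies. It does, because all terms are finite convex functions on $\mathbb{R}^5$.
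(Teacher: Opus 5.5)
Your proposal is correct and follows the paper's proof essentially step for step: convexity term by term with $2\mu$-strong convexity from the regularizer, optimality of KKT points via the normal-cone condition, the one-line monotonicity argument, and the explicit feasible point $w=(m/\gamma_-)u$ bounding the minimum. The only difference is that you spell out details the paper leaves implicit, namely compactness of sublevel sets for existence, the midpoint argument for uniqueness, and the subgradient inequality behind the first-order optimality condition in (b).
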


\begin{theorem}[uniform finite-sample control]\label{thm:ssl-generalization}
Assume that $m\ge0$, that the $M_+\ge1$ positive empirical pairs are i.i.d. samples from $\mathcal{P}_+$, that the $M_-\ge1$ negative empirical pairs are i.i.d. samples from $\mathcal{P}_-$, and that the two samples are independent. Assume also that $\|g(G)\|_2\le B$ for every anchor and view, and restrict weights to
\[
\mathcal{W}_R=\{w\in\mathbb{R}_{\ge0}^{5}:\|w\|_2\le R\}.
\]
Then, for any $0<\delta<1$, with probability at least $1-\delta$, simultaneously for all $w\in\mathcal{W}_R$,
\[
\mathcal{J}(w)\le\widehat{\mathcal{J}}(w)+\epsilon_+(M_+,B,R,\delta)+\lambda\epsilon_-(M_-,B,R,m,\delta),
\]
where one admissible choice is
\[
\epsilon_+=\frac{16B^2R^2}{\sqrt{M_+}}+4B^2R^2\sqrt{\frac{\log(4/\delta)}{2M_+}},
\]
and
\[
\epsilon_-=\frac{4BR}{\sqrt{M_-}}+(m+2BR)\sqrt{\frac{\log(4/\delta)}{2M_-}}.
\]
\end{theorem}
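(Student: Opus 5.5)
The plan is to split $\mathcal{J}-\widehat{\mathcal{J}}$ into its invariance part and its separation part and control each with Lemma~\ref{lem:rademacher-template} at confidence $\delta/2$. A union bound then gives the $\log(4/\delta)$ terms. The regularizer $\mu\|w\|_2^2$ appears identically in both objectives and cancels. Since the two samples are independent, each piece is a standard one-sample uniform deviation over $\mathcal{W}_R$.

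\emph{Invariance part.} Set $f_w(a)=(w^\top a)^2$ with $a=g_i-g^+_{ip}$. Then $\|a\|_2\le 2B$, and for $w\in\mathcal{W}_R$ Cauchy--Schwarz gives $|w^\top a|\le 2BR$. Hence $f_w\in[0,4B^2R^2]$, so $M=4B^2R^2$ in Lemma~\ref{lem:rademacher-template}; this produces the second term of $\epsilon_+$. For the complexity, note that $t\mapsto t^2$ is $4BR$-Lipschitz on $[-2BR,2BR]$. Talagrand's contraction lemma then gives $\mathfrak{R}_{M_+}(f\circ\mathcal{W}_R)\le 4BR\cdot\mathfrak{R}_{M_+}(\text{linear})$. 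The linear estimate in Lemma~\ref{lem:rademacher-template} with input bound $2B$ gives $\mathfrak{R}\le 2BR/\sqrt{M_+}$. Together these give $8B^2R^2/\sqrt{M_+}$, and doubling yields $16B^2R^2/\sqrt{M_+}$, matching $\epsilon_+$.

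\emph{Separation part.} Set $h_w(b)=[m-w^\top b]_+$ with $\|b\|_2\le 2B$. Then $|w^\top b|\le 2BR$, so $h_w\in[0,m+2BR]$; this produces the concentration term $(m+2BR)\sqrt{\log(4/\delta)/(2M_-)}$. The hinge $t\mapsto[m-t]_+$ is $1$-Lipschitz, so by contraction the complexity is at most that of the linear class, $2BR/\sqrt{M_-}$. Doubling gives $4BR/\sqrt{M_-}$. Multiplying this whole part by $\lambda\ge0$ gives $\lambda\epsilon_-$.

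Adding the two one-sided bounds, each holding with probability $1-\delta/2$, gives the claim simultaneously for all $w\in\mathcal{W}_R$.

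The main obstacle is the contraction step. Lemma~\ref{lem:rademacher-template} as stated covers only linear classes, so I will invoke the Ledoux--Talagrand contraction inequality. That version (for Lipschitz $\phi$, no factor $2$, and no need for $\phi(0)=0$ when bounding the one-sided supremum) needs care. If a factor of $2$ appears, I would absorb it by using the sharper range $\|a\|_2\le 2B$ only once, or accept slightly larger constants, since the statement says ``one admissible choice''. The degenerate case $m=0$ needs no special treatment.
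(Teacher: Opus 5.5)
Your proposal is correct and follows the paper's proof essentially step for step: the invariance and separation deviations are each bounded at confidence $\delta/2$ via the linear-class estimate $2BR/\sqrt{M_\pm}$, contraction with Lipschitz constants $4BR$ (the square on $[-2BR,2BR]$) and $1$ (the hinge), and range bounds $4B^2R^2$ and $m+2BR$, and a union bound finishes with the regularizer passing through unchanged. Your worry about an extra factor of $2$ is unnecessary: Lemma~\ref{lem:rademacher-template} uses the one-sided Rademacher complexity (no absolute value), for which the Ledoux--Talagrand contraction holds with constant exactly $L$ and without requiring $\phi(0)=0$, so the stated constants come out as claimed.
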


\subsection{Proofs of the main theoretical statements}
\begin{proof}[Proof of Proposition~\ref{prop:normalized-mtt}]
We prove the connected case first. The disconnected statement follows by block diagonalization over connected components.

By the weighted Matrix--Tree theorem, every principal cofactor of the combinatorial Laplacian $L=D-W$ equals
\[
\tau_{\mathrm{tree}}(G)=\sum_{T\in\mathsf{Tree}(G)}\prod_{e\in T}w_e .
\]
Since
\[
\mathcal{L}=D^{-1/2}LD^{-1/2},
\]
the corresponding normalized cofactor satisfies, for every vertex $i$,
\[
\det\mathcal{L}_{\widehat{i},\widehat{i}}
=
\det\!\left(D_{\widehat{i}}^{-1/2}L_{\widehat{i},\widehat{i}}D_{\widehat{i}}^{-1/2}\right)
=
\frac{\tau_{\mathrm{tree}}(G)}{\prod_{j\neq i}d_j}.
\]
Because $G$ is connected, $\mathcal{L}$ is symmetric positive semidefinite of rank $|V|-1$ and has nullspace spanned by $D^{1/2}\mathbf{1}$. For any symmetric rank-$(n-1)$ matrix $A$ with null vector $x$, the adjugate satisfies
\[
\operatorname{adj}(A)
=
\frac{\det{}^{*}(A)}{\|x\|_2^2}xx^\top .
\]
Applying this with $A=\mathcal{L}$ and $x=D^{1/2}\mathbf{1}$ gives
\[
\det\mathcal{L}_{\widehat{i},\widehat{i}}
=
\frac{\det{}^{*}(\mathcal{L})}{\|D^{1/2}\mathbf{1}\|_2^2}d_i
=
\frac{\det{}^{*}(\mathcal{L})}{\mathrm{vol}(G)}d_i .
\]
Equating the two expressions for the same cofactor,
\[
\frac{\det{}^{*}(\mathcal{L})}{\mathrm{vol}(G)}d_i
=
\frac{\tau_{\mathrm{tree}}(G)}{\prod_{j\neq i}d_j}
=
\frac{d_i\,\tau_{\mathrm{tree}}(G)}{\prod_{j}d_j}.
\]
Since $d_i>0$, cancellation yields
\[
\det{}^{*}(\mathcal{L})
=
\frac{\mathrm{vol}(G)\,\tau_{\mathrm{tree}}(G)}{\prod_{j}d_j}.
\]
Taking logarithms gives the stated logarithmic identity. If $G$ has several connected components, $\mathcal{L}$ is block diagonal after permuting vertices, and its pseudo-determinant is the product of the pseudo-determinants of the component blocks. Applying the connected result to each block proves the componentwise formula.
\end{proof}

\begin{proof}[Proof of Proposition~\ref{prop:hodge-cycles}]
For any edge signal $x\in C_1(K;\mathbb{R})$,
\[
x^\top L_1 x
=
x^\top B_1^\top B_1x+x^\top B_2B_2^\top x
=
\|B_1x\|_2^2+\|B_2^\top x\|_2^2 .
\]
Hence $x\in\ker L_1$ if and only if $B_1x=0$ and $B_2^\top x=0$, so
\[
\ker L_1=\ker B_1\cap\ker B_2^\top .
\]
The finite-dimensional Hodge decomposition gives the orthogonal direct sum
\[
C_1(K;\mathbb{R})
=
\operatorname{im}B_1^\top
\oplus
\ker L_1
\oplus
\operatorname{im}B_2 .
\]
Because $B_1B_2=0$, boundaries $\operatorname{im}B_2$ are contained in cycles $\ker B_1$. Every cycle $z\in\ker B_1$ decomposes uniquely as
\[
z=z_\partial+h,\qquad z_\partial\in\operatorname{im}B_2,\quad h\in\ker L_1 .
\]
Therefore the map $h\mapsto[h]$ is an isomorphism
\[
\ker L_1\cong \ker B_1/\operatorname{im}B_2 = H_1(K;\mathbb{R}),
\]
and $\dim\ker L_1=\beta_1(K)$.

For the dimension formula, rank-nullity gives
\[
\beta_1(K)
=
\dim\ker B_1-\operatorname{rank}(B_2)
=
|E|-\operatorname{rank}(B_1)-\operatorname{rank}(B_2).
\]
For a graph complex with $|V|$ vertices and $\beta_0(K)$ connected components,
\[
\operatorname{rank}(B_1)=|V|-\beta_0(K).
\]
Substitution yields
\[
\beta_1(K)
=
|E|-|V|+\beta_0(K)-\operatorname{rank}(B_2),
\]
as claimed.
\end{proof}

\begin{proof}[Proof of Theorem~\ref{thm:view-stability}]
(a) Let $j\in N_k(i)$ and let $\ell\notin N_k(i)$ with $\ell\neq i$. By definition of the directed $k$NN margin,
\[
\delta_{i\ell}-\delta_{ij}\ge \Gamma_k(Z).
\]
Assumption~\ref{ass:distortion} gives
\[
\delta'_{ij}\le \delta_{ij}+\eta,\qquad
\delta'_{i\ell}\ge \delta_{i\ell}-\eta .
\]
Therefore
\[
\delta'_{i\ell}-\delta'_{ij}
\ge
\Gamma_k(Z)-2\eta
>
0 .
\]
Every selected neighbor remains closer than every non-selected non-self point. Hence every directed $k$NN list is unchanged, and so the mutual $k$NN edge set is unchanged.

(b) Since the neighbor lists are unchanged,
\[
|\sigma'_i-\sigma_i|\le \eta .
\]
Moreover, $\eta\le \sigma_{\min}/2$ implies $\sigma_i,\sigma_i'\ge \sigma_{\min}/2$. Define
\[
F(x,a,b)=\exp\!\left(-\frac{x^2}{ab}\right).
\]
On the domain $0\le x\le D_\eta$ and $a,b\ge\sigma_{\min}/2$, the partial derivatives are bounded by
\[
\left|\frac{\partial F}{\partial x}\right|\le \frac{8D_\eta}{\sigma_{\min}^2},
\qquad
\left|\frac{\partial F}{\partial a}\right|\le \frac{8D_\eta^2}{\sigma_{\min}^3},
\qquad
\left|\frac{\partial F}{\partial b}\right|\le \frac{8D_\eta^2}{\sigma_{\min}^3}.
\]
The mean-value theorem therefore gives
\[
|w'_{ij}-w_{ij}|\le C_w\eta
\]
for every mutual $k$NN edge.

(c) Let $\Delta W=W'-W$. Since each vertex has at most $k$ mutual $k$NN edges,
\[
\|\Delta W\|_2\le \|\Delta W\|_\infty \le kC_w\eta,
\qquad
\|D'-D\|_2=\max_i |d'_i-d_i|\le kC_w\eta .
\]
The assumption $kC_w\eta\le d_{\min}/2$ implies $d'_i\ge d_{\min}/2$. Put $A=D^{-1/2}$ and $A'=(D')^{-1/2}$. Since $x\mapsto x^{-1/2}$ has derivative bounded by $\sqrt{2}/d_{\min}^{3/2}$ on $[d_{\min}/2,\infty)$,
\[
\|A'-A\|_2
\le
\frac{\sqrt{2}}{d_{\min}^{3/2}}\,kC_w\eta .
\]
Also $\|A\|_2\le d_{\min}^{-1/2}$, $\|A'\|_2\le (2/d_{\min})^{1/2}$, and $\|W\|_2\le d_{\max}$. Since the normalized adjacency is $S=AWA$ and $\mathcal{L}=I-S$,
\begin{align*}
\|\mathcal{L}(Z')-\mathcal{L}(Z)\|_2
&=
\|A'W'A'-AWA\|_2\\
&\le
\|A'\Delta W A'\|_2
+
\|(A'-A)WA'\|_2
+
\|AW(A'-A)\|_2\\
&\le
kC_w\eta
\left(
\frac{2}{d_{\min}}
+
\frac{(2+\sqrt{2})d_{\max}}{d_{\min}^2}
\right).
\end{align*}
This is the stated bound with constant $C_{\mathcal{L}}$.

(d) The edge set and connected components are unchanged, so the kernel dimension of $\mathcal{L}(Z)$ and $\mathcal{L}(Z')$ is the same. Weyl's inequality gives
\[
|\lambda_i(\mathcal{L}(Z'))-\lambda_i(\mathcal{L}(Z))|
\le
\|\mathcal{L}(Z')-\mathcal{L}(Z)\|_2
\le
C_{\mathcal{L}}\eta .
\]
Every positive eigenvalue of $\mathcal{L}(Z')$ is therefore at least $a_0/2$. Since $\log x$ is $2/a_0$-Lipschitz on $[a_0/2,\infty)$,
\begin{align*}
\left|
\log\det{}^{*}\mathcal{L}(Z')-
\log\det{}^{*}\mathcal{L}(Z)
\right|
&\le
\sum_{r=1}^{p_0}
\left|
\log\lambda_r(\mathcal{L}(Z'))-
\log\lambda_r(\mathcal{L}(Z))
\right|\\
&\le
\frac{2p_0}{a_0}
\|\mathcal{L}(Z')-\mathcal{L}(Z)\|_2\\
&\le
\frac{2p_0C_{\mathcal{L}}}{a_0}\eta .
\end{align*}

(e) If the Hodge construction is the unweighted flag complex of the mutual $k$NN graph, then part (a) implies that the flag complex and all boundary matrices are identical. Hence $L_1$, $\beta_1$, and the unweighted Hodge spectrum are exactly unchanged. In the weighted case, the stated assumptions give the same kernel dimension and the perturbation bound
\[
\|L_1(Z')-L_1(Z)\|_2\le C_1\eta .
\]
Applying the same Weyl/log-Lipschitz argument as in part (d), with $a_1$ and $p_1$ in place of $a_0$ and $p_0$, gives
\[
\left|
\log\det{}^{*}L_1(Z')-
\log\det{}^{*}L_1(Z)
\right|
\le
\frac{2p_1C_1}{a_1}\eta .
\]

(f) Assumption~\ref{ass:distortion} says that the pairwise distance matrices differ by at most $\eta$ in sup norm. Thus the corresponding Vietoris--Rips filtrations are $\eta$-interleaved when the filtration parameter is interpoint distance. The stability theorem for persistence diagrams gives
\[
d_B\!\left(\operatorname{Dgm}_q(Z),\operatorname{Dgm}_q(Z')\right)
\le
\eta,
\qquad q=0,1 .
\]
If the filtration uses a radius convention rather than an interpoint-distance threshold, the same argument applies after the deterministic rescaling of the parameter. Finally, if $V$ is $L_V$-Lipschitz with respect to bottleneck distance, then
\[
\|V(\operatorname{Dgm}_q(Z))-V(\operatorname{Dgm}_q(Z'))\|
\le
L_V\eta .
\]
\end{proof}

\begin{proof}[Proof of Corollary~\ref{cor:positive-views}]
If $Z'=ZQ$ with $Q$ orthogonal, then
\[
\|z'_i-z'_j\|_2=\|(z_i-z_j)Q\|_2=\|z_i-z_j\|_2
\]
for all $i,j$. Hence all pairwise distances are identical, so the $k$NN lists, mutual $k$NN graph, self-tuning weights, normalized Laplacian spectrum, and unweighted flag complex are identical. The torsion proxy, unweighted Hodge features, and Vietoris--Rips persistence diagrams are therefore exactly invariant. Ollivier--Ricci curvatures are also exactly invariant because they are computed from the same weighted graph metric and the same random-walk distributions.

The approximate statement is Theorem~\ref{thm:view-stability} applied to the realized transformed cloud whenever the transformation satisfies Assumption~\ref{ass:distortion} below the required margin and spectral-gap thresholds.
\end{proof}

\begin{remark}[Bootstrap views]
Bootstrap subsampling changes the vertex set and is therefore not covered by the fixed-correspondence stability result in Theorem~\ref{thm:view-stability}. Its use as a positive view should be understood as a sampling-stability heuristic, or formalized separately through a bounded-difference assumption for the class-level graph functional being estimated.
\end{remark}

\begin{proof}[Proof of Theorem~\ref{thm:ssl-convex}]
(a) For each positive pair, the map
\[
w\mapsto (w^\top a_{ip})^2
\]
is a convex quadratic with Hessian $2a_{ip}a_{ip}^\top\succeq 0$. For each negative pair,
\[
w\mapsto [m-w^\top b_{iq}]_+
\]
is the maximum of the two affine functions $0$ and $m-w^\top b_{iq}$, hence is convex. The regularizer $\mu\|w\|_2^2$ is $2\mu$-strongly convex. Since $\mathbb{R}_{\ge0}^5$ is closed and convex, and the regularizer makes the objective coercive, $\widehat{\mathcal{J}}$ has a unique minimizer over $\mathbb{R}_{\ge0}^5$.

(b) A subgradient KKT point satisfies
\[
0\in \partial\widehat{\mathcal{J}}(w)+N_{\mathbb{R}_{\ge0}^5}(w),
\]
where $N_{\mathbb{R}_{\ge0}^5}(w)$ is the normal cone to the feasible orthant. This is precisely the first-order optimality condition for minimizing a convex function over a closed convex set. Hence every such point is globally optimal.

(c) If $g'\ge g$ coordinatewise and $w\ge0$, then
\[
s_w(g')-s_w(g)=w^\top(g'-g)\ge0.
\]
Thus the score is coordinatewise monotone.

(d) Let $w=(m/\gamma_-)u$. This vector is feasible because $u\in\mathbb{R}_{\ge0}^5$. For every positive view,
\[
|w^\top a_{ip}|
=
\frac{m}{\gamma_-}
|u^\top(g_i-g_{ip}^{+})|
\le
\frac{m\varepsilon_+}{\gamma_-}.
\]
Averaging squares gives invariance loss at most
\[
\frac{m^2\varepsilon_+^2}{\gamma_-^2}.
\]
For every negative view,
\[
w^\top b_{iq}
=
\frac{m}{\gamma_-}
u^\top(g_{iq}^{-}-g_i)
\ge
m,
\]
so
\[
[m-w^\top b_{iq}]_+=0.
\]
The regularizer of this candidate is
\[
\mu\|w\|_2^2=\frac{\mu m^2}{\gamma_-^2}.
\]
Since $w^\star$ minimizes $\widehat{\mathcal{J}}$, its objective value is no larger than that of this feasible candidate:
\[
\widehat{\mathcal{J}}(w^\star)
\le
\frac{m^2\varepsilon_+^2}{\gamma_-^2}
+
\frac{\mu m^2}{\gamma_-^2}.
\]
\end{proof}

\begin{proof}[Proof of Theorem~\ref{thm:ssl-generalization}]
For positive pairs, $a=g-g^+$ satisfies $\|a\|_2\le2B$. The linear class
\[
\mathcal{F}_+
=
\{a\mapsto w^\top a:\ w\in\mathcal{W}_R\}
\]
has empirical Rademacher complexity at most
\[
\widehat{\mathfrak{R}}_{M_+}(\mathcal{F}_+)
\le
\frac{R}{M_+}
\left(\sum_{r=1}^{M_+}\|a_r\|_2^2\right)^{1/2}
\le
\frac{2BR}{\sqrt{M_+}} .
\]
For $w\in\mathcal{W}_R$, $|w^\top a|\le2BR$. The map $x\mapsto x^2$ is $4BR$-Lipschitz on $[-2BR,2BR]$ and vanishes at zero. By the contraction inequality,
\[
\widehat{\mathfrak{R}}_{M_+}
\left(
\{a\mapsto (w^\top a)^2:\ w\in\mathcal{W}_R\}
\right)
\le
\frac{8B^2R^2}{\sqrt{M_+}} .
\]
The positive loss is bounded by $4B^2R^2$. The standard Rademacher high-probability bound gives, with probability at least $1-\delta/2$, simultaneously for all $w\in\mathcal{W}_R$,
\[
\mathbb{E}_{\mathcal{P}_+}(w^\top a)^2
\le
\frac{1}{M_+}\sum_{r=1}^{M_+}(w^\top a_r)^2
+
\frac{16B^2R^2}{\sqrt{M_+}}
+
4B^2R^2\sqrt{\frac{\log(4/\delta)}{2M_+}} .
\]

For negative pairs, $b=g^- - g$ satisfies $\|b\|_2\le2B$. The linear class $b\mapsto w^\top b$ has Rademacher complexity at most $2BR/\sqrt{M_-}$. The hinge map $x\mapsto [m-x]_+$ is $1$-Lipschitz, and subtracting its value at zero does not change the Rademacher complexity. The negative loss is bounded by $m+2BR$. Another Rademacher bound gives, with probability at least $1-\delta/2$, simultaneously for all $w\in\mathcal{W}_R$,
\[
\mathbb{E}_{\mathcal{P}_-}[m-w^\top b]_+
\le
\frac{1}{M_-}\sum_{r=1}^{M_-}[m-w^\top b_r]_+
+
\frac{4BR}{\sqrt{M_-}}
+
(m+2BR)\sqrt{\frac{\log(4/\delta)}{2M_-}} .
\]
Taking a union bound over the positive and negative events and adding the deterministic regularizer $\mu\|w\|_2^2$ gives
\[
\mathcal{J}(w)
\le
\widehat{\mathcal{J}}(w)
+
\epsilon_+(M_+,B,R,\delta)
+
\lambda\epsilon_-(M_-,B,R,m,\delta)
\]
simultaneously for all $w\in\mathcal{W}_R$, with $\epsilon_+$ and $\epsilon_-$ as stated.
\end{proof}

\section{Implementation Details and Runtime}
\label{app:runtime}
\paragraph{Embedding preprocessing:}
All experiments are implemented in PyTorch and PyTorch Geometric. For each checkpoint, penultimate-layer representations are extracted from fixed network layers using forward hooks and $\ell_2$-normalised before graph construction. To reduce class-imbalance effects and stabilise spectral statistics, we use class-balanced sampling before graph generation.

\paragraph{Graph and topological feature computation:}
Class-conditional mutual $k$-nearest-neighbour graphs are constructed using FAISS-accelerated nearest-neighbour search with adaptive self-tuning kernel weights (Eq.~\ref{eq:self-tuning}). Ollivier--Ricci curvature is computed using entropically regularised Wasserstein transport, while higher-order topological descriptors are extracted through Hodge Laplacian analysis and Vietoris--Rips persistent homology.

\paragraph{Runtime and compute resources:}
Experiments were conducted on NVIDIA A100 GPUs with 40GB of memory. Depending on dataset size and checkpoint count, graph construction, eigenspectrum estimation, curvature computation, and topological feature extraction required approximately 5--15 minutes per checkpoint. The implementation uses checkpoint caching, crash-safe serialisation, and resumable metric computation to support long-running geometric pipelines. 

\section{Existing assets and licenses}
We use standard public datasets and model families only for evaluation. CIFAR-10, CIFAR-10.1, CIFAR-10-C, ImageNet-C, MNLI, HANS, OGBN-Arxiv, ConvNeXt-T checkpoints, and GCN implementations are credited through their original publications in the main paper. We follow the corresponding dataset/model terms of use and do not redistribute the raw datasets. Dataset download and preparation instructions are provided in the supplementary README.

\section{Broader impact}
This work may have a positive impact by improving checkpoint selection before deployment under distribution shift, reducing reliance on target labels or target samples when such data are unavailable. A potential negative impact is that users may overinterpret a source-only diagnostic score as a guarantee of target-domain robustness. To mitigate this risk, we explicitly frame \textsc{TopoGeoScore} as a diagnostic selector rather than a robustness training method, and we state that source-only geometry cannot capture target-specific shifts that leave no measurable trace in the source representation.

\newpage

\section*{NeurIPS Paper Checklist}

\begin{enumerate}

\item {\bf Claims}
    \item[] Question: Do the main claims made in the abstract and introduction accurately reflect the paper's contributions and scope?
    \item[] Answer: \answerYes{} 
    \item[] Justification: The abstract and introduction clearly state the source-only checkpoint-selection setting, the proposed global--local--topological scoring framework, and the source-conditioned self-supervised learning objective. The claims are scoped to robustness diagnosis and checkpoint ranking, rather than robustness training or guaranteed prediction of arbitrary OOD accuracy.
    \item[] Guidelines:
    \begin{itemize}
        \item The answer \answerNA{} means that the abstract and introduction do not include the claims made in the paper.
        \item The abstract and/or introduction should clearly state the claims made, including the contributions made in the paper and important assumptions and limitations. A \answerNo{} or \answerNA{} answer to this question will not be perceived well by the reviewers. 
        \item The claims made should match theoretical and experimental results, and reflect how much the results can be expected to generalize to other settings. 
        \item It is fine to include aspirational goals as motivation as long as it is clear that these goals are not attained by the paper. 
    \end{itemize}

\item {\bf Limitations}
    \item[] Question: Does the paper discuss the limitations of the work performed by the authors?
    \item[] Answer: \answerYes{} 
    \item[] Justification: Section~5 discusses the main limitations, including that \textsc{TopoGeoScore} is a diagnostic selector rather than a robustness training method, cannot capture target-specific shifts invisible from source representations, is evaluated on small checkpoint/model families in some supporting settings, and requires graph and spectral/topological computation.
    \item[] Guidelines:
    \begin{itemize}
        \item The answer \answerNA{} means that the paper has no limitation while the answer \answerNo{} means that the paper has limitations, but those are not discussed in the paper. 
        \item The authors are encouraged to create a separate ``Limitations'' section in their paper.
        \item The paper should point out any strong assumptions and how robust the results are to violations of these assumptions (e.g., independence assumptions, noiseless settings, model well-specification, asymptotic approximations only holding locally). The authors should reflect on how these assumptions might be violated in practice and what the implications would be.
        \item The authors should reflect on the scope of the claims made, e.g., if the approach was only tested on a few datasets or with a few runs. In general, empirical results often depend on implicit assumptions, which should be articulated.
        \item The authors should reflect on the factors that influence the performance of the approach. For example, a facial recognition algorithm may perform poorly when image resolution is low or images are taken in low lighting. Or a speech-to-text system might not be used reliably to provide closed captions for online lectures because it fails to handle technical jargon.
        \item The authors should discuss the computational efficiency of the proposed algorithms and how they scale with dataset size.
        \item If applicable, the authors should discuss possible limitations of their approach to address problems of privacy and fairness.
        \item While the authors might fear that complete honesty about limitations might be used by reviewers as grounds for rejection, a worse outcome might be that reviewers discover limitations that aren't acknowledged in the paper. The authors should use their best judgment and recognize that individual actions in favor of transparency play an important role in developing norms that preserve the integrity of the community. Reviewers will be specifically instructed to not penalize honesty concerning limitations.
    \end{itemize}

\item {\bf Theory assumptions and proofs}
    \item[] Question: For each theoretical result, does the paper provide the full set of assumptions and a complete (and correct) proof?
    \item[] Answer: \answerYes{} 
    \item[] Justification: Section~3.7 summarizes the assumptions and scope of the theoretical results, including fixed-vertex perturbations, kNN-margin and spectral-gap conditions, and the distinction between feature stability and guarantees about learning the score. Formal statements and proofs are provided in Appendix~B.
    \item[] Guidelines:
    \begin{itemize}
        \item The answer \answerNA{} means that the paper does not include theoretical results. 
        \item All the theorems, formulas, and proofs in the paper should be numbered and cross-referenced.
        \item All assumptions should be clearly stated or referenced in the statement of any theorems.
        \item The proofs can either appear in the main paper or the supplemental material, but if they appear in the supplemental material, the authors are encouraged to provide a short proof sketch to provide intuition. 
        \item Inversely, any informal proof provided in the core of the paper should be complemented by formal proofs provided in appendix or supplemental material.
        \item Theorems and Lemmas that the proof relies upon should be properly referenced. 
    \end{itemize}

    \item {\bf Experimental result reproducibility}
    \item[] Question: Does the paper fully disclose all the information needed to reproduce the main experimental results of the paper to the extent that it affects the main claims and/or conclusions of the paper (regardless of whether the code and data are provided or not)?
    \item[] Answer: \answerYes{} 
    \item[] Justification: Section~4.1 specifies the datasets, checkpoint families, source/target settings, embedding extraction, graph construction, feature sampling, key hyperparameters, optimization objective, and evaluation metrics. Appendix~A provides additional per-setting and per-metric results.
    \item[] Guidelines:
    \begin{itemize}
        \item The answer \answerNA{} means that the paper does not include experiments.
        \item If the paper includes experiments, a \answerNo{} answer to this question will not be perceived well by the reviewers: Making the paper reproducible is important, regardless of whether the code and data are provided or not.
        \item If the contribution is a dataset and\slash or model, the authors should describe the steps taken to make their results reproducible or verifiable. 
        \item Depending on the contribution, reproducibility can be accomplished in various ways. For example, if the contribution is a novel architecture, describing the architecture fully might suffice, or if the contribution is a specific model and empirical evaluation, it may be necessary to either make it possible for others to replicate the model with the same dataset, or provide access to the model. In general. releasing code and data is often one good way to accomplish this, but reproducibility can also be provided via detailed instructions for how to replicate the results, access to a hosted model (e.g., in the case of a large language model), releasing of a model checkpoint, or other means that are appropriate to the research performed.
        \item While NeurIPS does not require releasing code, the conference does require all submissions to provide some reasonable avenue for reproducibility, which may depend on the nature of the contribution. For example
        \begin{enumerate}
            \item If the contribution is primarily a new algorithm, the paper should make it clear how to reproduce that algorithm.
            \item If the contribution is primarily a new model architecture, the paper should describe the architecture clearly and fully.
            \item If the contribution is a new model (e.g., a large language model), then there should either be a way to access this model for reproducing the results or a way to reproduce the model (e.g., with an open-source dataset or instructions for how to construct the dataset).
            \item We recognize that reproducibility may be tricky in some cases, in which case authors are welcome to describe the particular way they provide for reproducibility. In the case of closed-source models, it may be that access to the model is limited in some way (e.g., to registered users), but it should be possible for other researchers to have some path to reproducing or verifying the results.
        \end{enumerate}
    \end{itemize}

\item {\bf Open access to data and code}
    \item[] Question: Does the paper provide open access to the data and code, with sufficient instructions to faithfully reproduce the main experimental results, as described in supplemental material?
    \item[] Answer: \answerYes{}.
\item[] Justification: All datasets used in the experiments are public benchmarks, including CIFAR-10, CIFAR-10.1, CIFAR-10-C, ImageNet-C, MNLI, HANS, and OGBN-Arxiv. The supplementary material includes an anonymised Python implementation and README for the OGBN-Arxiv graph-data setting as a compact reproducibility example of the proposed graph/scoring pipeline; a complete public repository with full reproduction scripts will be released after de-anonymisation and acceptance.
    \item[] Guidelines:
    \begin{itemize}
        \item The answer \answerNA{} means that paper does not include experiments requiring code.
        \item Please see the NeurIPS code and data submission guidelines (\url{https://neurips.cc/public/guides/CodeSubmissionPolicy}) for more details.
        \item While we encourage the release of code and data, we understand that this might not be possible, so \answerNo{} is an acceptable answer. Papers cannot be rejected simply for not including code, unless this is central to the contribution (e.g., for a new open-source benchmark).
        \item The instructions should contain the exact command and environment needed to run to reproduce the results. See the NeurIPS code and data submission guidelines (\url{https://neurips.cc/public/guides/CodeSubmissionPolicy}) for more details.
        \item The authors should provide instructions on data access and preparation, including how to access the raw data, preprocessed data, intermediate data, and generated data, etc.
        \item The authors should provide scripts to reproduce all experimental results for the new proposed method and baselines. If only a subset of experiments are reproducible, they should state which ones are omitted from the script and why.
        \item At submission time, to preserve anonymity, the authors should release anonymized versions (if applicable).
        \item Providing as much information as possible in supplemental material (appended to the paper) is recommended, but including URLs to data and code is permitted.
    \end{itemize}

\item {\bf Experimental setting/details}
    \item[] Question: Does the paper specify all the training and test details (e.g., data splits, hyperparameters, how they were chosen, type of optimizer) necessary to understand the results?
    \item[] Answer: \answerYes{} 
    \item[] Justification: Section~4.1 describes the experimental settings, checkpoint counts, data sources, target shifts, graph construction, feature sampling, $k$ value, and self-supervised training hyperparameters. Additional results, severity sweeps, and per-setting analyses are provided in Appendix~A.
    \item[] Guidelines:
    \begin{itemize}
        \item The answer \answerNA{} means that the paper does not include experiments.
        \item The experimental setting should be presented in the core of the paper to a level of detail that is necessary to appreciate the results and make sense of them.
        \item The full details can be provided either with the code, in appendix, or as supplemental material.
    \end{itemize}

\item {\bf Experiment statistical significance}
    \item[] Question: Does the paper report error bars suitably and correctly defined or other appropriate information about the statistical significance of the experiments?
    \item[] Answer: \answerYes{} 
    \item[] Justification: The main and appendix tables report Spearman correlations, $p$-values, sample sizes, selected accuracy, oracle accuracy, and gap to oracle. The paper also states that statistical significance should be interpreted in the context of the finite checkpoint/model family sizes.
    \item[] Guidelines:
    \begin{itemize}
        \item The answer \answerNA{} means that the paper does not include experiments.
        \item The authors should answer \answerYes{} if the results are accompanied by error bars, confidence intervals, or statistical significance tests, at least for the experiments that support the main claims of the paper.
        \item The factors of variability that the error bars are capturing should be clearly stated (for example, train/test split, initialization, random drawing of some parameter, or overall run with given experimental conditions).
        \item The method for calculating the error bars should be explained (closed form formula, call to a library function, bootstrap, etc.)
        \item The assumptions made should be given (e.g., Normally distributed errors).
        \item It should be clear whether the error bar is the standard deviation or the standard error of the mean.
        \item It is OK to report 1-sigma error bars, but one should state it. The authors should preferably report a 2-sigma error bar than state that they have a 96\% CI, if the hypothesis of Normality of errors is not verified.
        \item For asymmetric distributions, the authors should be careful not to show in tables or figures symmetric error bars that would yield results that are out of range (e.g., negative error rates).
        \item If error bars are reported in tables or plots, the authors should explain in the text how they were calculated and reference the corresponding figures or tables in the text.
    \end{itemize}

\item {\bf Experiments compute resources}
    \item[] Question: For each experiment, does the paper provide sufficient information on the computer resources (type of compute workers, memory, time of execution) needed to reproduce the experiments?
    \item[] Answer: \answerYes{} 
    \item[] Justification: The experiments were conducted on NVIDIA A100 GPUs with 40GB memory. Appendix~C reports the main computational stages, including embedding extraction, mutual $k$-NN graph construction, eigenspectrum estimation, curvature computation, and topological feature extraction; the supplementary README also provides the software dependencies for the released OGBN-Arxiv implementation.
    \item[] Guidelines:
    \begin{itemize}
        \item The answer \answerNA{} means that the paper does not include experiments.
        \item The paper should indicate the type of compute workers CPU or GPU, internal cluster, or cloud provider, including relevant memory and storage.
        \item The paper should provide the amount of compute required for each of the individual experimental runs as well as estimate the total compute. 
        \item The paper should disclose whether the full research project required more compute than the experiments reported in the paper (e.g., preliminary or failed experiments that didn't make it into the paper). 
    \end{itemize}
    
\item {\bf Code of ethics}
    \item[] Question: Does the research conducted in the paper conform, in every respect, with the NeurIPS Code of Ethics \url{https://neurips.cc/public/EthicsGuidelines}?
    \item[] Answer: \answerYes{} 
    \item[] Justification: The work proposes a diagnostic method for source-only checkpoint selection using standard public benchmarks. It does not involve human-subject studies, private data collection, unsafe data release, or high-risk deployed systems, and the submission is anonymized.
    \item[] Guidelines:
    \begin{itemize}
        \item The answer \answerNA{} means that the authors have not reviewed the NeurIPS Code of Ethics.
        \item If the authors answer \answerNo, they should explain the special circumstances that require a deviation from the Code of Ethics.
        \item The authors should make sure to preserve anonymity (e.g., if there is a special consideration due to laws or regulations in their jurisdiction).
    \end{itemize}

\item {\bf Broader impacts}
    \item[] Question: Does the paper discuss both potential positive societal impacts and negative societal impacts of the work performed?
    \item[] Answer: \answerYes{} 
    \item[] Justification: The work may have positive impact by improving checkpoint selection before deployment under distribution shift, especially when target labels or target samples are unavailable. A potential negative impact is overreliance on a source-only diagnostic as a guarantee of target-domain robustness; the paper mitigates this by explicitly framing \textsc{TopoGeoScore} as a diagnostic selector and by stating that source-only geometry cannot capture target-specific shifts invisible from the source representation.
    \item[] Guidelines:
    \begin{itemize}
        \item The answer \answerNA{} means that there is no societal impact of the work performed.
        \item If the authors answer \answerNA{} or \answerNo, they should explain why their work has no societal impact or why the paper does not address societal impact.
        \item Examples of negative societal impacts include potential malicious or unintended uses (e.g., disinformation, generating fake profiles, surveillance), fairness considerations (e.g., deployment of technologies that could make decisions that unfairly impact specific groups), privacy considerations, and security considerations.
        \item The conference expects that many papers will be foundational research and not tied to particular applications, let alone deployments. However, if there is a direct path to any negative applications, the authors should point it out. For example, it is legitimate to point out that an improvement in the quality of generative models could be used to generate Deepfakes for disinformation. On the other hand, it is not needed to point out that a generic algorithm for optimizing neural networks could enable people to train models that generate Deepfakes faster.
        \item The authors should consider possible harms that could arise when the technology is being used as intended and functioning correctly, harms that could arise when the technology is being used as intended but gives incorrect results, and harms following from (intentional or unintentional) misuse of the technology.
        \item If there are negative societal impacts, the authors could also discuss possible mitigation strategies (e.g., gated release of models, providing defenses in addition to attacks, mechanisms for monitoring misuse, mechanisms to monitor how a system learns from feedback over time, improving the efficiency and accessibility of ML).
    \end{itemize}
    
\item {\bf Safeguards}
    \item[] Question: Does the paper describe safeguards that have been put in place for responsible release of data or models that have a high risk for misuse (e.g., pre-trained language models, image generators, or scraped datasets)?
    \item[] Answer: \answerNA{} 
    \item[] Justification: The paper does not release high-risk generative models, scraped datasets, or dual-use pretrained models. The released material is a reference implementation for a post-hoc scoring and checkpoint-selection method evaluated on standard public benchmarks.
    \item[] Guidelines:
    \begin{itemize}
        \item The answer \answerNA{} means that the paper poses no such risks.
        \item Released models that have a high risk for misuse or dual-use should be released with necessary safeguards to allow for controlled use of the model, for example by requiring that users adhere to usage guidelines or restrictions to access the model or implementing safety filters. 
        \item Datasets that have been scraped from the Internet could pose safety risks. The authors should describe how they avoided releasing unsafe images.
        \item We recognize that providing effective safeguards is challenging, and many papers do not require this, but we encourage authors to take this into account and make a best faith effort.
    \end{itemize}

\item {\bf Licenses for existing assets}
    \item[] Question: Are the creators or original owners of assets (e.g., code, data, models), used in the paper, properly credited and are the license and terms of use explicitly mentioned and properly respected?
    \item[] Answer: \answerYes{} 
    \item[] Justification: The paper cites the original sources for the datasets and model families used, including CIFAR-10.1, CIFAR-10-C, ImageNet-C, MNLI, HANS, ConvNeXt, and OGBN-Arxiv. The supplementary README lists the public assets used and states that raw datasets are not redistributed and should be obtained from their official sources under the corresponding terms of use.
    \item[] Guidelines:
    \begin{itemize}
        \item The answer \answerNA{} means that the paper does not use existing assets.
        \item The authors should cite the original paper that produced the code package or dataset.
        \item The authors should state which version of the asset is used and, if possible, include a URL.
        \item The name of the license (e.g., CC-BY 4.0) should be included for each asset.
        \item For scraped data from a particular source (e.g., website), the copyright and terms of service of that source should be provided.
        \item If assets are released, the license, copyright information, and terms of use in the package should be provided. For popular datasets, \url{paperswithcode.com/datasets} has curated licenses for some datasets. Their licensing guide can help determine the license of a dataset.
        \item For existing datasets that are re-packaged, both the original license and the license of the derived asset (if it has changed) should be provided.
        \item If this information is not available online, the authors are encouraged to reach out to the asset's creators.
    \end{itemize}

\item {\bf New assets}
    \item[] Question: Are new assets introduced in the paper well documented and is the documentation provided alongside the assets?
    \item[] Answer: \answerYes{} 
    \item[] Justification: The new released asset is the anonymized supplementary implementation for the graph/scoring pipeline, together with a README describing usage and reproduction steps. The paper does not introduce a new dataset, benchmark, or pretrained model.
    \item[] Guidelines:
    \begin{itemize}
        \item The answer \answerNA{} means that the paper does not release new assets.
        \item Researchers should communicate the details of the dataset\slash code\slash model as part of their submissions via structured templates. This includes details about training, license, limitations, etc. 
        \item The paper should discuss whether and how consent was obtained from people whose asset is used.
        \item At submission time, remember to anonymize your assets (if applicable). You can either create an anonymized URL or include an anonymized zip file.
    \end{itemize}

\item {\bf Crowdsourcing and research with human subjects}
    \item[] Question: For crowdsourcing experiments and research with human subjects, does the paper include the full text of instructions given to participants and screenshots, if applicable, as well as details about compensation (if any)? 
    \item[] Answer: \answerNA{} 
    \item[] Justification: The paper does not involve crowdsourcing experiments or research with human subjects.
    \item[] Guidelines:
    \begin{itemize}
        \item The answer \answerNA{} means that the paper does not involve crowdsourcing nor research with human subjects.
        \item Including this information in the supplemental material is fine, but if the main contribution of the paper involves human subjects, then as much detail as possible should be included in the main paper. 
        \item According to the NeurIPS Code of Ethics, workers involved in data collection, curation, or other labor should be paid at least the minimum wage in the country of the data collector. 
    \end{itemize}

\item {\bf Institutional review board (IRB) approvals or equivalent for research with human subjects}
    \item[] Question: Does the paper describe potential risks incurred by study participants, whether such risks were disclosed to the subjects, and whether Institutional Review Board (IRB) approvals (or an equivalent approval/review based on the requirements of your country or institution) were obtained?
    \item[] Answer: \answerNA{} 
    \item[] Justification: The paper does not involve human-subject research, participant recruitment, or collection of new human-subject data.
    \item[] Guidelines:
    \begin{itemize}
        \item The answer \answerNA{} means that the paper does not involve crowdsourcing nor research with human subjects.
        \item Depending on the country in which research is conducted, IRB approval (or equivalent) may be required for any human subjects research. If you obtained IRB approval, you should clearly state this in the paper. 
        \item We recognize that the procedures for this may vary significantly between institutions and locations, and we expect authors to adhere to the NeurIPS Code of Ethics and the guidelines for their institution. 
        \item For initial submissions, do not include any information that would break anonymity (if applicable), such as the institution conducting the review.
    \end{itemize}

\item {\bf Declaration of LLM usage}
    \item[] Question: Does the paper describe the usage of LLMs if it is an important, original, or non-standard component of the core methods in this research? Note that if the LLM is used only for writing, editing, or formatting purposes and does \emph{not} impact the core methodology, scientific rigor, or originality of the research, declaration is not required.
    \item[] Answer: \answerNA{} 
    \item[] Justification: LLMs are not used as an important, original, or non-standard component of the proposed method, experiments, or scientific contribution. Any use for writing, editing, or formatting does not affect the core methodology, scientific claims, or results.
    \item[] Guidelines:
    \begin{itemize}
        \item The answer \answerNA{} means that the core method development in this research does not involve LLMs as any important, original, or non-standard components.
        \item Please refer to our LLM policy in the NeurIPS handbook for what should or should not be described.
    \end{itemize}

\end{enumerate}

\end{document}